\DeclareMathOperator*{\argmin}{argmin}
\DeclareMathOperator*{\argmax}{argmax}
\newcommand{\RR}{\mathbb{R}}
\newcommand{\WW}{\mathcal{W}}
\newcommand{\one}{\mathit{1}}
\newcommand{\sign}{\mathop{\mathrm{sign}}}
\newcommand{\vol}{\textnormal{vol}}
\newcommand{\supp}{\textnormal{supp}}
\newcommand{\ex}{\textnormal{ex}}
\newtheorem{theorem}{Theorem}
\newtheorem{lemma}[theorem]{Lemma}
\newtheorem{claim}{Claim}
\newtheorem{assumption}{Assumption}
\begin{document}

\title{Local Hyper-Flow Diffusion}

\author{
        \name Kimon~Fountoulakis\textsuperscript{\hspace{0.1cm}} \email kfountou@uwaterloo.ca \\ 
        \addr School of Computer Science, \\ University of Waterloo, \\ Waterloo, ON, Canada.
        \AND
        \name Pan~Li\textsuperscript{\hspace{0.1cm}} \email panli@purdue.edu \\ 
        \addr Department of Computer Science, \\ Purdue University, \\ West Lafayette, IN, United States.
        \AND
	\name Shenghao~Yang\textsuperscript{\hspace{0.1cm}} \email shenghao.yang@uwaterloo.ca \\ 
        \addr School of Computer Science, \\ University of Waterloo, \\ Waterloo, ON, Canada.
}

\author{
	Kimon~Fountoulakis%
        \thanks{School of Computer Science, University of Waterloo, Waterloo, ON, Canada. E-mail: kfountou@uwaterloo.ca.
        }
        \and
        Pan~Li%
        \thanks{Department of Computer Science, Purdue University, West Lafayette, IN, USA. E-mail: panli@purdue.edu.
        } 
        \and
	Shenghao~Yang%
        \thanks{School of Computer Science, University of Waterloo, Waterloo, ON, Canada. E-mail: s286yang@uwaterloo.ca.
        }
}

\maketitle

\begin{abstract}
\fontsize{10pt}{12pt}\selectfont
Recently, hypergraphs have attracted a lot of attention due to their ability to capture complex relations among entities.\ The insurgence of hypergraphs has resulted in data of increasing size and complexity that exhibit interesting small-scale and local structure, e.g., small-scale communities and localized node-ranking around a given set of seed nodes.\ Popular and principled ways to capture the local structure are the local hypergraph clustering problem and related seed set expansion problem.\ In this work, we propose the first local diffusion method that achieves edge-size-independent Cheeger-type guarantee for the problem of local hypergraph clustering while applying to a rich class of higher-order relations that covers many previously studied special cases.\ Our method is based on a primal-dual optimization formulation where the primal problem has a natural network flow interpretation, and the dual problem has a cut-based interpretation using the $\ell_2$-norm penalty on associated cut-costs.\ We demonstrate the new technique is significantly better than state-of-the-art methods on both synthetic and real-world data.\
\end{abstract}
\section{Introduction}\label{intro}

Hypergraphs~\cite{berge1984hypergraphs} generalize graphs by allowing a hyperedge to consist of multiple nodes that capture higher-order relationships in complex systems and datasets~\cite{milo2002network}.\ Hypergraphs have been used for music recommendation on Last.fm data~\cite{BTCWWZH10}, news recommendation~\cite{LL13}, sets of product reviews on Amazon~\cite{JJJ19}, and sets of co-purchased products at Walmart~\cite{AVB20}.\ Beyond the internet, hypergraphs are used for analyzing higher-order structure in neuronal, air-traffic and food networks~\cite{BGL16,LM17}.\

In order to explore and understand higher-order relationships in hypergraphs, recent work has made use of cut-cost functions that are defined by associating each hyperedge with a specific set function. These functions assign specific penalties of separating the nodes within individual hyperedges. They generalize the notion of hypergraph cuts and are crucial for determining small-scale community structure~\cite{LM17,LVHLG20}. The most popular cut-cost functions with increasing capability to model complex multiway relationships are the unit cut-cost~\cite{Lawler73,IWW93,Hadley95}, cardinality-based cut-cost~\cite{VBK20,VBK20b} and general submodular cut-cost~\cite{LM18,yoshida2019cheeger}.\ An illustration of a hyperedge and the associated cut-cost function is given in Figure~\ref{fig:hyper-graph}.\ In the simplest setting, all cut-cost functions take value either 0 or 1 (e.g., the case when $\gamma_1=\gamma_2=1$ in Figure~\ref{fig:hyperedge-b}), we obtain a unit cut-cost hypergraph.\ In a slightly more general setting, the cut-costs are determined solely by the number of nodes in either side of the hyperedge cut (e.g., the case when $\gamma_1=1/2$ and $\gamma_2=1$ in Figure~\ref{fig:hyperedge-b}), we obtain a cardinality-based hypergraph.\ We refer to hypergraphs associated with arbitrary submodular cut-cost functions (e.g., the case when $\gamma_1=1/2$ and $0 \le \gamma_2 \le 1$ in Figure~\ref{fig:hyperedge-b}) as general submodular hypergraphs.

Hypergraphs that arise from data science applications consist of interesting small-scale local structure such as local communities~\cite{LVHLG20,takai2020hypergraph}.\ Exploiting this structure is central to the above mentioned applications on hypergraphs and related applications in machine learning and applied mathematics~\cite{BGH2021}.\ We consider local hypergraph clustering as the task of finding a community-like cluster around a given set of seed nodes, where nodes in the cluster are densely connected to each other while relatively isolated to the rest of the graph.\ One of the most powerful primitives for the local hypergraph clustering task is the graph diffusion.\ Diffusion on a graph is the process of spreading a given initial mass from some seed node(s) to neighbor nodes using the edges of the graph. Graph diffusions have been successfully employed in the industry, for example, both Pinterest and Twitter use diffusion methods for their recommendation systems~\cite{EJLLSSUL2018,ELSSU2020,GGLSWZ13}. Google uses diffusion methods to perform clustering query refinements~\cite{SMWH10}. Let us not forget PageRank~\cite{BP98,PBMW99}, Google's model for their search engine.

\begin{figure}[t]
        \centering
	\begin{subfigure}{0.15\textwidth}\end{subfigure}
	\begin{subfigure}{0.3\textwidth}
		\centering
		\includegraphics[height=3.5cm]{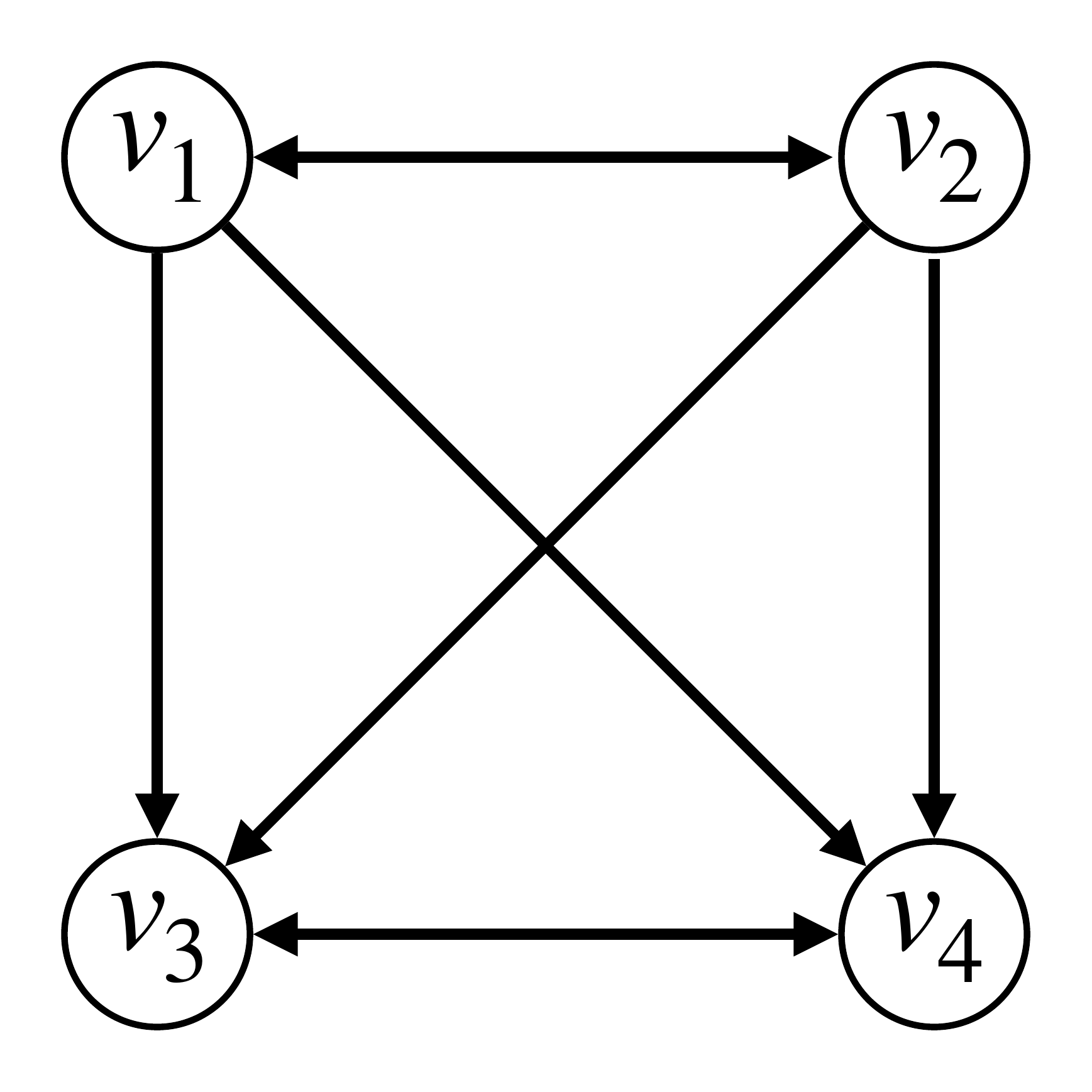}
		\caption{Network motif}\label{fig:hyperedge-a}
	\end{subfigure}%
	\begin{subfigure}{0.4\textwidth}
		\centering
		\includegraphics[height=3.5cm]{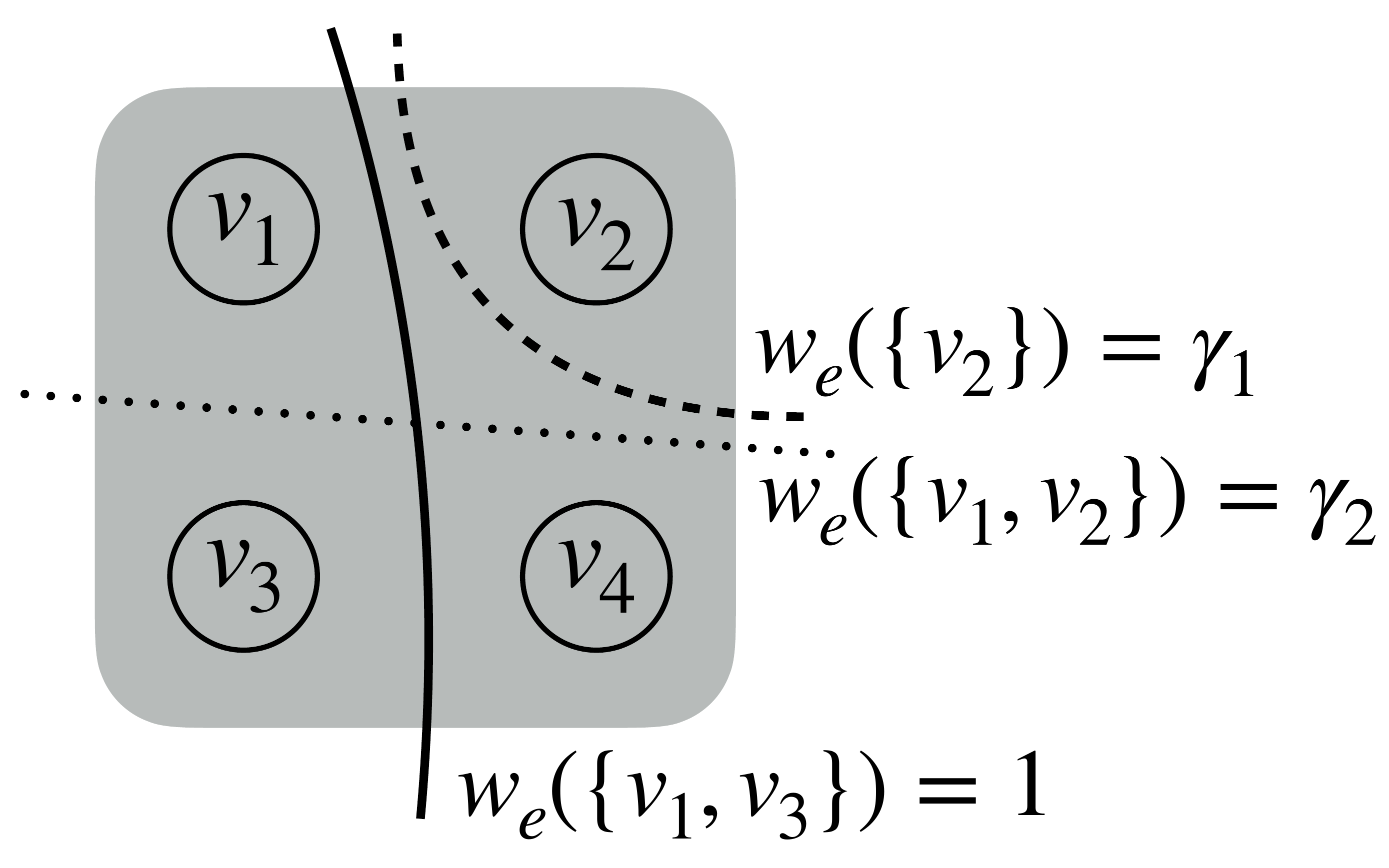}
		\caption{Hyperedge and cut-cost $w_e$}\label{fig:hyperedge-b}
	\end{subfigure}
        \caption{A food network can be mapped into a hypergraph by taking each network pattern in (a) as a hyperedge~\cite{LM17}.\ This network pattern captures carbon flow from two preys ($v_1, v_2$) to two predators ($v_3, v_4$).\ (b) is a hyperedge associated with cut-cost $w_e$ that models their relations: $w_e$ is a set function defined over the node set $e$ s.t. $w_e(\{v_i\})=\gamma_1$ for $i=1,2,3,4$, $w_e(\{v_1,v_2\})=\gamma_2$, $w_e(\{v_1,v_3\})=w_e(\{v_1,v_4\})=1$ and $w_{e}(S)=w_{e}(e\backslash S)$ for $S\subseteq e$. $w_e$ becomes the unit cut-cost when $\gamma_1=\gamma_2 = 1$; $w_e$ is cardinality-based if $\gamma_1=1/2$ and $\gamma_2=1$; more generally, $w_e$ is submodular if $\gamma_1=1/2$ and $0 \le \gamma_2 \le 1$. The specific choices depend on the application.}
        \label{fig:hyper-graph}
\end{figure}

Empirical and theoretical performance of local diffusion methods is often measured on the problem of local hypergraph clustering~\cite{WFHM2017,SWF20,LVHLG20}.\ 
Existing local diffusion methods only directly apply to hypergraphs with the unit cut-cost~\cite{IG20,takai2020hypergraph}. For the slightly more general cardinality-based cut-cost, they rely on graph reduction techniques which result in a rather pessimistic edge-size-dependent approximation error~\cite{yin2017local,IG20,LVHLG20,VBK20}.\ 
Moreover, none of the existing methods is capable of processing general submodular cut-costs.\ 
In this work, we are interested in designing a diffusion framework that (i) achieves stronger theoretical guarantees for the problem of local hypergraph clustering, (ii) is flexible enough to work with general submodular hypergraphs, and (iii) permits computationally efficient algorithms.\ We propose the first local diffusion method that simultaneously accomplishes these goals. 

In what follows we describe our main contributions and previous work.\ In Section~\ref{sec:notation} we provide preliminaries and notations.\ In Section~\ref{sec:hfd} we introduce our diffusion model from a combinatorial flow perspective.\ In Section~\ref{sec:clustering} we discuss the local hypergraph clustering problem and Cheeger-type quadratic approximation error.\ In Section~\ref{sec:experiments} we perform experiments using both synthetic and real datasets.\

\subsection{Our main contributions}

In this work we propose a generic local diffusion model that applies to hypergraphs characterized by a rich class of cut-cost functions that covers many previously studied special cases, e.g., unit, cardinality-based and submodular cut-costs.\ We provide the first edge-size-independent Cheeger-type approximation error for the problem of local hypergraph clustering using any of these cut-costs.\ In particular, assume that there exists a cluster $C$ with conductance $\Phi(C)$, and assume that we are given a set of seed nodes that reasonably overlaps with $C$, then the proposed diffusion model can be used to find a cluster $\hat{C}$ with conductance at most $O(\sqrt{\Phi(C)})$ (in the appendix we show that an $\ell_p$-norm version of the proposed model can achieve $O(\Phi(C))$ asymptotically).\ Our hypergraph diffusion model is formulated as a convex optimization problem.\ It has a natural combinatorial flow interpretation that generalizes the notion of network flows over hyperedges.\ We show that the optimization problem can be solved efficiently by an alternating minimization method.\ In addition, we prove that the number of nonzero nodes in the optimal solution is independent of the size of the hypergraph, and it only depends on the size of the initial mass.\ This key property ensures that our algorithm scales well in practice for large datasets.\ We evaluate our method using both synthetic and real-world data. We show that our method improves accuracy significantly for hypergraphs with unit, cardinarlity-based and general submodular cut-costs for local clustering.\

\subsection{Previous work}

Recently, clustering methods on hypergraphs received renewed interest.\ Different methods require different assumptions about the hyperedge cut-cost, which can be roughly categorized into unit cut-cost, cardinality-based (and submodular) cut-cost and general submodular cut-cost.\ Moreover, existing methods can be either global, where the output is not localized around a given set of seed nodes, or local, where the output is a tiny cluster around a set of seed nodes.\ Local algorithms are the only scalable ones for large hypergraphs, which is our main focus.\ Many works propose global methods and thus they are not scalable to large hypergraphs \cite{zhou2006learning,ARV2009,hein2013total,Louis2015,BGL16,zhang2017re,CLTZ2018,LM17,LM18,CLM2019,yoshida2019cheeger,li2020quadratic,CR2019,HAPP2020,takai2020hypergraph}.\ 
Local diffusion-based methods are more relevant to our work~\cite{IG20,LVHLG20,VBK20}.\ In particular, iterative hypergraph min-cut methods for the local hypergraph clustering problem can be adopted~\cite{VBK20}.\ However, these methods require in theory and in practice a large seed set, i.e., they are not expansive and thus cannot work with one seed node.\ The expansive combinatorial diffusion~\cite{WFHM2017} is generalized for hypergraphs~\cite{IG20}, which can detect a target cluster using only one seed node.\ However, combinatorial methods have a large bias towards low conductance clusters as opposed to finding the target cluster~\cite{FLDM2020}.\ The most relevant paper to our work is~\cite{LVHLG20}.\ However, the proposed methods in \cite{LVHLG20} depend on a reduction from hypergraphs to directed graphs.\ This results in an approximation error for clustering that is proportional to the size of hyperedges and induces performance degeneration when the hyperedges are large.\ In fact, none of the above approaches (including global and local ones) has an edge-size-independent approximation error bound for even simple cardinality-based hypergraphs.\ Moreover, existing local approaches do not work for general submodular hypergraphs.\

\section{Preliminaries and Notations}\label{sec:notation}

{\bf Submodular function.} Given a set $S$, we denote $2^S$ the power set of $S$ and $|S|$ the cardinality of $S$. A submodular function $F : 2^S \rightarrow \RR$ is a set function such that $F(A) + F(B) \ge F(A\cup B) + F(A \cap B)$ for any $A,B \subseteq S$.

{\bf Submodular hypergaph.} A hypergraph $H = (V,E)$ is defined by a set of nodes $V$ and a set of hyperedges $E \subseteq 2^V$, i.e., each hyperedge $e \in E$ is a subset of $V$. A hypergraph is termed {\em submodular} if every $e \in E$ is associated with a submodular function $w_e : 2^e \rightarrow \RR_{\geq0}$~\cite{LM18}.\ The weight $w_e(S)$ indicates the cut-cost of splitting the hyperedge $e$ into two subsets, $S$ and $e\setminus S$. This general form allows us to describe the potentially complex higher-order relation among multiple nodes (Figure~\ref{fig:hyper-graph}).\ A proper hyperedge weight $w_e$ should satisfy that $w_e(\emptyset) = w_e(e) = 0$. To ease notation we extend the domain of $w_e$ to $2^V$ by setting $w_e(S) := w_e(S \cap e)$ for any $S \subseteq V$. We assume without loss of generality that $w_e$ is normalized by $\vartheta_e := \max_{S \subseteq e} w_e(S)$, so that $w_e(S) \in [0,1]$ for any $S \subseteq V$. For the sake of simplicity in presentation, we assume that $\vartheta_e = 1$ for all $e$.\footnote{This is without loss of generality. In the appendix we show that our method works with arbitrary $\vartheta_e > 0$.} A submodular hypergraph is written as $H = (V,E,\WW)$ where $\WW := \{w_e, \vartheta_e\}_{e \in E}$. Note that when $w_e(S)=1$ for any $S\in2^e\backslash \{\emptyset,e\}$, the definition reduces to \emph{unit cut-cost hypergraphs}. When $w_e(S)$ only depends on $|S|$, it reduces to \emph{cardinality-based cut-cost hypergraphs}.

{\bf Vector/Function on $V$ or $E$.} For a set of nodes $S \subseteq V$, we denote $\one_S$ the indicator vector of $S$, i.e., $[\one_S]_v =1$ if $v \in S$ and 0 otherwise. For a vector $x \in \RR^{|V|}$, we write $x(S) := \sum_{v \in S}x_v$, where $x_v$ in the entry in $x$ that corresponds to $v \in V$. We define the support of $x$ as $\supp(x) := \{v \in V | x_v \neq 0\}$. The support of a vector in $\RR^{|E|}$ is defined analogously. We refer to a function over nodes $x:V\rightarrow\RR$ and its explicit representation as a $|V|$-dimensional vector interchangeably.

{\bf Volume, cut, conductance.} Given a submodular hypergraph $H = (V,E,\WW)$, the {\em degree} of a node $v$ is defined as $d_v := |\{e \in E: v \in e\}|$. We reserve $d$ for the vector of node degrees and $D = \mbox{diag}(d)$. We refer to $\vol(S) := d(S)$ as the {\em volume} of $S \subseteq V$. A {\em cut} is treated as a proper subset $C \subset V$, or a partition $(C,\bar{C})$ where $\bar{C} := V \setminus C$. The {\em cut-set} of $C$ is defined as $\partial C := \{e \in E | e \cap C \neq \emptyset, e \cap \bar{C} \neq \emptyset\}$; the {\em cut-size} of $C$ is defined as $\vol(\partial C) := \sum_{e \in \partial C} \vartheta_e w_e(C) = \sum_{e \in E}\vartheta_e w_e(C)$. The {\em conductance} of a cut $C$ in $H$ is $\Phi(C) := \frac{\vol(\partial C)}{\min\{\vol(C),\vol(V\setminus C)\}}$.

{\bf Flow.} A flow {\em routing} over a hyperedge $e$ is a function $r_e : e \rightarrow \RR$ where $r_e(v)$ specifies the amount of mass that flows from $\{v\}$ to $e \setminus \{v\}$ over $e$. To ease notation we extend the domain of $r_e$ to $V$ by identifying $r_e(v) = 0$ for $v \not\in e$, so $r_e$ is treated as a function $r_e: V \rightarrow \RR$ or equivalently a $|V|$-dimensional vector.\ The net (out)flow at a node $v$ is given by $\sum_{e \in E}r_e(v)$.\ Given a routing function $r_e$ and a set of nodes $S \subseteq V$, a {\em directional routing} on $e$ with direction $S \rightarrow e \setminus S$ is represented by $r_e(S)$, which specifies the net amount of mass that flows from $S$ to $e \setminus S$.\  A routing $r_e$ is called {\em proper} if it obeys flow conservation, i.e., $r_e^T\one_e = 0$.\ Our flow definition generalizes the notion of network flows to hypergraphs. We provide concrete illustrations in Figure~\ref{fig:flowexamples}.

\begin{figure}[ht!]
	\centering
	\begin{subfigure}{0.245\textwidth}
		\centering
		\includegraphics[height=3cm]{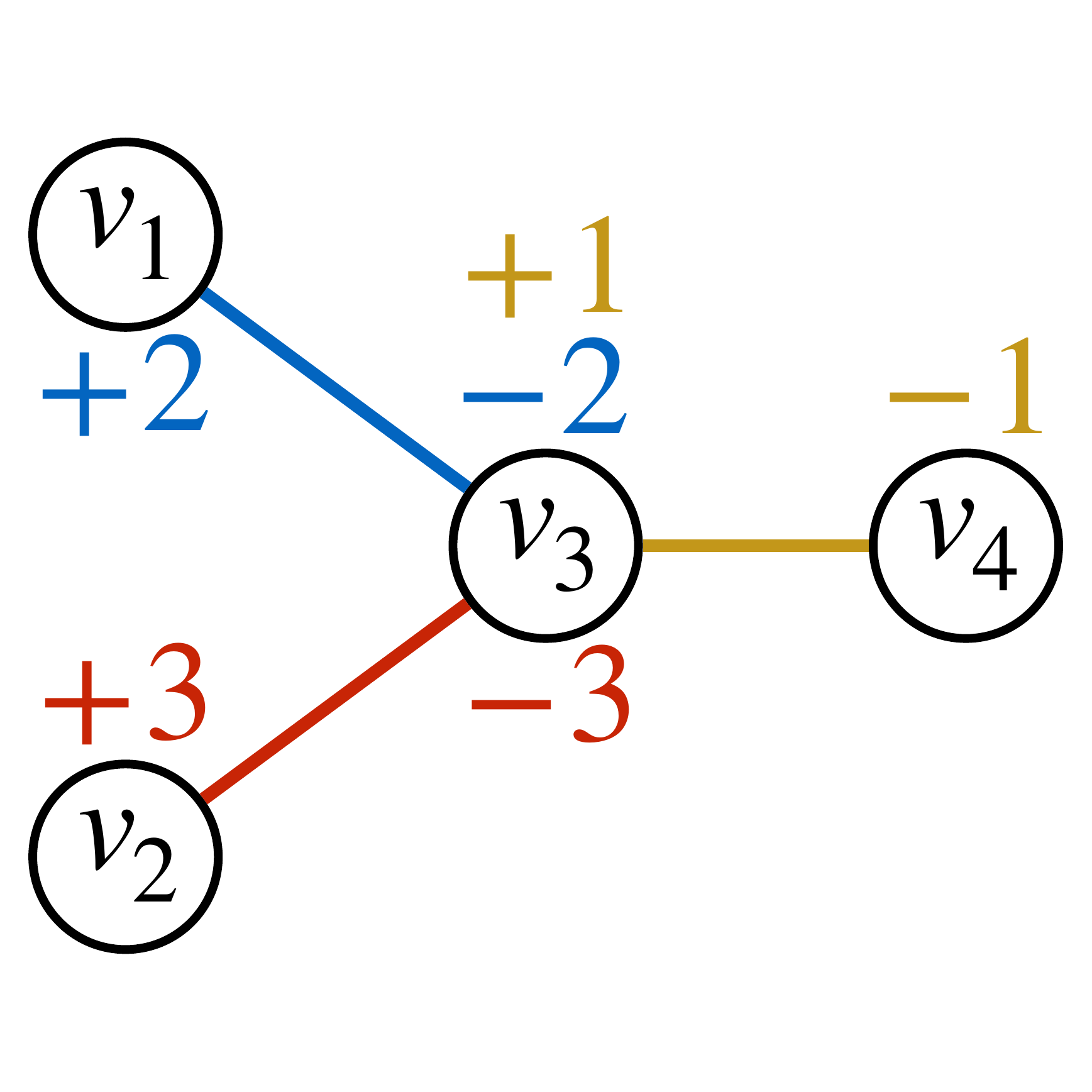}
		\caption{Flows on graph}\label{fig:flowexamples-a}
	\end{subfigure}%
	\hspace{.5mm}
	\begin{subfigure}{0.245\textwidth}
		\centering
		\includegraphics[height=3cm]{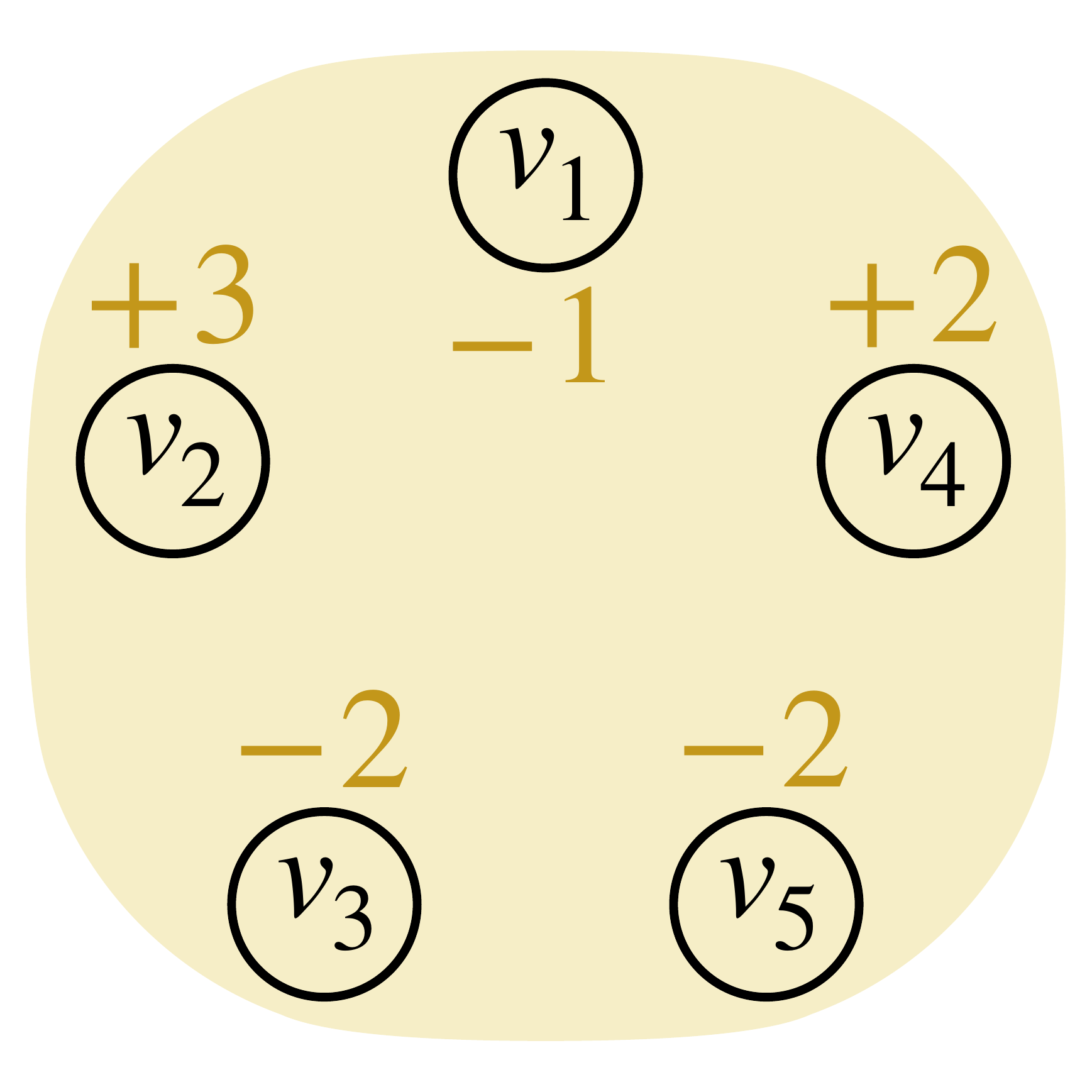}
		\caption{Hyperedge routing}\label{fig:flowexamples-b}
	\end{subfigure}
	\begin{subfigure}{0.49\textwidth}
		\centering
		\includegraphics[height=3cm]{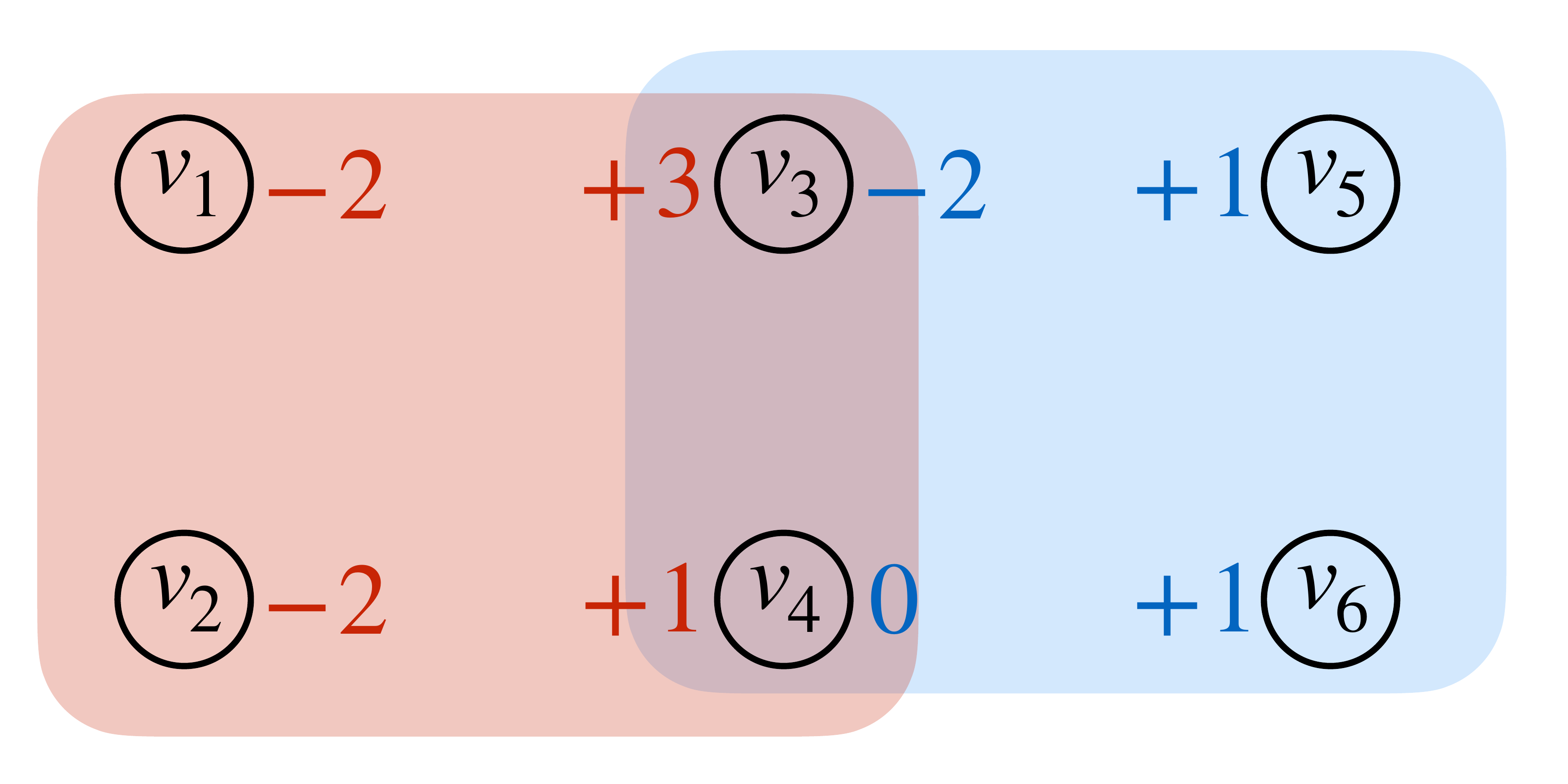}
		\caption{Flows on hypergraph}\label{fig:flowexamples-c}
	\end{subfigure}
	\caption{Illustration of proper flow routings. The numbers next to each node correspond to entries in the flow routing $r_e$ over a (hyper)edge $e$. We assign the same color to a (hyper)edge and its associated flow values. Our flow definition is a natural generalization of graph edge flow where $r_e(v) = \pm f$ if and only if $v \in e$, i.e., $v$ is incident to $e$, where $f$ and the sign determine the amplitude and direction of the flow over $e$. In Figure~\ref{fig:flowexamples-a}, the net (out)flow at node $v_3$ is given by $\sum_{e \in E}r_e(v_3) = 1-2-3=-4$. In Figure~\ref{fig:flowexamples-b}, the directional flow from $\{v_1\}$ to $\{v_2,v_3,v_4, v_5\}$ over this hyperedge equals $-1$; similarly, the directional flow from $\{v_1,v_2,v_4\}$ to $\{v_3,v_5\}$ equals $3+2-1=4$, etc. In Figure~\ref{fig:flowexamples-c}, the net (out)flow at node $v_3$ is given by $\sum_{e \in E}r_e(v_3) = 3 - 2 = 1$.} 
	\label{fig:flowexamples}
\end{figure}

\section{Diffusion as an Optimization Problem}\label{sec:hfd}

In this section we provide details of the proposed local diffusion method.\ We consider diffusion process as the task of spreading mass from a small set of seed nodes to a larger set of nodes.\ More precisely, given a hypergraph $H = (V,E,\WW)$,
we assign each node a sink capacity specified by a sink function $T$, i.e., node $v$ is allowed to hold at most $T(v)$ amount of mass. In this work we focus on the setting where $T(v) = d_v$, so that a high-degree node that is part of many hyperedges can hold more mass than a low-degree node that is part of few hyperedges.\ Moreover, we assign each node some initial mass specified by a source function $\Delta$, i.e., node $v$ holds $\Delta(v)$ amount of mass at the start of the diffusion.\ In order to encourage the spread of mass in the hypergraph, the initial mass on the seed nodes is larger than their capacity.\ This forces the seed nodes to diffuse mass to neighbor nodes to remove their excess mass.\ In Section~\ref{sec:clustering} we will discuss the choice of $\Delta$ to obtain good theoretical guarantees for the problem of local hypergraph clustering.\ Details about the local hypergraph clustering problem are provided in Section~\ref{sec:clustering}.\ 

Given a set of proper flow routings $r_e$ for $e \in E$, recall that $\sum_{e \in E} r_e(v)$ specifies the amount of net (out)flow at node $v$.\ Therefore, the vector $m = \Delta - \sum_{e \in E} r_e$ gives the amount of net mass at each node after routing.\ The {\em excess mass} at a node $v$ is $\ex(v) := \max\{m_v-d_v, 0\}$.\ In order to force the diffusion of initial mass we could simply require that $\ex(v) = 0$ for all $v \in V$, or equivalently, $\Delta - \sum_{e \in E} r_e \le d$.\ But to provide additional flexibility in the diffusion dynamics, we introduce a hyper-parameter $\sigma\ge0$ and we impose a softer constraint $\Delta - \sum_{e \in E} r_e \le d + \sigma D z$, where $z \in \RR^{|V|}$ is an optimization variable that controls how much excess mass is allowed on each node.\ In the context of numerical optimization, we show in Section~\ref{sec:alg} that $\sigma$ allows a reformulation which makes the optimization problem amenable to efficient alternating minimization schemes.

Note that so far we have not yet talked about how specific higher-order relations among nodes within a hyperedge would affect the flow routings over it. Apparently, simply requiring that the $r_e$'s obey flow conservation (i.e., $r_e^T\one_e = 0$) similar to the standard graph setting is not enough for hypergraphs. An important difference between hyperedge flows and graph edge flows is that additional constraints on $r_e$ are in need. To this end, we consider $r_e = \phi_e\rho_e$ for some $\phi_e \in \RR_+$ and $\rho_e \in B_e$, where
\[
B_e := \{\rho_e \in \mathbb{R}^{|V|} ~|~ \rho_e(S) \le w_e(S),\forall S \subseteq V,  \ \mbox{and} \ \rho_e(V) = w_e(V)\}
\]
is the {\em base polytope}~\cite{Bach2011a} for the submodular cut-cost $w_e$ associated with hyperedge $e$. It is straightforward to see that $r_e(v) = 0$ for every $v \not\in e$ and $r_e^T\one_e = 0$, so $r_e$ defines a proper flow routing over $e$.\ Moreover, for any $e \subseteq V$, recall that $r_e(S)$ represents the net amount of mass that moves from $S$ to $e\setminus S$ over hyperedge $e$. Therefore, the constraints $\rho_e(S) \le w_e(S)$ for $S \subseteq e$ mean that the directional flows $r_e(S)$ are upper bounded by a submodular function $\phi_ew_e(S)$. Intuitively, one may think of $\phi_e$ and $\rho_e$ as the {\em scale} and the {\em shape} of $r_e$, respectively.

The goal of our diffusion problem is to find low cost routings $r_e \in \phi_eB_e$ for $e \in E$ such that the capacity constraint $\Delta - \sum_{e\in E}r_e \le d + \sigma Dz$ is satisfied.\ We consider the (weighted) $\ell_2$-norm of $\phi$ and $z$ as the cost of diffusion.\ In the appendix we show that one readily extends the $\ell_2$-norm to $\ell_p$-norm for any $p\ge2$.\ Formally, we arrive at the following convex optimization formulation (input: the source function $\Delta$, the hypergraph $H=(V, E, \WW)$, and a hyper-parameter $\sigma$):
\begin{equation}
\label{eq:primalL2}
\min_{\phi\in\mathbb{R}^{|E|}_+,z\in\mathbb{R}^{|V|}_+}~\frac{1}{2}\sum_{e \in E} \phi_e^2 + \frac{\sigma}{2}\sum_{v \in V}d_vz_v^2, \ \ \mbox{s.t.}\ \ \Delta - \sum_{e \in E} r_e \le d + \sigma D z, \ r_e \in \phi_e B_e, \forall e \in E.
\end{equation}
We name problem~\eqref{eq:primalL2} Hyper-Flow Diffusion (HFD) for its combinatorial flow interpretation we discussed above. The dual problem of \eqref{eq:primalL2} is:
\begin{equation}
\label{eq:dualL2}
	\max_{x \in \mathbb{R}^{|V|}_+}~(\Delta - d)^Tx - \frac{1}{2}\sum_{e \in E} f_e(x)^2 - \frac{\sigma}{2}\sum_{v \in V}d_v x_v^2,
\end{equation}
where $f_e$ in \eqref{eq:dualL2} is the support function of the base polytope $B_e$ given by $f_e(x) := \max_{\rho_e \in B_e} \rho_e^T x$. $f_e$ is also known as the {\em Lov\'{a}sz extension} of the submodular function $w_e$.

We provide a combinatorial interpretation for \eqref{eq:dualL2} and leave algebraic derivations to the appendix.\ For the dual problem, one can view the solution $x$ as assigning heights to nodes, and the goal is to separate/cut the nodes with source mass from the rest of the hypergraph.\ Observe that the linear term in the dual objective function encourages raising $x$ higher on the seed nodes and setting it lower on others.\ The cost $f_e(x)$ captures the discrepancy in node heights over a hyperedge $e$ and encourages smooth height transition over adjacent nodes.\ The dual solution embeds nodes into the nonnegative real line, and this embedding is what we actually use for local clustering and node ranking.\

\section{Local Hypergraph Clustering}\label{sec:clustering}

In this section we discuss the performance of the primal-dual pair \eqref{eq:primalL2}-\eqref{eq:dualL2}, respectively, in the context of local hypergraph clustering.\ We consider a generic hypergraph $H = (V,E,\WW)$ with submodular hyperedge weights $\WW = \{w_e,\vartheta_e\}_{e\in E}$.\ Given a set of seed nodes $S \subset V$, the goal of local hypergraph clustering is to identify a target cluster $C\subset V$ that contains or overlaps well with $S$.\ This generalizes the definition of local clustering over graphs~\cite{FKSCM2017}.\ To the best of our knowledge, we are the first one to consider this problem for general submodular hypergraphs.\ We consider a subset of nodes having low conductance as a good cluster, i.e., these nodes are well-connected internally and well-separated from the rest of the hypergraph.\ Following prior work on local hypergraph clustering, we assume the existence of an unknown target cluster $C$ with conductance $\Phi(C)$.\ We prove that applying sweep-cut to an optimal solution $\hat{x}$ of \eqref{eq:dualL2} returns a cluster $\hat{C}$ whose conductance is at most quadratically worse than $\Phi(C)$.\ Note that this result resembles Cheeger-type approximation guarantees of spectral clustering in the graph setting~\cite{ACL06}, and it is the first result that is independent of hyperedge size for general hypergraphs.\ We keep the discussion at high level and defer details to the appendix, where we prove a more general, and stronger, i.e., constant approximation error result when the primal problem~\eqref{eq:primalL2} is penalized by the $\ell_p$-norm for any $p\ge2$. 

In order to start a diffusion process we need to provide the source mass $\Delta$.\ Similar to the $p$-norm flow diffusion in the graph setting~\cite{SWF20}, we let
\begin{equation}
\label{eq:source}
\Delta(v) = \left\{\begin{array}{ll} \delta d_v & \mbox{if}~ v \in S, \\ 0 & \mbox{otherwise,} \end{array}\right. 
\end{equation}
where $S$ is a set of seed nodes and $\delta\ge1$. Below, we make the assumptions that the seed set $S$ and the target cluster $C$ have some overlap, there is a constant factor of $\vol(C)$ amount of mass trapped in $C$ initially, and the hyper-parameter $\sigma$ is not too large.\ Note that Assumption~\ref{assum:delta} is without loss of generality: if the right value of $\delta$ is not known apriori, we can always employ binary search to find a good choice.\ Assumption~\ref{assum:sigma} is very weak as it allows $\sigma$ to reside in an interval containing 0.
\begin{assumption}
\label{assum:overlap}
$\vol(S \cap C) \ge \alpha \vol(C)$ and $\vol(S \cap C) \ge \beta \vol(S)$ for some $\alpha,\beta \in (0,1]$.
\end{assumption}
\begin{assumption}
\label{assum:delta}
The source mass $\Delta$ as specified in \eqref{eq:source} satisfies $\delta = 3/\alpha$, so $\Delta(C) \ge 3\vol(C)$.
\end{assumption}
\begin{assumption}
\label{assum:sigma}
$\sigma$ satisfies $0 \le \sigma \le \beta\Phi(C)/3$.
\end{assumption}

Let $\hat{x}$ be an optimal solution for the dual problem~\eqref{eq:dualL2}. For $h>0$ define the sweep sets $S_h := \{v \in V | \hat{x}_v \ge h\}$. We state the approximation property in Theorem~\ref{thm:conductance}.
\begin{theorem}
\label{thm:conductance}
Under Assumptions~\ref{assum:overlap},~\ref{assum:delta},~\ref{assum:sigma}, there exists $h>0$ such that
$
	\Phi(S_h) \le O(\sqrt{\Phi(C)}/\alpha\beta).
$
\end{theorem}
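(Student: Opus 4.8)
The plan is to establish a Cheeger-type inequality for hypergraphs by exploiting the primal-dual structure of \eqref{eq:primalL2}-\eqref{eq:dualL2} together with a sweep-cut rounding argument, mirroring the classical strategy for spectral and flow-based clustering but adapted to the submodular cut-cost via the Lov\'{a}sz extension $f_e$. The starting point is to extract quantitative information from the optimality of $\hat{x}$. Since $\hat{x}$ maximizes the dual objective, the gradient/KKT conditions relate the linear term $(\Delta - d)^T\hat{x}$, the regularization $\tfrac{\sigma}{2}\sum_v d_v\hat{x}_v^2$, and the quadratic cut-cost $\tfrac{1}{2}\sum_e f_e(\hat{x})^2$. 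The first step is to derive a lower bound on the amount of ``work'' done by the diffusion, using Assumptions~\ref{assum:overlap} and \ref{assum:delta}: because a constant factor of $\vol(C)$ worth of source mass is trapped in $C$ and exceeds the sink capacity there, the optimal $\hat{x}$ must be bounded away from zero on a nontrivial fraction of $C$ (in volume). Concretely, I expect to show that $\sum_{v} d_v \hat{x}_v$ is at least a constant multiple of $\vol(C)$, or an analogous statement controlling a weighted mass of $\hat{x}$ on $C$.

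The second step is to control the total cut-cost $\sum_e f_e(\hat{x})^2$ from above in terms of $\Phi(C)$. Here the small-$\sigma$ Assumption~\ref{assum:sigma} is crucial: it guarantees the regularization penalty does not overwhelm the diffusion, so that the optimal objective value can be compared against a feasible ``reference'' point constructed from the indicator of $C$. By plugging a suitably scaled $\hat{x} \approx c\,\one_C$ (or the actual optimal $\hat{x}$) into the dual objective and using weak duality against a cheap primal-feasible flow that routes the excess mass out of $C$ through $\partial C$, one bounds $\sum_e f_e(\hat{x})^2$ by roughly $\vol(\partial C)\cdot(\text{mass scale})$, hence by $\Phi(C)\,\vol(C)$ up to the constants $\alpha,\beta$. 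The key analytic tool is that $f_e$ is the support function of the base polytope, so $f_e(\one_C) = w_e(C)$ and $f_e$ is positively homogeneous and convex; this lets the hyperedge cut-costs aggregate to the combinatorial cut-size $\vol(\partial C) = \sum_e \vartheta_e w_e(C)$ without any edge-size-dependent loss, which is precisely what makes the bound edge-size-independent.

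The third and final step is the sweep cut. Define $S_h = \{v : \hat{x}_v \ge h\}$ and integrate over thresholds $h$. The essential inequality is a co-area / layer-cake identity for the Lov\'{a}sz extension: for each hyperedge, $f_e(\hat{x}) = \int_0^\infty w_e(S_h)\,dh$, so that $\sum_e f_e(\hat{x})$ decomposes into the integral of the cut-sizes $\vol(\partial S_h)$ of the level sets. Combining this with the Cauchy--Schwarz inequality applied to $\sum_e f_e(\hat{x})^2$ versus $\sum_e f_e(\hat{x})$, and pairing the upper bound on cut-cost (Step 2) against the lower bound on diffused mass (Step 1), yields that some threshold $h$ must satisfy $\vol(\partial S_h) \le O(\sqrt{\Phi(C)})\cdot \min\{\vol(S_h),\vol(\bar S_h)\}$, i.e., $\Phi(S_h) \le O(\sqrt{\Phi(C)}/\alpha\beta)$. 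The main obstacle I anticipate is Step 2: carefully constructing the reference primal flow that certifies a small optimal cut-cost, and tracking how the softness parameter $\sigma$ and the overlap constants $\alpha,\beta$ enter the bound so that the final dependence is exactly $O(\sqrt{\Phi(C)}/\alpha\beta)$. Getting the quadratic (Cheeger) exponent rather than a linear one comes from the $\ell_2$ penalty, and the Cauchy--Schwarz step is what converts the $\ell_2$-controlled cut-cost into the square-root guarantee; ensuring the mass lower bound survives the thresholding (so that a good level set genuinely separates a constant fraction of $\vol(C)$) is the delicate part of making the sweep argument go through.
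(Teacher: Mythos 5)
There is a genuine gap, and it sits at the heart of your Step 2: the inequality you plan to prove is backwards. You propose to upper-bound $\sum_e f_e(\hat x)^2$ by roughly $\Phi(C)\vol(C)$ via a ``cheap primal-feasible flow'' through $\partial C$. No such cheap flow exists: the diffusion is a flow problem, and a low-conductance target cut is a \emph{bottleneck} that makes routing expensive, not cheap. Under Assumption~\ref{assum:delta} at least $3\vol(C)$ mass starts inside $C$, while the sinks in $C$ (plus the excess allowed when $\sigma$ is small) absorb at most about $2\vol(C)$, so every feasible routing must push at least $\vol(C)$ net mass across $\partial C$; since $r_e \in \phi_e B_e$ caps the directional flow by $\phi_e w_e(C)$, minimizing the $\ell_2$ cost of such a crossing yields $\sum_e \hat\phi_e^2 \ge \vol(C)^2/\vol(\partial C) = \vol(C)/\Phi(C)$ — this is exactly the paper's Claim~\ref{claim:flows}, and by the conjugacy $\hat\phi_e^2 = f_e(\hat x)^2$ it contradicts your proposed upper bound whenever $\Phi(C) < 1$. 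You have imported the spectral test-vector intuition (that $f_e(\one_C) = w_e(C)$ certifies a small quadratic form), but that upper-bounds the optimum of an eigenvalue problem; here $\hat x$ is pinned by strong duality to the expensive optimal flow, so its aggregate cut-cost is necessarily \emph{large}, and it is precisely this largeness the proof exploits: strong duality and conjugacy give $(\Delta - d)^T\hat x \ge \hat\nu := \sum_e f_e(\hat x)^2$, so the flow-cost lower bound simultaneously lower-bounds the linear term. Correspondingly, your Step 1 target $\sum_v d_v \hat x_v \gtrsim \vol(C)$ is at the wrong scale: what the argument needs, and what duality delivers, is order $\vol(C)/\Phi(C)$.

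Your Step 3 skeleton is essentially sound and close to the paper's rounding: the layer-cake identity $f_e(\hat x) = \int_0^\infty w_e(S_h)\,dh$ is the paper's Lemma~\ref{lem:lovasz}, and your Cauchy--Schwarz step plays the role of the paper's edge-length device $\hat l(e) = \max\bigl(1/\sqrt{\vol(C)},\, f_e(\hat x)/\sqrt{\hat\nu}\bigr)$, both of which require the locality bound $\vol(\supp(\hat x)) \le \|\Delta\|_1 \le 3\vol(C)/\beta$ of Lemma~\ref{lem:support} (which your sketch never invokes). With the corrected pillars the sweep ratio becomes of order $1/(\beta\sqrt{\hat\nu}) \le \sqrt{\vol(\partial C)}/(\beta\vol(C))$, whence the $\sqrt{\Phi(C)}$ guarantee. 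Finally, to make the flow-cost lower bound true when $\sigma > 0$ you must handle the case your sketch glosses over: if the excess $\sigma\sum_v d_v\hat z_v$ absorbs $\ge \vol(C)$ mass, the crossing argument fails outright, and the paper recovers the same lower bound on $\hat\nu$ through the generalized Rayleigh-quotient inequality for submodular hypergraphs (Lemma~\ref{lem:rayleigh}), H\"older's inequality, and Assumption~\ref{assum:sigma} — so the small-$\sigma$ assumption enters to keep the excess route expensive, not, as in your sketch, to keep the regularizer from spoiling an upper bound.
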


One of the challenges we face in establishing the result in Theorem~\ref{thm:conductance} is making sure that our diffusion model enjoys both good clustering guarantees and practical algorithmic advantages at the same time.\ This is achieved by introducing the hyper-parameter $\sigma$ to our diffusion problem.\ We will demonstrate how $\sigma$ helps with algorithmic development in Section 5, but from a clustering perspective, the additional flexibility given by $\sigma>0$ complicates the underlying diffusion dynamics, making it more difficult to analyze.\ Another challenge is connecting the Lov\'{a}sz extension $f_e(x)$ in \eqref{eq:dualL2} with the conductance of a cluster.\ We resolve all these problems by combining a generalized Rayleigh quotient result for submodular hypergraphs~\cite{LM18}, primal-dual convex conjugate relations between \eqref{eq:primalL2} and \eqref{eq:dualL2}, and a classical property of the Choquet integral/Lov\'{a}sz extension.

Let $(\hat{\phi},\hat{r},\hat{z})$ be an optimal solution for the primal problem~\eqref{eq:primalL2}. We state the following lemma on the locality (i.e., sparsity) of the optimal solutions, which justifies why HFD is a local diffusion method.

\begin{lemma}
\label{lem:support}
$|\supp(\hat{\phi})| \le \vol(\supp(\hat{x})) \le \|\Delta\|_1$; moreover, $\vol(\supp(\hat{z})) = \vol(\supp(\hat{x}))$ if $\sigma > 0$.
\end{lemma}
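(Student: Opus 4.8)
The plan is to read everything off the first-order (KKT) optimality conditions relating the primal optimum $(\hat{\phi},\hat{r},\hat{z})$ to the dual optimum $\hat{x}$. Assuming strong duality holds (the same convex-conjugate computation that produces \eqref{eq:dualL2} from \eqref{eq:primalL2}), I would carry out the partial minimizations of the Lagrangian of \eqref{eq:primalL2} one edge and one node at a time. Minimizing $\tfrac12\phi_e^2 - \phi_e f_e(\hat{x})$ over $\phi_e\ge 0$ gives $\hat{\phi}_e = f_e(\hat{x})$, and minimizing $\tfrac{\sigma}{2}d_v z_v^2 - \sigma d_v \hat{x}_v z_v$ over $z_v\ge 0$ gives $\hat{z}_v = \hat{x}_v$ whenever $\sigma>0$ (feasible since $\hat{x}\ge 0$). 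The second relation settles the last claim at once: $\hat{z}_v=\hat{x}_v$ forces $\supp(\hat{z})=\supp(\hat{x})$ and hence $\vol(\supp(\hat{z}))=\vol(\supp(\hat{x}))$.

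For $|\supp(\hat{\phi})|\le\vol(\supp(\hat{x}))$ I would argue that an edge can carry flow only if it touches the support of $\hat{x}$. Since $\hat{\phi}_e=f_e(\hat{x})$ and, for $\hat{x}\ge 0$, the Lov\'asz extension has the Choquet form $f_e(\hat{x})=\int_0^\infty w_e(\{v:\hat{x}_v>t\})\,dt$, the condition $\hat{\phi}_e>0$ means some superlevel set $\{v:\hat{x}_v>t\}$ has positive $w_e$-value; because $w_e(S)=w_e(S\cap e)$ and $w_e(\emptyset)=0$, that set must meet $e$, and any such node lies in $e\cap\supp(\hat{x})$. Thus $\supp(\hat{\phi})\subseteq\{e:e\cap\supp(\hat{x})\neq\emptyset\}$, and double-counting incidences gives $|\supp(\hat{\phi})|\le\sum_{v\in\supp(\hat{x})}|\{e:v\in e\}|=\sum_{v\in\supp(\hat{x})}d_v=\vol(\supp(\hat{x}))$.

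The inequality $\vol(\supp(\hat{x}))\le\|\Delta\|_1$ is where the real work lies. Writing $U:=\supp(\hat{x})$, complementary slackness makes the capacity constraint tight on $U$: for $v\in U$, $\Delta(v)-\sum_{e}\hat{r}_e(v)=d_v+\sigma d_v\hat{x}_v\ge d_v$. Summing over $v\in U$ and using $\Delta\ge 0$ yields $\vol(U)\le\|\Delta\|_1-\sum_{e}\hat{r}_e(U)$, so it suffices to prove the total outflow obeys $\sum_{e}\hat{r}_e(U)\ge 0$. I would establish this edgewise. When $\hat{\phi}_e=0$ we have $\hat{r}_e\in\hat{\phi}_e B_e=\{0\}$, so $\hat{r}_e(U)=0$. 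When $\hat{\phi}_e>0$, the optimality of $\hat{r}_e$ means $\hat{\rho}_e:=\hat{r}_e/\hat{\phi}_e$ maximizes $\rho^T\hat{x}$ over $B_e$; by the classical characterization of such maximizers (the greedy/Choquet property that the optimal $\rho_e$ is tight on every superlevel set of $\hat{x}$), and since $U=\{v:\hat{x}_v>0\}$ is precisely the smallest strict superlevel set, we get $\hat{\rho}_e(U)=w_e(U)\ge 0$, i.e.\ $\hat{r}_e(U)=\hat{\phi}_e w_e(U)\ge 0$. Summing closes the bound.

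I expect the tightness step to be the crux: recognizing $U$ as a superlevel set of $\hat{x}$ and invoking that the optimal routing saturates $w_e$ on it is exactly what turns the dual heights into a certificate of nonnegative net outflow. Everything else — the KKT pointwise relations, the support inclusion, and the double-counting — is routine bookkeeping once that combinatorial fact is in place.
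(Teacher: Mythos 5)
Your proposal is correct and takes essentially the same route as the paper: the pointwise conjugate relations $\hat{\phi}_e=f_e(\hat{x})$ and $\hat{z}=\hat{x}$ are the paper's primal--dual lemma, the support inclusion plus double counting is the paper's first step verbatim, and your crux --- that $\hat{\rho}_e(U)=w_e(U)\ge 0$ because $U=\supp(\hat{x})$ is a superlevel set and maximizers over $B_e$ are tight on superlevel sets (Bach's Proposition 4.2) --- is exactly the paper's key invocation. The only cosmetic difference is that you obtain the tight capacity constraint on $U$ from primal complementary slackness, whereas the paper sums the dual first-order stationarity condition over $v\in\supp(\hat{x})$; these are the same KKT identity written from the two sides.
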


\section{Optimization algorithm for HFD}\label{sec:alg}

We use a simple Alternating Minimization (AM)~\cite{Beck2015} method that efficiently solves the primal diffusion problem~\eqref{eq:primalL2}.\ For $e \in E$, we define a diagonal matrix $A_e \in \mathbb{R}^{|V| \times |V|}$ such that $[A_e]_{v,v} = 1$ if $v \in e$ and 0 otherwise.\ Denote $\mathcal{C} := \{(\phi,r) : r_e \in \phi_eB_e, ~\forall e \in E\}$.\ The following Lemma~\ref{lem:sep} allows us to cast problem~\eqref{eq:primalL2} to an equivalent separable formulation amenable to the AM method.\
\begin{lemma}
\label{lem:sep}
The following problem is equivalent to \eqref{eq:primalL2} for any $\sigma > 0$, in the sense that $(\hat{\phi},\hat{r},\hat{z})$ is optimal in \eqref{eq:primalL2} for some $\hat{z} \in \RR^{|V|}$ if and only if $(\hat{\phi},\hat{r},\hat{s})$ is optimal in \eqref{eq:sepL2} for some $\hat{s} \in \bigotimes_{e\in E}\mathbb{R}^{|V|}$.
\begin{equation}
\label{eq:sepL2}
	\min_{\phi, r, s}~\frac{1}{2}\sum_{e \in E} \left(\phi_e^2 + \frac{1}{\sigma}\left\|s_e - r_e\right\|_2^2\right),
	~\mbox{s.t.}~(\phi,r) \in \mathcal{C}, ~\Delta - \sum_{e \in E}s_e \le d, ~s_{e,v} = 0, \forall v \not\in e.
\end{equation}
\end{lemma}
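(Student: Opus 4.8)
The plan is to prove the equivalence by \emph{partial minimization}: in both \eqref{eq:primalL2} and \eqref{eq:sepL2} I would hold the flow variables $(\phi,r)$ fixed and minimize out the remaining auxiliary variable ($z$ in the former, $s$ in the latter), and then show that both reductions produce the \emph{same} objective over the same feasible set $\mathcal{C}$. Concretely, for a fixed feasible pair $(\phi,r)\in\mathcal{C}$ I write $g := \Delta - \sum_{e\in E} r_e - d$, so that the coupling constraint of \eqref{eq:primalL2} reads $\sigma Dz \ge g$, while that of \eqref{eq:sepL2} reads $\sum_{e} s_e \ge \Delta - d$, i.e. $\sum_{e}(s_e - r_e) \ge g$.

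First I would reduce \eqref{eq:primalL2}. With $(\phi,r)$ fixed, the inner problem $\min_{z\ge0,\ \sigma Dz \ge g}\ \frac{\sigma}{2}\sum_v d_v z_v^2$ decouples across nodes, and since $\sigma>0$ and $d_v>0$ each node's problem is solved in closed form by $z_v = \frac{1}{\sigma d_v}\max\{g_v,0\}$, yielding inner value $\frac{1}{2\sigma}\sum_v \frac{1}{d_v}\max\{g_v,0\}^2$. Next I would reduce \eqref{eq:sepL2} the same way: substituting $t_e := s_e - r_e$ (supported on $e$, because both $r_e$ and $s_e$ vanish off $e$), the inner problem becomes $\min \frac{1}{2\sigma}\sum_v \sum_{e\ni v} t_{e,v}^2$ subject to $\sum_{e\ni v} t_{e,v} \ge g_v$ for every $v$, which again decouples across nodes. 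Here is the one genuinely quantitative step, and it is the crux of the lemma: for each node $v$ I must show $\min\{\sum_{e\ni v} t_{e,v}^2 : \sum_{e\ni v} t_{e,v} \ge g_v\} = \frac{1}{d_v}\max\{g_v,0\}^2$, where the number of summands is exactly $d_v$. This is elementary---if $g_v\le0$ take all $t_{e,v}=0$, and if $g_v>0$ the constraint is active and the sum of squares with a fixed sum is minimized by spreading $g_v$ \emph{uniformly} over the $d_v$ incident hyperedges---but it is where the per-edge splitting, the weight $1/\sigma$, and the degrees $d_v$ conspire to reproduce exactly the $\sigma$-weighted node penalty of \eqref{eq:primalL2}.

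Having matched the two inner values, both problems reduce to the identical outer problem $\min_{(\phi,r)\in\mathcal{C}} \frac{1}{2}\sum_e \phi_e^2 + \frac{1}{2\sigma}\sum_v \frac{1}{d_v}\max\{g_v,0\}^2$; the effective feasible region in $(\phi,r)$ is $\mathcal{C}$ in both cases, since for any $(\phi,r)\in\mathcal{C}$ a feasible $z$ (resp. $s$) always exists. To conclude the claimed correspondence of optimizers I would invoke the standard partial-minimization principle: $(\hat\phi,\hat r,\hat z)$ minimizes \eqref{eq:primalL2} iff $(\hat\phi,\hat r)$ minimizes the outer problem and $\hat z$ attains the inner minimum, and likewise $(\hat\phi,\hat r,\hat s)$ minimizes \eqref{eq:sepL2} iff $(\hat\phi,\hat r)$ minimizes the \emph{same} outer problem and $\hat s$ attains its inner minimum. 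Since the inner minima are always attained through explicit node-wise solutions (e.g. $\hat z_v = \frac{1}{\sigma d_v}\max\{g_v,0\}$ and, compatibly, $\hat s_{e,v} = \hat r_{e,v} + \sigma \hat z_v$ for $v\in e$), the ``for some $\hat z$''/``for some $\hat s$'' existence holds on both sides, and the two conditions are equivalent through the common optimal $(\hat\phi,\hat r)$. The main obstacle is therefore not conceptual but the bookkeeping in the node-wise reduction of \eqref{eq:sepL2}: one must track carefully that $t_e$ is supported on $e$ so that precisely $d_v$ summands appear at each node $v$, which is exactly what makes the two penalties coincide.
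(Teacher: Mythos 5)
Your proof is correct, and it takes a genuinely different (though closely related) route from the paper's. The paper argues by explicit solution transfer and a two-sided value comparison: given an optimum $(\hat\phi,\hat r,\hat z)$ of \eqref{eq:primalL2} it constructs $\hat s_e := \hat r_e + \sigma A_e \hat z$, checks feasibility and equality of objective values, and for the converse it invokes a separately proved KKT lemma (Lemma~\ref{lem:s-step} in the appendix) giving the closed-form inner $s$-minimizer $s_e = r_e + A_e D^{-1}\big[\Delta - \sum_{e'\in E} r_{e'} - d\big]_+$, from which it builds $z' := \frac{1}{\sigma}D^{-1}\big[\Delta - \sum_{e\in E} r'_e - d\big]_+$ and sandwiches the two optimal values. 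You instead marginalize both problems over their auxiliary variable at fixed $(\phi,r)\in\mathcal{C}$ and show the two reduced objectives coincide; your crux --- that at each node the excess $[g_v]_+$ is optimally spread uniformly over the $d_v$ incident hyperedges, yielding inner value $\frac{1}{2\sigma}\sum_v [g_v]_+^2/d_v$ in both problems --- is exactly the content of the paper's Lemma~\ref{lem:s-step}, which you re-derive by the elementary sum-of-squares-with-fixed-sum (Cauchy--Schwarz) argument rather than by stationarity and complementary-slackness case analysis. Your partial-minimization framing is arguably cleaner for the biconditional with its existential quantifiers: both sides collapse to ``$(\hat\phi,\hat r)$ minimizes the common reduced problem,'' since the inner minimizers always exist in closed form, whereas the paper proves one direction in detail and asserts the converse by symmetry. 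What the paper's route buys in exchange is reusable machinery: its Lemma~\ref{lem:s-step} is stated for general $\ell_p$-penalties and weights $\vartheta_e$ and doubles as the closed-form $s$-update of Algorithm~\ref{alg:AM}, and the explicit correspondences ($\hat s_e = \hat r_e + \sigma A_e\hat z$ and its inverse) are reused later to recover the dual solution from the AM iterates. Nothing essential is lost on your side --- your recovered maps are the same ones, and your node-wise computation extends verbatim to the weighted $\ell_p$ setting --- modulo the minor implicit assumption $d_v>0$, which the paper shares through its use of $D^{-1}$.
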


The AM method for problem~\eqref{eq:sepL2} is given in Algorithm~\ref{alg:AM}. The first sub-problem corresponds to computing projections to a group of cones, where all the projections can be computed in parallel.\ The computation of each projection depends on the choice of base polytope $B_e$.\ If the hyperedge weight $w_e$ is unit cut-cost, $B_e$ holds special structures and projection can be computed with $O(|e|\log|e|)$~\cite{li2020quadratic}.\ For general $B_e$, a conic Fujishige-Wolfe minimum norm algorithm can be adopted to obtain the projection~\cite{li2020quadratic}.\ The second sub-problem in Algorithm~\ref{alg:AM} can be easily computed in closed-form.\ We provide more information about Algorithm~\ref{alg:AM} and its convergence properties in the appendix.

{\centering
\begin{minipage}{.6\linewidth}
\begin{algorithm}[H]
	\caption{Alternating Minimization for problem \eqref{eq:sepL2}}
	\label{alg:AM}
	{\bf Initialization:} 
	\[
	\phi^{(0)} := 0, r^{(0)} := 0, s^{(0)}_e := D^{-1}A_e\left[\Delta - d\right]_+, \forall e \in E.
	\]
	{\bf For $k = 0,1,2,\ldots$ do:}
	\begin{align*}
	&(\phi^{(k+1)}, r^{(k+1)}) := \argmin\limits_{(\phi,r)\in\mathcal{C}} \sum\limits_{e \in E}(\phi_e^2 + \tfrac{1}{\sigma}\|s_e^{(k)} - r_e\|_2^2)\\
	&s^{(k+1)} := \argmin\limits_{s} \sum\limits_{e \in E} \|s_e - r_e^{(k+1)}\|_2^2\\
	&\hspace{22mm}\mbox{s.t.}~\Delta-\sum\limits_{e\in E}s_e \le d, \ s_{e,v}=0,\forall v\not\in e.
	\end{align*}
\end{algorithm}
\end{minipage}
\par
}
\vspace{5pt}

We remark that the reformulation \eqref{eq:sepL2} for $\sigma>0$ is crucial from an algorithmic point of view.\ If $\sigma = 0$, then the primal problem~\eqref{eq:primalL2} has complicated coupling constraints that are hard to deal with.\ In this case, one has to resort to the dual problem~\eqref{eq:dualL2}.\ However, problem~\eqref{eq:dualL2} has a nonsmooth objective function, which prohibits applicability of optimization methods for smooth objective functions.\ Even though subgradient method may be applied, we have observed empirically that its convergence rate is extremely slow for our problem, and early stopping results in a bad quality output.

Lastly, as noted in Lemma~\ref{lem:support}, the number of nonzeros in the optimal solution is upper bounded by $\|\Delta\|_1$.\ In Figure~\ref{fig:sol-nnz} we plot the number of nodes having positive excess (which equals the number of nonzeros in the dual solution $\hat{x}$) at every iteration of Algorithm~\ref{alg:AM}.\ Figure~\ref{fig:sol-nnz} indicates that Algorithm~\ref{alg:AM} is strongly local, meaning that it works only on a small fraction of nodes (and their incident hyperedges) as opposed to producing dense iterates.\ This key empirical observation has enabled our algorithm to scale to large datasets by simply keeping track of all active nodes and hyperedges.\ Proving that the worst-case running time of AM depends only on the number of nonzero nodes at optimality as opposed to size of the whole hypergraph is an open problem, which we leave for future work.\

\begin{figure}[ht!]
\centering
	\begin{subfigure}{0.33\textwidth}
	\centering
	\includegraphics[height=2.6cm]{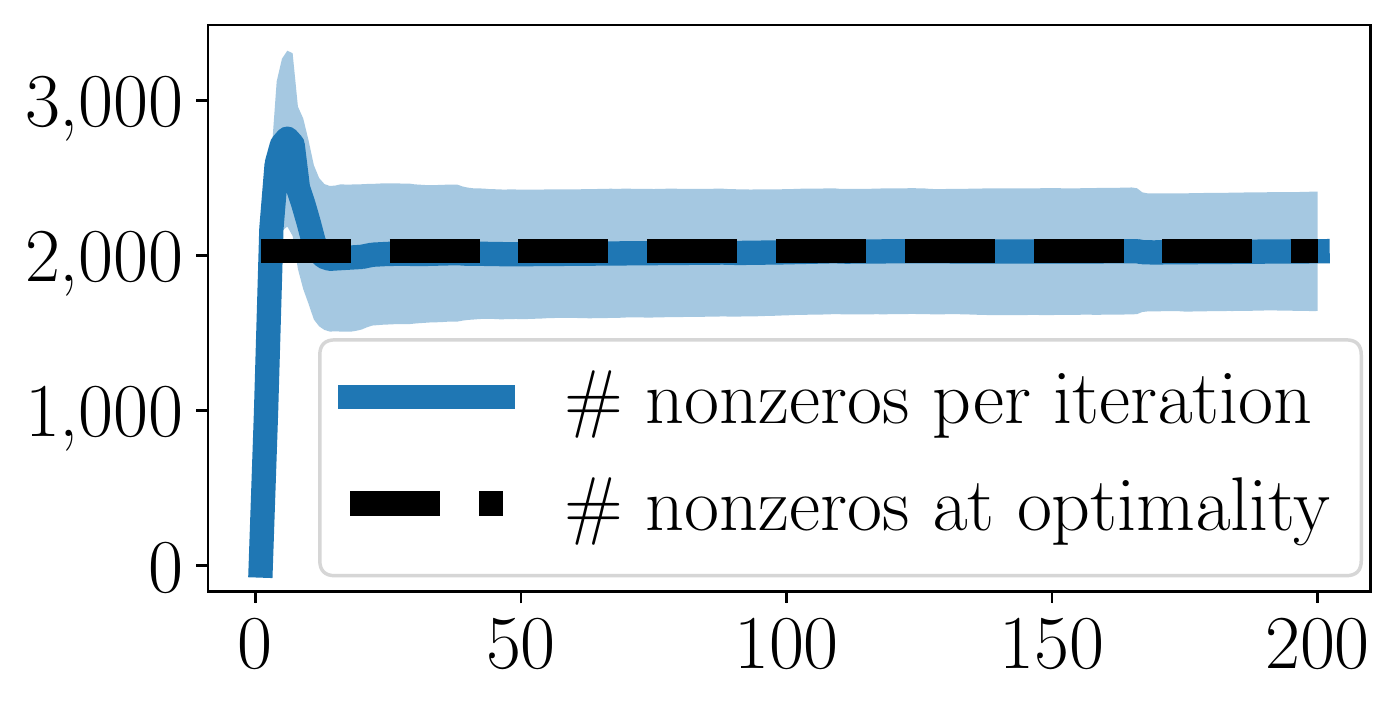}
	\caption{Cluster 12 - Gift Cards}\label{fig:sol-nnz-a}
	\end{subfigure}%
	\begin{subfigure}{0.33\textwidth}
	\centering
	\includegraphics[height=2.6cm]{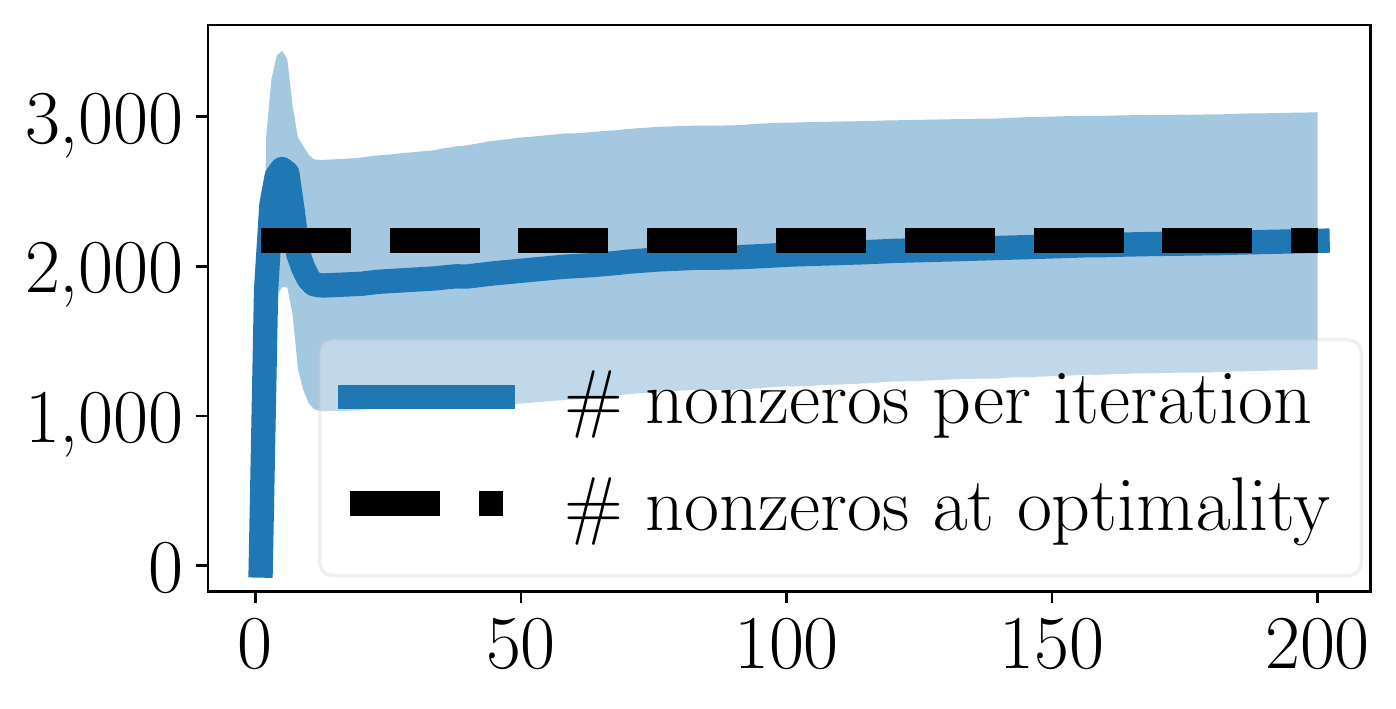}
	\caption{Cluster 18 - Magazine Subs.}\label{fig:sol-nnz-b}
	\end{subfigure}%
	\begin{subfigure}{0.33\textwidth}
	\centering
	\includegraphics[height=2.6cm]{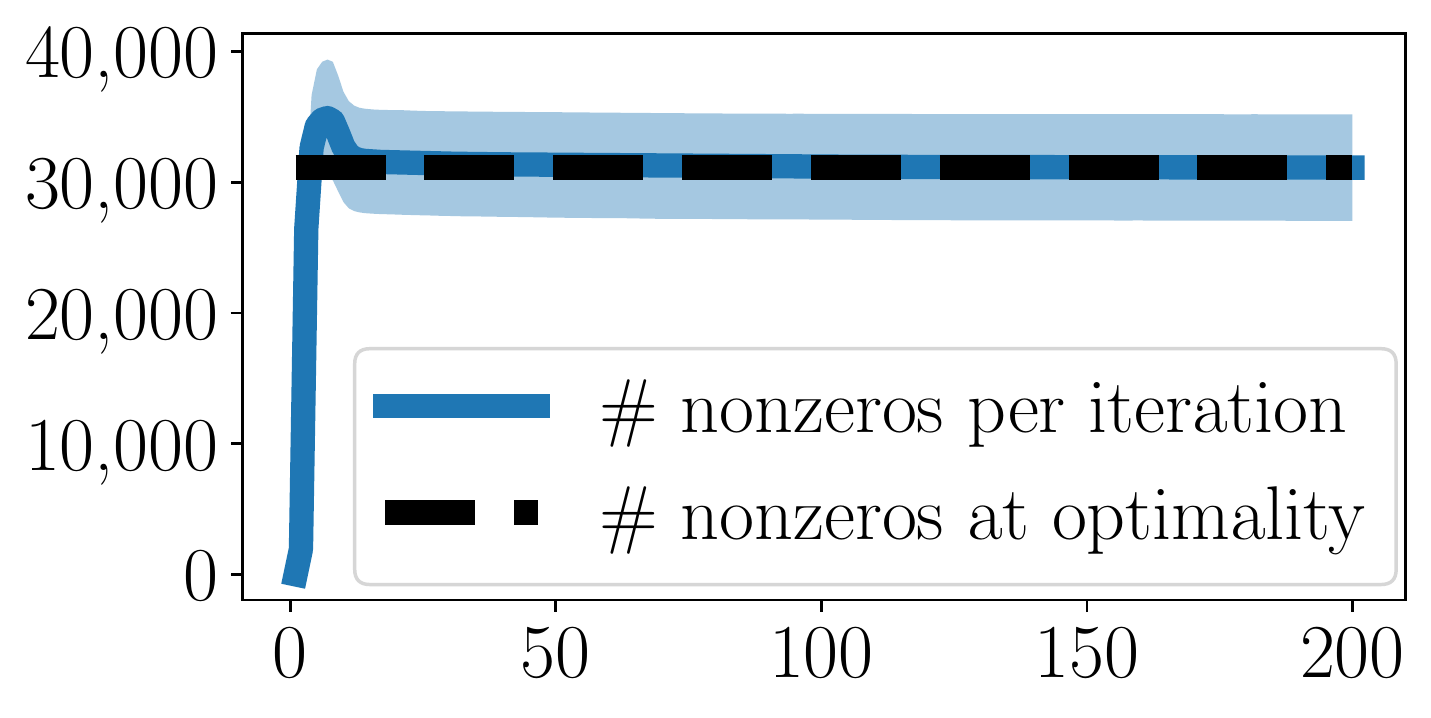}
	\caption{Cluster 24 - Prime Pantry}\label{fig:sol-nnz-c}
	\end{subfigure}
\caption{The blue solid line plots the number of nonzeros in the dual solution $x$ over 200 iterations of Algorithm~\ref{alg:AM}, when it is applied to solve HFD on the Amazon-reviews hypergraph for local clustering.\ See Section~\ref{sec:experiments-real} for details about the dataset.\ The error bars show standard deviation over 10 trials. In each trial we pick a different seed node and set the same amount of initial mass.\ The black dashed line shows the average number of nonzeros at optimality.\ The algorithm touches only a small fraction of nodes out the total 2,268,264 nodes in the Amazon-reviews dataset.}
\label{fig:sol-nnz}
\end{figure}

\section{Empirical Results}\label{sec:experiments}

In this section we evaluate the performance of HFD for local clustering.\ First, we carry out experiments on synthetic hypergraphs with varying target cluster conductances and varying hyperedge sizes.\ For the unit cut-cost setting, we show that HFD is more robust and has better performance when the target cluster is noisy; for a cardinality-based cut-cost setting, we show that the edge-size-independent approximation guarantee is important for obtaining good recovery results.\ Second, we carry out experiments using real-world data.\ We show that HFD significantly outperforms existing state-of-the-art diffusion methods for both unit and cardinality-based cut-costs.\ Moreover, we provide a compelling example where specialized submodular cut-cost is necessary for obtaining good results.\ Code that reproduces all results is available at \url{https://github.com/s-h-yang/HFD}.

\subsection{Synthetic experiments using hypergraph stochastic block model (HSBM)}

{\bf The generative model.} We generalize the standard $k$-uniform hypergraph stochastic block model ($k$HSBM)~\cite{GD2014} to allow different types of inter-cluster hyperedges appear with possibly different probabilities according to the cardinality of hyperedge cut. Let $V = \{1,2,\ldots,n\}$ be a set of nodes and let $k\ge2$ be the required constant hyperedge size. We consider $k$HSBM with parameters $k$, $n$, $p$, $q_j$, $j = 1,2,\ldots,\lfloor k/2 \rfloor$. The model samples a $k$-uniform hypergraph according to the following rules: (i) The community label $\sigma_i \in \{0,1\}$ is chosen uniformly at random for $i \in V$;\footnote{We consider two blocks for simplicity. In general the model applies to any number of blocks.} (ii) Each size $k$ subset $e = \{v_1, v_2, \ldots, v_k\}$ of $V$ appears independently as a hyperedge with probability
\[
	\mathbb{P}(e \in E) = 
	\left\{
	\begin{array}{ll}
		p & \mbox{if}~\sigma_{v_1} = \sigma_{v_2} = \cdots = \sigma_{v_k}, \\
		q_j & \mbox{if}~ \min\{k - \sum_{i=1}^k \sigma_{v_i}, \sum_{i=1}^k \sigma_{v_i}\} = j.
	\end{array}
	\right.
\]
If $k=3$ or all $q_j$'s are the same, then we obtain the standard two-block $k$HSBM. We use this setting to evaluate HFD for unit cut-cost. If $q_j$'s are different, then we obtain a cardinality-based $k$HSBM. In particular, when $q_1 \ge q_2 \ge \cdots \ge q_{\lfloor k/2 \rfloor}$, it models the scenario where hyperedges containing similar numbers of nodes from each block are rare, while small noises (e.g., hyperedges that have one or two nodes in one block and all the rest in the other block) are more frequent. We use $q_1 \gg q_j$, $j \ge 2$, to evaluate HFD for cardinality-based cut-cost. There are other random hypergraph models, for example the Poisson degree-corrected HSBM~\cite{CVB2021} that deals with degree heterogeneity and edge size heterogeneity. In our experiments we focus on $k$HSBM because it allows stronger control over hyperedge sizes. We provide details on data generation in the appendix.

{\bf Task and methods.} We consider the local hypergraph clustering problem. We assume that we are given a single labelled node and the goal is to recover all nodes having the same label. Using a single seed node the most common (and sought-after) practice for local graph clustering tasks. We test the performance of HFD with two other methods: (i) Localized Quadratic Hypergraph Diffusions (LH)~\cite{LVHLG20}, which can be seen as a hypergraph analogue of Approximate Personalized PageRank (APPR); (ii) ACL~\cite{ACL06}, which is used to compute APPR vectors on a standard graph obtained from reducing a hypergraph through star expansion~\cite{ZSC1999}.\footnote{There are other heuristic methods which first reduce a hypergraph to a graph by clique expansion~\cite{BGL16} and then apply diffusion methods for standard graphs. We did not compare with this approach because clique expansion often results in a dense graph and consequently makes the computation slow. Moreover, it has been shown in \cite{LVHLG20} that clique expansion did not offer significant performance improvement over star expansion.}

{\bf Cut-costs and parameters.} We consider both unit cut-cost, i.e., $w_e(S) = 1$ if $S \cap e \neq \emptyset$ and $e \setminus S \neq \emptyset$, and {\it cardinality cut-cost} $w_e(S) = \min\{|S \cap e|, |e \setminus S|\}/\lfloor |e|/2 \rfloor$. HFD that uses unit and cardinality cut-costs are denoted by U-HFD and C-HFD, respectively. LH also works with both unit and cardinality cut-costs and we specify them by U-LH and C-LH, respectively.
For HFD, we initialize the seed mass so that $\|\Delta\|_1$ is a constant factor times the volume of the target cluster. We set $\sigma=0.01$. We highly tune LH by performing binary search over its parameters $\kappa$ and $\delta$ and pick the output cluster having the lowest conductance. For ACL we use the same parameter choices as in~\cite{LVHLG20}. Details on parameter setting are provided in the appendix.

{\bf Results.} For each hypergraph, we randomly pick a block as the target cluster. We run the methods 50 times. Each time we choose a different node from the target cluster as the single seed node.\\
\underline{{\it Unit cut-cost results.}} Figure~\ref{fig:varycond} shows local clustering results when we fix $k=3$ but vary the conductance of the target cluster (i.e., constant $p$ but varying $q_1$). Observe that the performances of all methods become worse as the target cluster becomes more noisy, but U-HFD has significantly better performance than both U-LH and ACL when the conductance of the target cluster is between 0.2 and 0.4. The reason that U-HFD performs better is in part because it requires much weaker conditions for the theoretical conductance guarantee to hold. On the contrary, LH assumes an upper bound on the conductance of the target cluster~\cite{LVHLG20}. This upper bound is dataset-dependent and could become very small in many cases, leading to poor practical performances. We provide more details in this perspective in the appendix. ACL with star expansion is a heuristic method that has no performance guarantee.\\ 
\underline{{\it Cardinality cut-cost results.}} Figure~\ref{fig:edgesize} shows the median (markers) and 25-75 percentiles (lower-upper bars) of conductance ratios (i.e., the ratio between output conductance and ground-truth conductance, lower is better) and F1 scores for different methods for $k\in\{3,4,5,6\}$. The target cluster for each $k$ has conductance around 0.3.\footnote{See the appendix for similar results when we fix the target cluster conductances around 0.2 and 0.25, respectively. These cover a reasonably wide range of scenarios in terms of the target conductance and illustrate the performance of algorithms for different levels of noise.} For $k=3$, unit and cardinality cut-costs are equivalent, therefore all methods have similar performances. As $k$ increases, cardinality cut-cost provides better performance than unit cut-cost in both conductance and F1. However, since the theoretical approximation guarantee of C-LH depends on hyperedge size~\cite{LVHLG20}, there is a noticeable performance degradation for C-LH when we increase $k=3$ to $k=4$. On the other hand, the performance of C-HFD appears to be independent from $k$, which aligns with our conductance bound in Theorem~\ref{thm:conductance}. 

\begin{figure}[ht!]
	\centering
	\begin{subfigure}{0.45\textwidth}
		\centering
		\includegraphics[width=.95\textwidth]{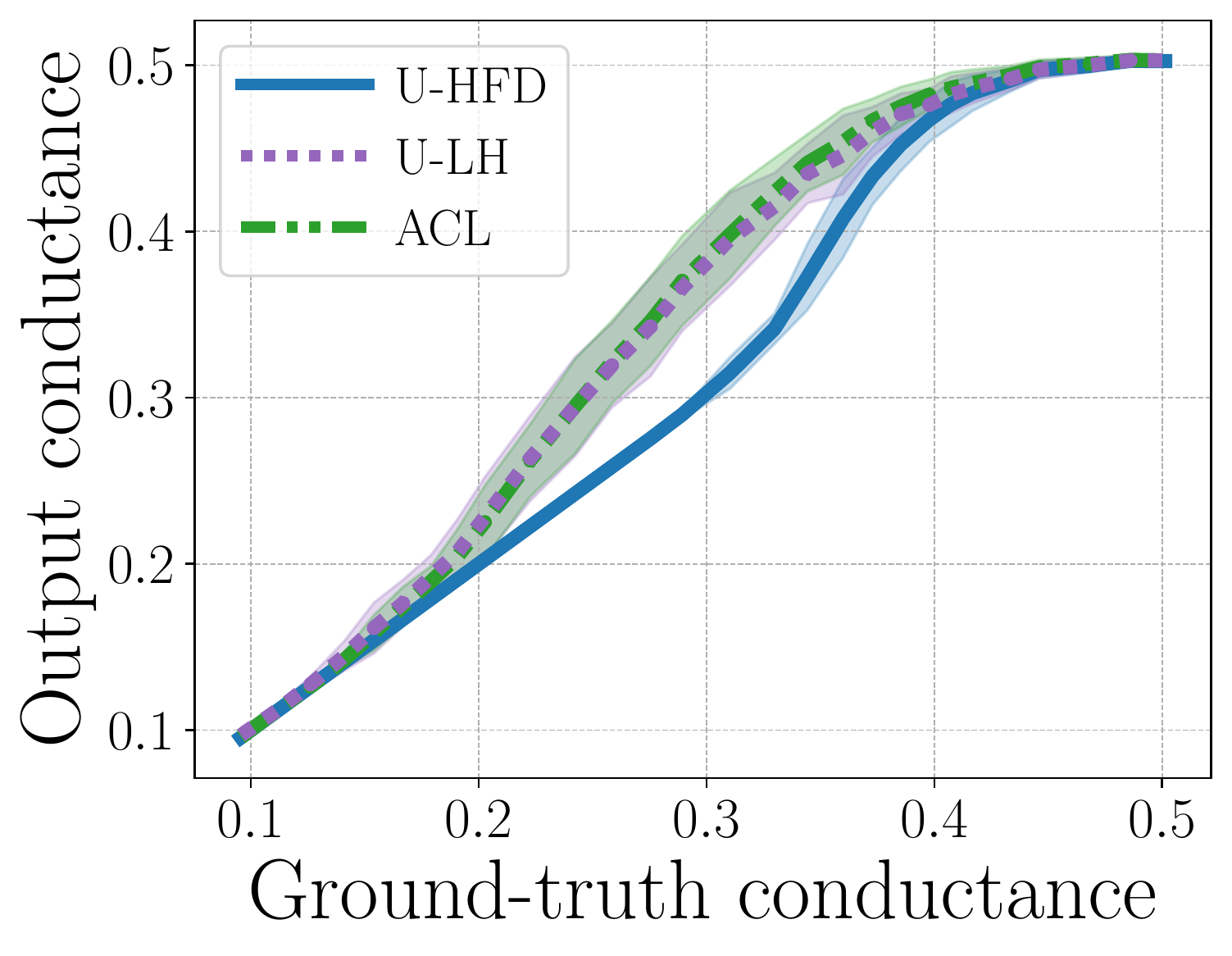}
	\end{subfigure}%
	\begin{subfigure}{0.45\textwidth}
		\centering
		\includegraphics[width=.95\textwidth]{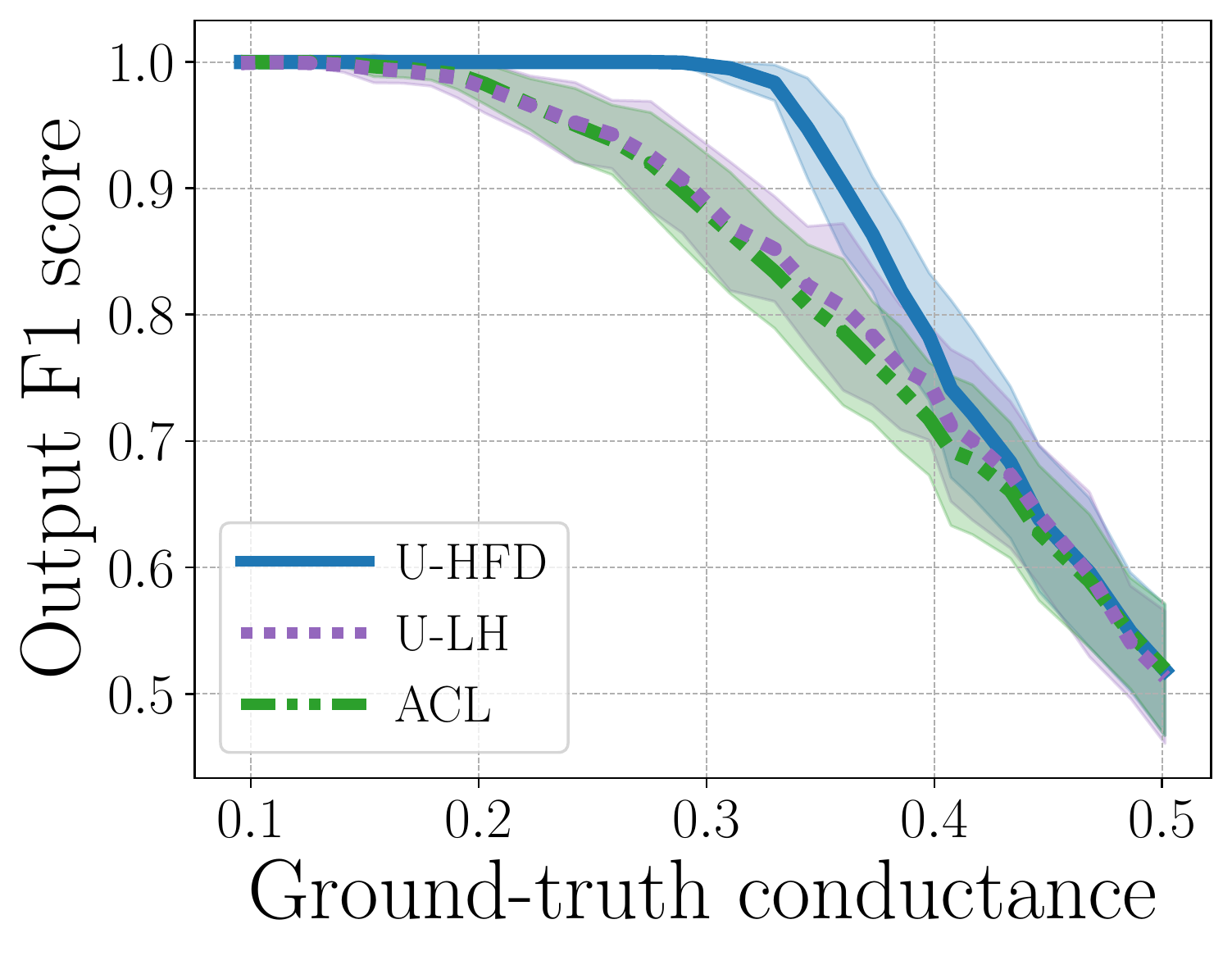}
	\end{subfigure}
	\caption{Output conductance and F1 against ground-truth conductance}
	\label{fig:varycond}
\end{figure}

\begin{figure}[ht!]
	\centering
	\begin{subfigure}{0.45\textwidth}
		\centering
		\includegraphics[width=.95\textwidth]{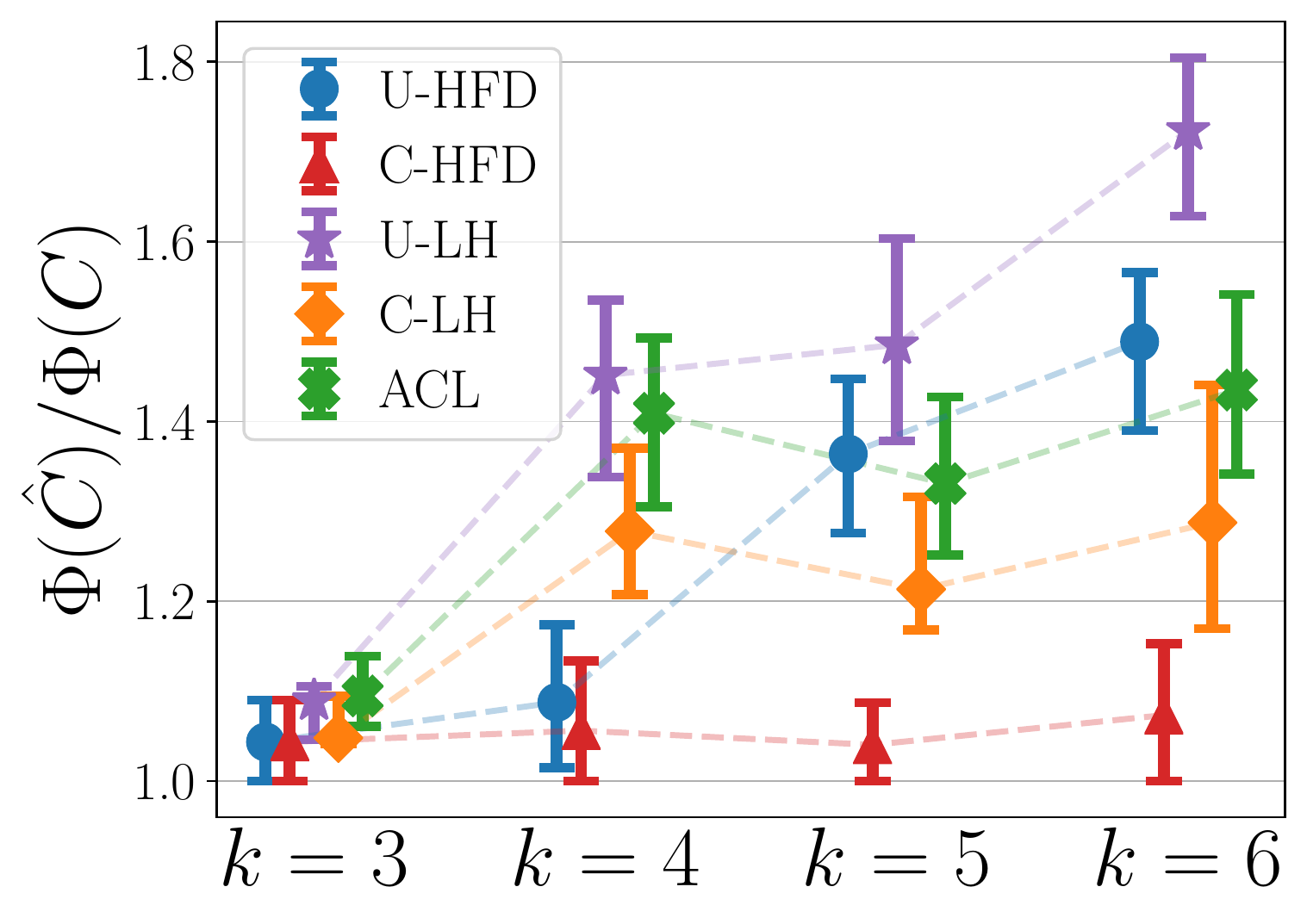}
	\end{subfigure}%
	\begin{subfigure}{0.45\textwidth}
		\centering
		\includegraphics[width=.95\textwidth]{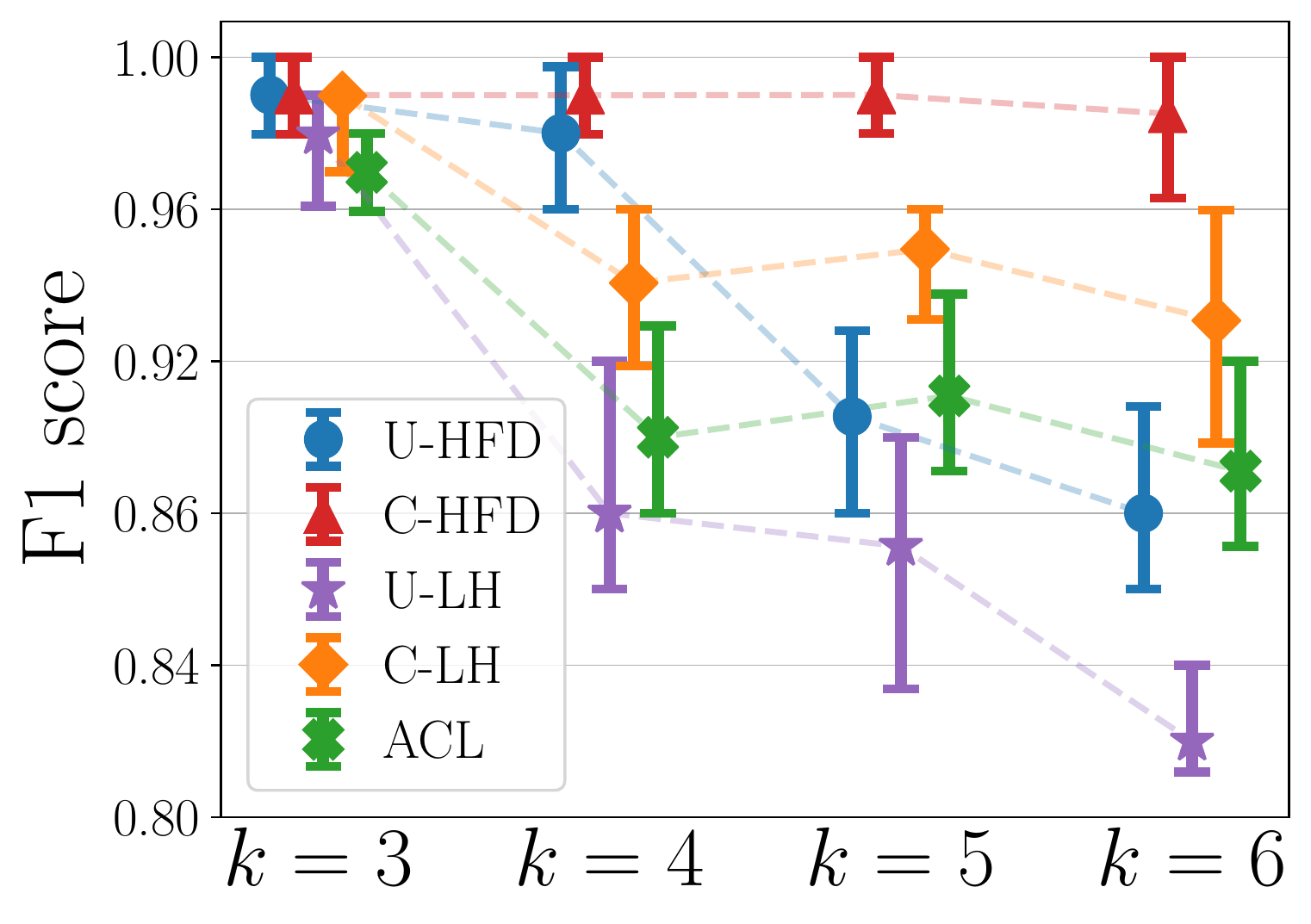}
	\end{subfigure}
	\caption{Conductance ratio and F1 on $k$-uniform hypergraphs}
	\label{fig:edgesize}
\end{figure}

\subsection{Experiments using real-world data}\label{sec:experiments-real}

We conduct extensive experiments using real-world data. First, we show that HFD has superior local clustering performances than existing methods for both unit and cardinality-based cut-costs. Then, we show that general submodular cut-cost (recall that HFD is the only method that applies to this setting) can be necessary for capturing complex high-order relations in the data, improving F1 scores by up to 20\% for local clustering and providing the only meaningful result for node ranking. In the appendix we show additional local clustering experiments on two additional datasets, where our method improves F1 scores by 8\% on average for 13 different target clusters.

{\bf Datasets.} We provide basic information about the datasets used in our experiments.\ Complete descriptions are provided in the appendix.\\
\underline{\it Amazon-reviews}~($|V|$ = 2,268,264, $|E|$ = 4,285,363)~\cite{ni2019justifying,VBK20}.\ In this hypergraph each node represents a product.\ A set of products are connected by a hyperedge if they are reviewed by the same person.\ We use product category labels as ground truth cluster identities.\ We consider all clusters of less than 10,000 nodes.\\
\underline{\it  Trivago-clicks}~($|V|$ = 172,738, $|E|$ = 233,202)~\cite{CVB2021}. The nodes in this hypergraph are accommodations/hotels.\ A set of nodes are connected by a hyperedge if a user performed ``click-out'' action during the same browsing session.\ We use geographical locations as ground truth cluster identities.\ There are 160 such clusters, and we filter them using cluster size and conductance.\\
\underline{\it Florida Bay food network}~($|V|$ = 128, $|E|$ = 141,233)~\cite{LM17}.\ Nodes in this hypergraph correspond to different species or organisms that live in the Bay, and hyperedges correspond to transformed motifs (Figure~\ref{fig:hyper-graph}) of the original dataset.\ Each species is labelled according its role in the food chain: producers, low-level consumers, high-level consumers.

{\bf Methods and parameters.} We compare HFD with LH and ACL.\footnote{We also tried a flow-improve method for hypergraphs~\cite{VBK20}, but the method was very slow in our experiments, so we only used it for small datasets. See appendix for results. The flow-improve method did not improve the performance of existing methods, therefore, we omitted it from comparisons on larger datasets.}\ There is a heuristic nonlinear variant of LH which is shown to outperform linear LH in some cases~\cite{LVHLG20}.\ Therefore we also compare with the same nonlinear variant considered in \cite{LVHLG20}.\ We denote the linear and nonlinear versions by LH-2.0 and LH-1.4, respectively.\ We set $\sigma=0.0001$ for HFD and we set the parameters for LH-2.0, LH-1.4 and ACL as suggested by the authors~\cite{LVHLG20}. More details on parameter choices appear in the appendix.\ We prefix methods that use unit and cardinality-based cut-costs by U- and C-, respectively.\

{\bf Experiments for unit and cardinality cut-costs.} For each target cluster in Amazon-reviews and Trivago-clicks, we run the methods multiple times, each time we use a different node as the singe seed node.\footnote{We show additional results using seed sets of more than one node in the appendix.} We report the median F1 scores of the output clusters in Table~\ref{tab:amazon-results} and Table~\ref{tab:trivago-results}. For Amazon-reviews, we only compare the unit cut-cost because it is both shown in \cite{LVHLG20} and verified by our experiments that unit cut-cost is more suitable for this dataset. Observe that U-HFD obtains the highest F1 scores for nearly all clusters. In particular, U-HFD significantly outperforms other methods for clusters 12, 18, 24, where we see an increase in F1 score by up to 52\%. For Trivago-clicks, C-HFD has the best performance for all but one clusters. Among the rest of all other methods, U-HFD has the second highest F1 scores for nearly all clusters. Moreover, observe that for each method (i.e., HFD, LH-2.0, LH-1.4), cardinality cut-cost leads to higher F1 than its unit cut-cost counterpart.

\begin{table}[ht!]
\caption{F1 results for Amazon-reviews network}\label{tab:amazon-results}
  \centering
  \small
  \setlength\tabcolsep{2.5pt}
  \begin{tabular}{lccccccccc}
    \toprule
    Method & 1 & 2  & 3 & 12 & 15 & 17 & 18 & 24 & 25 \\
    \midrule
    U-HFD & \bf 0.45 & \bf 0.09 & \bf 0.65 & \bf 0.92 & 0.04 & \bf 0.10 & \bf 0.80 & \bf  0.81 & \bf 0.09 \\
    U-LH-2.0 & 0.23 & 0.07 & 0.23 & 0.29 & \bf 0.05 & 0.06 & 0.21 & 0.28 & 0.05 \\
    U-LH-1.4 & 0.23 & \bf 0.09 & 0.35 & 0.40 & 0.00 & 0.07 & 0.31 & 0.35 & 0.06 \\
    ACL &  0.23 & 0.07 & 0.22 & 0.25 & 0.04 & 0.05 & 0.17 & 0.20 & 0.04  \\
    \bottomrule
  \end{tabular}
\end{table}

\begin{table}[ht!]
\caption{F1 results for Trivago-clicks network}\label{tab:trivago-results}
  \centering
  \setlength\tabcolsep{2.5pt}
  \small
  \begin{tabular}{lcccccccccc}
  \toprule
    Method & KOR & ISL & PRI & UA-43 & VNM & HKG & MLT & GTM & UKR & EST\\
    \midrule
    U-HFD 		& 0.75 	& \bf 0.99 	& 0.89 	& 0.85 	& 0.28 	& 0.82 	& \bf 0.98 	& 0.94 	& 0.60 	& \bf 0.94 \\
    C-HFD		& \bf 0.76 	& \bf 0.99 	& \bf 0.95 	& \bf 0.94 	& \bf 0.32 & 0.80 	& \bf 0.98 	& \bf 0.97 	& \bf 0.68 	& \bf 0.94 \\
    U-LH-2.0 	& 0.70 	& 0.86 	& 0.79 	& 0.70 	& 0.24 	& 0.92 	& 0.88 	& 0.82 	& 0.50 	& 0.90 \\
    C-LH-2.0 	& 0.73 	& 0.90 	& 0.84 	& 0.78 	& 0.27 	& \bf 0.94 	& 0.96 	& 0.88 	& 0.51 	& 0.83 \\
    U-LH-1.4  	& 0.69 	& 0.84 	& 0.80 	& 0.75 	& 0.28 	& 0.87 	& 0.92 	& 0.83 	& 0.47 	& 0.90 \\
    C-LH-1.4 	& 0.71 	& 0.88 	& 0.84 	& 0.78 	& 0.27 	& 0.88 	& 0.93 	& 0.85 	& 0.50 	& 0.85 \\
    ACL 		& 0.65 	& 0.84 	& 0.75 	& 0.68 	& 0.23 	& 0.90 	& 0.83 	& 0.69 	& 0.50 	& 0.88 \\
   \bottomrule
\end{tabular}
\end{table} 

{\bf Experiments for general submodular cut-cost.} In order to understand the importance of specialized general submodular hypergraphs we study the node-ranking problem for the Florida Bay food network using hypergraph modelling shown in Figure~\ref{fig:hyper-graph}.\ We compare HFD using unit (U-HFD, $\gamma_1=\gamma_2=1$), cardinality-based (C-HFD, $\gamma_1=1/2$ and $\gamma_2=1$) and submodular (S-HFD, $\gamma_1=1/2$ and $\gamma_2=0$) cut-costs.\ Our goal is to search the most similar species of a queried species based on the food-network structure.\ Table~\ref{tab:fw-results} shows that S-HFD provides the only meaningful node ranking results.\ Intuitively, when $\gamma_2=0$, separating the preys $v_1, v_2$ from the predators $v_3, v_4$ incurs 0 cost.\ This encourages S-HFD to diffuse mass among preys or predators only and not to cross from a predator to a prey or vice versa.\ As a result, similar species receive similar amount of mass and thus are ranked similarly.\ In the local clustering setting, Table~\ref{tab:fw-results} compares HFD using different cut-costs.\ 
By exploiting specialized higher-order relations, S-HFD further improves F1 scores by up to 20\% over U-HFD and C-HFD.\ This is not surprising, given the poor node-ranking results of other cut-costs.\ In the appendix we show another application of submodular cut-cost for node-ranking in an international oil trade network.

\begin{table}[ht!]
\caption{Node-ranking and local clustering in Florida Bay food network using different cut-costs}\label{tab:fw-results}
  \centering
  \small
  \setlength\tabcolsep{2.5pt}
  \begin{tabular}{clllll}
    \toprule
    & \multicolumn{2}{c}{Top-2 node-ranking results} & \multicolumn{3}{c}{Clustering F1} \\
    \cmidrule(l{2pt}r{2pt}){2-3} \cmidrule(l{2pt}r{2pt}){4-6}
    Method & Query: Raptors &  Query: Gray Snapper & Prod. & Low & High \\
    \midrule
    U-HFD & Epiphytic Gastropods, Detriti. Gastropods & Meiofauna, Epiphytic Gastropods & \bf 0.69 & 0.47 & 0.64 \\
    C-HFD & Epiphytic Gastropods, Detriti. Gastropods & Meiofauna, Epiphytic Gastropods & 0.67 & 0.47 &  0.64 \\
    S-HFD & Gruiformes, Small Shorebirds & Snook, Mackerel & \bf 0.69 & \bf 0.62 & \bf 0.84 \\
    \bottomrule
  \end{tabular}
\end{table}

\bibliography{references}
\bibliographystyle{plain}

\clearpage

\appendix 

\begin{center}
\huge Appendices for: Local Hyper-Flow Diffusion
\end{center}

\vspace{10pt}

\numberwithin{equation}{section}
\numberwithin{algorithm}{section}
\numberwithin{table}{section}
\numberwithin{figure}{section}
\numberwithin{claim}{section}
\numberwithin{assumption}{section}

Outline of the Appendix:

\vspace{5pt}

\begin{itemize}
	\item Appendix~\ref{sec:A01} contains supplementary material to Section 3 and Section 4 of the paper:

	\begin{itemize}
		\item mathematical derivation of the dual diffusion problem;
		\item proofs of Theorem 1 and Lemma 2.
	\end{itemize}
           
	\vspace{5pt}
        
	\item Appendix~\ref{sec:A02} contains supplementary material to Section 5 of the paper:
        
	\begin{itemize}
		\item proof of Lemma 3;	
		\item convergence properties of Algorithm 1;
		\item specialized algorithms for alternating minimization sub-problems of Algorithm 1.
	\end{itemize}
        
	\vspace{5pt}
        
	\item Appendix~\ref{sec:A03} contains supplementary material to Section 6 of the paper:
        
	\begin{itemize}
		\item additional synthetic experiments using $k$-uniform hypergraph stochastic block model;
		\item complete information about the real datasets considered in Section 6 of the paper;
		\item experiments for local clustering using seed sets that contain more than one node;
		\item experiments using 3 additional real datasets that are not discussed in the main paper;
		\item parameter settings and implementation details.
	\end{itemize}
	
\end{itemize}

\section{Approximation guarantee for local hypergraph clustering}\label{sec:A01}

In this section we prove a generalized and stronger version of Theorem 1 in the main paper, where the primal and dual diffusion problems are penalized by $\ell_p$-norm and $\ell_q$-norm, respectively, where $p\ge2$ and $1/p + 1/q = 1$.\ Moreover, we consider a generic hypergraph $H = (V,E,\WW)$ with general submodular weights $\WW = \{w_e,\vartheta_e\}_{e \in E}$ for any nonzero $\vartheta_e := \max_{S \subseteq e} w_e(S)$.\ {\em All claims in the main paper are therefore immediate special cases when $p = q = 2$ and $\vartheta_e = 1$ for all $e \in E$.}

Unless otherwise stated, we use the same notation as in the main paper.\ We generalize the definition of the degree of a node $v \in V$ as
\[
	d_v := \sum_{e \in E : v \in e} \vartheta_e.
\]
Note that when $\vartheta_e = 1$ for all $e$, the above definition reduces to $d_v = |\{e \in E : v \in e\}|$, which is the number of hyperedges to which $v$ belongs to.

Given $H = (V,E,\WW)$ where $\WW = \{w_e,\vartheta_e\}_{e \in E}$, $p \ge 2$, and a hyperparameter $\sigma \ge 0$, our primal Hyper-Flow Diffusion (HFD) problem is written as
\begin{equation}
\label{eq:primalLp}
\begin{split}
	\min_{\phi\in\RR^{|E|}_+,z\in\RR^{|V|}_+}~& \frac{1}{p}\sum_{e \in E}\vartheta_e \phi_e^p + \frac{\sigma}{p}\sum_{v \in V}d_vz_v^p\\
	\mbox{s.t.} \hspace{9mm} &\Delta - \sum_{e \in E}\vartheta_e r_e \le d + \sigma D z\\
	&r_e \in \phi_e B_e, ~\forall e \in E
\end{split}
\end{equation}
where
\[
	B_e := \{\rho_e \in \RR^{|V|} ~|~ \rho_e(S) \le w_e(S), \forall S \subseteq V, ~\mbox{and}~ \rho_e(V) = w_e(V)\}
\]
is the base polytope of $w_e$. The vector $m = \Delta - \sum_{e \in E}\vartheta_e r_e$ gives the net amount of mass after routing. Note that we multiply $r_e$ by $\vartheta_e$ because we have normalized $w_e$ by $\vartheta_e$ in its definition. 

\begin{lemma}
\label{lem:dual}
The following optimization problem is dual to \eqref{eq:primalLp}:
\begin{equation}
\label{eq:dualLq}
	\max_{x \in \RR^{|V|}_+}~(\Delta - d)^Tx - \frac{1}{q}\sum_{e \in E} \vartheta_e f_e(x)^q - \frac{\sigma}{q}\sum_{v \in V}d_v x_v^q
\end{equation}
where $f_e(x) := \max_{\rho_e \in B_e} \rho_e^Tx$ is the support function of base polytope $B_e$.
\end{lemma}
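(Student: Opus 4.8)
The plan is to derive \eqref{eq:dualLq} as the Lagrangian dual of \eqref{eq:primalLp}, treating the conic membership constraints $r_e \in \phi_e B_e$ (together with $\phi \ge 0$ and $z \ge 0$) as an explicit domain and dualizing only the coupling inequality. First I would attach a multiplier $x \in \RR^{|V|}_+$ to the constraint $\Delta - \sum_{e}\vartheta_e r_e - d - \sigma D z \le 0$ and form
\[
L = \frac{1}{p}\sum_e \vartheta_e\phi_e^p + \frac{\sigma}{p}\sum_v d_v z_v^p + x^T\Bigl(\Delta - \sum_e \vartheta_e r_e - d - \sigma D z\Bigr).
\]
Regrouping, $L$ splits into the constant term $(\Delta - d)^T x$, a $z$-block $\sum_v \sigma d_v(\tfrac{1}{p} z_v^p - x_v z_v)$, and independent hyperedge blocks $\vartheta_e(\tfrac{1}{p}\phi_e^p - x^T r_e)$. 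The dual function is obtained by minimizing over $(\phi, z, r)$ in their domain, and since the blocks decouple this reduces to three elementary minimizations.

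Next I would evaluate each block. For the $z$-block, each coordinate gives $\min_{z_v \ge 0}(\tfrac{1}{p} z_v^p - x_v z_v)$; since $x_v \ge 0$ the stationary point $z_v = x_v^{1/(p-1)}$ is feasible and the value is $-\tfrac{1}{q} x_v^q$, which is precisely (the negative of) the Fenchel conjugate of $t \mapsto \tfrac{1}{p} t^p$ on $\RR_+$ evaluated at $x_v$, using $1/p + 1/q = 1$. Summing produces $-\tfrac{\sigma}{q}\sum_v d_v x_v^q$. For each hyperedge block I would first minimize over $r_e$ with $\phi_e$ fixed: minimizing $-x^T r_e$ over $r_e \in \phi_e B_e$ amounts to maximizing $x^T r_e$, and by positive homogeneity of the support function together with $\phi_e \ge 0$ this equals $\phi_e f_e(x)$, where $f_e(x) = \max_{\rho_e \in B_e}\rho_e^T x$. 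What remains is $\min_{\phi_e \ge 0}(\tfrac{1}{p}\phi_e^p - \phi_e f_e(x))$, another power-conjugate computation.

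The step requiring care is justifying that this last minimization reproduces $-\tfrac{1}{q}f_e(x)^q$ rather than clipping to $0$ at $\phi_e = 0$: this needs $f_e(x) \ge 0$ on the feasible set $x \ge 0$. I would establish this from the Choquet/Lov\'{a}sz representation $f_e(x) = \sum_{i=1}^n x_{\pi(i)}\bigl(w_e(S_i) - w_e(S_{i-1})\bigr)$, where $\pi$ sorts the coordinates of $x$ in nonincreasing order, $S_i = \{\pi(1),\ldots,\pi(i)\}$, and $S_0 = \emptyset$; rearranging by Abel summation and using that $x \ge 0$ makes consecutive sorted differences nonnegative while $w_e \ge 0$ shows every term is nonnegative, hence $f_e(x) \ge 0$. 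With this in hand the stationary point $\phi_e = f_e(x)^{1/(p-1)} \ge 0$ is feasible and the block value is $-\tfrac{\vartheta_e}{q}f_e(x)^q$. Assembling the three contributions gives the concave dual objective of \eqref{eq:dualLq}, maximized over $x \ge 0$; strong duality (no gap) follows because \eqref{eq:primalLp} is convex with its only nonpolyhedral constraint being the closed convex cone $\{(\phi_e,r_e) : r_e \in \phi_e B_e\}$ and a strictly feasible point exists (take $z$ large when $\sigma > 0$), so a Slater-type qualification applies. I expect the main obstacle to be the joint treatment of the scaling $r_e \in \phi_e B_e$, namely the coupling of $\phi_e$ and $r_e$ resolved through homogeneity of $f_e$, together with the sign condition $f_e(x) \ge 0$ that makes both power-conjugate reductions legitimate on the nonnegative orthant.
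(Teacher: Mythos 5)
Your proposal is correct, and its computational core coincides with the paper's: the same block decoupling, the same power-conjugate identities with $1/p+1/q=1$, and the same use of positive homogeneity to pass from $\max_{r_e\in\phi_eB_e}x^Tr_e$ to $\phi_e f_e(x)$. The direction of travel differs, though. The paper starts from the dual objective \eqref{eq:dualLq}, expands $\tfrac{1}{q}f_e(x)^q$ and $\tfrac{1}{q}x_v^q$ as conjugate maxima over $\phi_e\ge0$ and $z_v\ge0$, and then exchanges the outer $\max_x$ with the inner minimization by citing a minimax theorem (Proposition 2.2, Chapter VI of Ekeland--Temam), after which the inner $\max_{x\ge0}$ of the linear terms produces the indicator of the primal constraint. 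You instead run the forward Lagrangian route: dualize only the coupling inequality with multiplier $x\ge0$, minimize blockwise over the conic domain, and close the gap with a constraint qualification. Your version buys one genuine improvement in rigor: you explicitly verify $f_e(x)\ge0$ for $x\ge0$ (via the Choquet representation and Abel summation; even more simply, $0\in B_e$ since $w_e\ge0$ and $w_e(V)=0$, so $f_e\ge0$ everywhere), a fact the paper's conjugate identity \eqref{eq:conjugate-a} silently requires --- for $f_e(x)<0$ the maximum over $\phi_e\ge0$ would clip to $0$ rather than give $\tfrac{1}{q}f_e(x)^q$. Two small corrections to your closing step: the cone $\{(\phi_e,r_e):\phi_e\ge0,\ r_e\in\phi_eB_e\}$ is in fact \emph{polyhedral} (it is cut out by the finitely many inequalities $r_e(S)\le\phi_e w_e(S)$, $S\subseteq e$, and the equality $r_e(V)=\phi_e w_e(V)$), not nonpolyhedral as you state; and your Slater point (take $z$ large) only covers $\sigma>0$, whereas the lemma and Assumption~\ref{assum:sigma_a} permit $\sigma=0$. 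The polyhedrality observation repairs this at no cost: with affine dualized constraints and a polyhedral domain, zero duality gap holds whenever the primal is feasible with finite value, no strict feasibility needed, which handles $\sigma=0$ uniformly (and when $\sigma=0$ renders the primal infeasible, the duality statement is degenerate anyway). With those touch-ups your argument is a complete and arguably more self-contained proof than the paper's minimax-exchange derivation.
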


\begin{proof}
Using convex conjugates, for $x \in \RR^{|V|}_+$, we have
\begin{subequations}
\label{eq:conjugate}
\begin{align}
	\frac{1}{q}f_e(x)^q &= \max_{\phi_e \ge 0}~\phi_e f_e(x) - \frac{1}{p}\phi_e^p, ~\forall e \in E, \label{eq:conjugate-a}\\
	\frac{1}{q}x_v^q &= \max_{z_v \ge 0}~z_vx_v - \frac{1}{p}z_v^p, ~\forall v \in V.
\end{align}
\end{subequations}
Apply the definition of $f_e(x)$, we can write \eqref{eq:conjugate-a} as
\[
\frac{1}{q}f_e(x)^q = \max_{\phi_e\ge0}~\phi_e f_e(x) - \frac{1}{p}\phi_e^p \ = \max_{\phi_e\ge0, r_e\in\phi_eB_e} r_e^Tx - \frac{1}{p}\phi_e^p.
\]
Therefore,
\[
\begin{split}
	&~\max_{x \in \RR^{|V|}_+}~(\Delta-d)^Tx - \frac{1}{q}\sum_{e \in E}\vartheta_e f_e(x)^q - \frac{\sigma}{q}\sum_{v \in V}d_vx_v^q\\
	 =&~\max_{x \in \RR^{|V|}_+}~(\Delta-d)^Tx - \sum_{e \in E}\vartheta_e \left(\max_{\phi_e\ge0, r_e\in\phi_eB_e}~r_e^Tx - \frac{1}{p}\phi_e^p \right) - \sigma\sum_{v \in V} d_v\left(\max_{z_v \ge 0}~z_vx_v - \frac{1}{p}z_v^p\right)\\
	 =&~\max_{x \in \RR^{|V|}_+}~(\Delta-d)^Tx + \min_{\substack{\phi\in\RR^{|E|}_+\\r_e\in\phi_eB_e,\forall e\in E}}\sum_{e \in E} \left(\frac{1}{p}\vartheta_e\phi_e^p - \vartheta_er_e^Tx\right) + \min_{z\in\RR^{|V|}_+} \sigma \sum_{v \in V}\left(\frac{1}{p}d_vz_v^p - d_vz_vx_v\right)\\
	 =&~\min_{\substack{\phi\in\RR^{|E|}_+,z\in\RR^{|V|}_+\\r_e\in\phi_eB_e,\forall e\in E}}\frac{1}{p}\sum_{e \in E}\vartheta_e \phi_e^p + \frac{\sigma}{p}\sum_{v \in V} d_vz_v^p + \max_{x\in\RR^{|V|}_+} \left((\Delta-d)^Tx - \sum_{e \in E}\vartheta_er_e^Tx - \sigma \sum_{v \in V} d_vz_vx_v\right)\\
	 =&~\min_{\substack{\phi\in\RR^{|E|}_+,z\in\RR^{|V|}_+\\r_e\in\phi_eB_e,\forall e\in E}}\frac{1}{p}\sum_{e \in E}\vartheta_e \phi_e^p + \frac{\sigma}{p}\sum_{v \in V} d_vz_v^p \quad \mbox{s.t.}\quad \Delta - d - \sum_{e \in E}\vartheta_e r_e - \sigma Dz \le 0.
\end{split}
\]
In the above derivations, we may exchange the order of minimization and maximization and arrive at the second last equality, due to Proposition 2.2, Chapter VI, in~\cite{ET1999}. The last equality follows from
\[
 \max_{x\in\RR^{|V|}_+} \bigg((\Delta-d)^Tx - \sum_{e \in E}\vartheta_er_e^Tx - \sigma \sum_{v \in V} d_vz_vx_v\bigg) = \left\{\begin{array}{ll} 0, & \mbox{if}~\Delta - d - \sum\limits_{e \in E}\vartheta_e r_e - \sigma Dz \le 0, \\ +\infty, & \mbox{otherwise}. \end{array} \right.
\]
\end{proof}

{\bf Notation.} For the rest of this section, we reserve the notation $(\hat{\phi},\hat{z})$ and $\hat{x}$ for optimal solutions of \eqref{eq:primalLp} and \eqref{eq:dualLq} respectively. If $\sigma = 0$, we simply treat $\hat{z} = 0$.

The next lemma relates primal and dual optimal solutions. We make frequent use of this relation throughout our discussion.

\begin{lemma}
\label{lem:conjugate}
We have that $\hat{\phi}_e^p = f_e(\hat{x})^q$ for all $e \in E$. Moreover, if $\sigma > 0$, then $\hat{z}_v^p = \hat{x}_v^q$ for all $v \in V$.
\end{lemma}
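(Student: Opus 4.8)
The plan is to read the two identities off the Fenchel--Young (convex-conjugate) relation \eqref{eq:conjugate} that was used to build the dual, evaluated at the optimal pair. The starting point is the scalar conjugacy $\frac{1}{p}t^p$ and $\frac{1}{q}s^q$ on $\RR_{\ge0}$: the supremum $\max_{\phi_e\ge0}\phi_e f_e(\hat x)-\frac1p\phi_e^p$ is attained, and the first-order condition characterizing the maximizer is $\phi_e^{p-1}=f_e(\hat x)$. This uses $f_e(\hat x)\ge0$, which I would justify by noting that $0\in B_e$ (since $w_e(V)=0$ forces $\rho_e(V)=0$ and $w_e\ge0$ makes the inequality constraints hold at $\rho_e=0$), hence $f_e(\hat x)=\max_{\rho\in B_e}\rho^T\hat x\ge 0$. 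Raising $\phi_e^{p-1}=f_e(\hat x)$ to the $p$-th power and using $p(q-1)=pq-p=q$ (which is just $\tfrac1p+\tfrac1q=1$ rewritten as $p+q=pq$) yields $\phi_e^p=f_e(\hat x)^q$ at the maximizer. The identical computation for the $z$-conjugate gives a maximizer satisfying $z_v^p=\hat x_v^q$.

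It remains to show that the primal optimizers $\hat\phi_e$ and $\hat z_v$ are exactly these conjugate maximizers, and the cleanest writeup is via strong duality plus a tightness argument. Lemma~\ref{lem:dual} already establishes that the primal and dual optimal values coincide. I would then sum the per-edge inequalities $\frac1q f_e(\hat x)^q\ge \hat\phi_e f_e(\hat x)-\frac1p\hat\phi_e^p\ge \hat x^T\hat r_e-\frac1p\hat\phi_e^p$ (the first from the conjugate relation, the second because $\hat r_e\in\hat\phi_e B_e$ gives $\hat x^T\hat r_e\le\hat\phi_e f_e(\hat x)$), weighted by $\vartheta_e$, together with the analogous $z$-inequalities weighted by $\sigma d_v$. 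Collecting terms produces $\text{(dual value)}\le \hat x^T\big(\Delta-d-\sum_e\vartheta_e\hat r_e-\sigma D\hat z\big)+\text{(primal value)}$, and the bracketed quantity is $\le0$ by primal feasibility contracted against $\hat x\ge0$. Since the two values are in fact equal, every inequality in the chain must be an equality. Tightness of each Fenchel--Young inequality $\frac1q f_e(\hat x)^q=\hat\phi_e f_e(\hat x)-\frac1p\hat\phi_e^p$ forces $\hat\phi_e^p=f_e(\hat x)^q$, and tightness of $\frac1q\hat x_v^q=\hat z_v\hat x_v-\frac1p\hat z_v^p$ forces $\hat z_v^p=\hat x_v^q$; the latter conclusion is available only when $\sigma>0$, since otherwise the $z$-terms carry weight zero, consistent with the convention $\hat z=0$.

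The main obstacle I anticipate is bookkeeping rather than anything conceptual. I must keep the normalization factor $\vartheta_e$ consistent so that the $\vartheta_e$ multiplying $r_e$ in the feasibility constraint matches the $\vartheta_e$ weighting the objective, ensuring the per-edge relations separate without spurious constants; I must confirm the maximizers respect the nonnegativity $\phi_e,z_v\ge0$ demanded by the primal feasible set (guaranteed by $f_e(\hat x),\hat x_v\ge0$); and I must handle the degenerate cases $f_e(\hat x)=0$ and $\hat x_v=0$, where both sides of the claimed identity vanish and the first-order condition is read at the boundary. Finally, to deduce termwise tightness from tightness of the weighted sums I only need $\vartheta_e>0$ for all $e$ and $\sigma d_v>0$ for the $z$-claim, both of which hold under the standing assumptions (isolated nodes with $d_v=0$ contribute no $z$-term and require no conclusion).
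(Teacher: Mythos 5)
Your proposal is correct and is essentially the paper's own argument: the paper's proof likewise reads the identities off the convex-conjugate relation \eqref{eq:conjugate} together with strong duality, asserting $\hat{\phi}_e = f_e(\hat{x})^{q-1}$ and $\hat{z}_v = \hat{x}_v^{q-1}$ as the maximizers in the conjugate expressions, whence $\hat{\phi}_e^p = f_e(\hat{x})^q$ and $\hat{z}_v^p = \hat{x}_v^q$. Your writeup merely makes explicit the Fenchel--Young tightness chain (and the boundary/degenerate cases) that the paper leaves implicit in the phrase ``it follows directly from \eqref{eq:conjugate} and strong duality.''
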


\begin{proof}
Given $\hat{x}$ an optimal solution to \eqref{eq:dualLq}, it follows directly from \eqref{eq:conjugate} and strong duality that $(\hat{\phi}, \hat{z})$ must satisfy, for each $e \in E$ and $v \in V$,
\[
\hat{\phi}_e = f(\hat{x})^{q-1} = \argmax_{\phi_e\ge0}~\phi_e f_e(\hat{x}) - \frac{1}{p}\phi_e^p \quad \mbox{and} \quad
\hat{z}_v = \hat{x}_v^{q-1} = \argmax_{z_v \ge 0}~z_v\hat{x}_v - \frac{1}{p}z_v^p.
\]
\end{proof}

{\bf Diffusion setup.} Recall that we pick a scalar $\delta$ and set the source $\Delta$ as
\begin{equation}
\label{eq:source_a}
	\Delta_v = \left\{
	\begin{array}{ll}
	\delta d_v, & \mbox{if}~ v \in S,\\
	0, & \mbox{otherwise}.
	\end{array}
	\right.
\end{equation}

For convenience we restate the assumptions below.

\begin{assumption}
\label{assum:overlap_a}
$\vol(S \cap C) \ge \alpha \vol(C)$ and $\vol(S \cap C) \ge \beta \vol(S)$ for some $\alpha,\beta \in (0,1]$.
\end{assumption}
\begin{assumption}
\label{assum:delta_a}
The source mass $\Delta$ as specified in \eqref{eq:source_a} satisfies $\delta = 3/\alpha$, which gives $\Delta(C) \ge 3\vol(C)$.
\end{assumption}
\begin{assumption}
\label{assum:sigma_a}
$\sigma$ satisfies $0 \le \sigma \le \beta\Phi(C)/3$.
\end{assumption}

\subsection{Technical lemmas}

In this subsection we state and prove some technical lemmas that will be used for the main proof in the next subsection. 

The following lemma characterizes the maximizers of the support function for a base polytope.

\begin{lemma}[Proposition 4.2 in \cite{Bach2011a}]
\label{lem:maximizers}
Let $w$ be a submodular function such that $w(\emptyset) = 0$. Let $x \in \RR^{|V|}$, with unique values $a_1 > \cdots > a_m$, taken at sets $A_1, \ldots, A_m$ (i.e., $V = A_1 \cup \cdots \cup A_m$ and $\forall i \in \{1,\ldots,m\}$, $\forall v \in A_i$, $x_v = a_v$). Let $B$ be the associated base polytope. Then $\rho \in B$ is optimal for $\max_{\rho \in B} \rho^Tx$ if and only if for all $i = 1,\ldots,m$, $\rho(A_1 \cup \cdots \cup A_i) = w(A_1 \cup \cdots \cup A_i)$.
\end{lemma}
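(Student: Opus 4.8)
The plan is to reduce the linear maximization over the base polytope to a termwise optimization by means of summation by parts, and then to certify that the resulting upper bound is attained through Edmonds' greedy construction. First I would introduce the nested family $B_i := A_1 \cup \cdots \cup A_i$ for $i = 0,1,\ldots,m$, with the conventions $B_0 = \emptyset$ and $B_m = V$, and abbreviate $P_i := \rho(B_i)$. Since $x$ is constant equal to $a_i$ on $A_i$ and $\rho(A_i) = P_i - P_{i-1}$, grouping the coordinates of $\rho^T x$ by level set gives $\rho^T x = \sum_{i=1}^m a_i (P_i - P_{i-1})$. Applying Abel's summation-by-parts identity, and using $P_0 = \rho(\emptyset) = 0$ together with $P_m = \rho(V) = w(V)$ (which is \emph{fixed} across all $\rho \in B$ by definition of the base polytope), I obtain the key identity
$$\rho^T x = a_m\, w(V) + \sum_{i=1}^{m-1}(a_i - a_{i+1})\, \rho(B_i), \qquad \forall \rho \in B.$$

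The second step exploits the sign structure of the coefficients. Because the values $a_1 > \cdots > a_m$ are strictly decreasing, every coefficient $a_i - a_{i+1}$ is strictly positive, while the constraints defining $B$ supply $\rho(B_i) \le w(B_i)$. Bounding each summand separately therefore yields
$$\rho^T x \le a_m\, w(V) + \sum_{i=1}^{m-1}(a_i - a_{i+1})\, w(B_i) =: U,$$
with equality \emph{if and only if} $\rho(B_i) = w(B_i)$ for every $i = 1,\ldots,m-1$; the remaining case $i = m$ holds automatically since $\rho(V) = w(V)$ for all $\rho \in B$. This already pins down the equality condition in the desired form, provided $U$ is in fact the true maximum.

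It therefore remains to show that $U$ is attained, i.e. that the tightness conditions are simultaneously achievable by some $\rho^\ast \in B$; this is the crux of the argument. I would fix a linear order $v_1,\ldots,v_n$ of $V$ that refines the level sets (so $x_{v_1} \ge \cdots \ge x_{v_n}$, making each $B_i$ a prefix of the order) and define the greedy vector $\rho^\ast_{v_k} := w(\{v_1,\ldots,v_k\}) - w(\{v_1,\ldots,v_{k-1}\})$. By telescoping and $w(\emptyset) = 0$ one gets $\rho^\ast(B_i) = w(B_i)$ for all $i$, so $\rho^\ast$ meets every tightness condition. The main obstacle is verifying that $\rho^\ast \in B$, namely that $\rho^\ast(S) \le w(S)$ holds for \emph{all} $S \subseteq V$ and not merely the prefixes; this is exactly Edmonds' theorem and follows from the submodularity of $w$ via a standard exchange/induction argument over the elements of $S$ in greedy order. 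Once $\rho^\ast \in B$ is established, $\rho^\ast$ attains $U$, whence $U = \max_{\rho \in B}\rho^T x$. Combined with the equality analysis above, this closes both implications: any optimal $\rho$ has objective value $U$ and so must force equality in each term, giving all the tightness conditions; conversely any $\rho$ satisfying the tightness conditions attains $U$ and is therefore optimal.
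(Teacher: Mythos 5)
Your proof is correct: the Abel summation-by-parts decomposition $\rho^T x = a_m w(V) + \sum_{i=1}^{m-1}(a_i - a_{i+1})\rho(B_i)$ with strictly positive gaps, the termwise bound $\rho(B_i) \le w(B_i)$, and attainment of the resulting upper bound via Edmonds' greedy vector is exactly the standard argument for this characterization. The paper itself gives no proof, importing the statement as Proposition 4.2 of \cite{Bach2011a}, and the proof there proceeds along precisely your lines (Choquet-integral decomposition plus the greedy algorithm), so your appeal to Edmonds' theorem for $\rho^\ast \in B$ is the same classical ingredient rather than a gap.
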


Recall that $(\hat{\phi},\hat{z})$ and $\hat{x}$ denote the optimal solutions of \eqref{eq:primalLp} and \eqref{eq:dualLq} respectively. We start with a lemma on the locality of the optimal solutions.

\begin{lemma}[Lemma 2 in the main paper]
\label{lem:support_a}
We have
\[
	\sum_{e \in \supp(\hat{\phi})} \vartheta_e  ~=~ \vol(\supp(\hat{x})) ~\le~ \|\Delta\|_1.
\]
Moreover, if $\sigma > 0$, then $\vol(\supp(\hat{z})) = \vol(\supp(\hat{x}))$.
\end{lemma}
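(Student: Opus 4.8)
The plan is to extract everything from the conjugate/KKT relations in Lemma~\ref{lem:conjugate} together with the maximizer description in Lemma~\ref{lem:maximizers}, and never to invoke the diffusion assumptions, since this is a structural fact about optimal solutions for an arbitrary source $\Delta$. First I would dispatch the easy equality: for $\sigma>0$, Lemma~\ref{lem:conjugate} gives $\hat{z}_v^p=\hat{x}_v^q$ for every $v$, so $\hat{z}_v>0$ exactly when $\hat{x}_v>0$; hence $\supp(\hat{z})=\supp(\hat{x})$ and $\vol(\supp(\hat{z}))=\vol(\supp(\hat{x}))$. The same lemma gives $\hat{\phi}_e^p=f_e(\hat{x})^q$, so throughout I will use the description $\supp(\hat{\phi})=\{e\in E:f_e(\hat{x})>0\}$. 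Writing $\hat{r}_e=\hat{\phi}_e\hat{\rho}_e$ with $\hat{\rho}_e\in\argmax_{\rho\in B_e}\rho^T\hat{x}$ and $m_v:=\Delta_v-\sum_{e\in E}\vartheta_e\hat{r}_e(v)$, I would record from complementary slackness between the dual variable $\hat{x}$ and the primal capacity constraint that this constraint is tight at every active node, i.e.\ $m_v=d_v+\sigma d_v\hat{z}_v\ge d_v$ whenever $\hat{x}_v>0$.

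For the bound by $\|\Delta\|_1$ I would use conservation of mass. Each $\hat{r}_e$ is proper, $\hat{r}_e^T\one_e=0$, so summing over nodes cancels all flow and gives $\sum_{v\in V}m_v=\sum_{v\in V}\Delta_v=\|\Delta\|_1$. Combining with tightness on the active set, $\vol(\supp(\hat{x}))=\sum_{v\in\supp(\hat{x})}d_v\le\sum_{v\in\supp(\hat{x})}m_v=\|\Delta\|_1-\sum_{v\notin\supp(\hat{x})}m_v$, so it remains to show that the inactive nodes carry nonnegative aggregate mass. Here Lemma~\ref{lem:maximizers} is the key tool: since $\hat{x}\ge 0$, the set $Z:=\{v:\hat{x}_v=0\}$ is the bottom level set, so for each $e$ the maximizer saturates the complementary prefix, $\hat{\rho}_e(e\setminus Z)=w_e(e\setminus Z)$, and with $\hat{\rho}_e(V)=w_e(V)=0$ this forces $\hat{\rho}_e(Z)=-w_e(e\setminus Z)\le 0$. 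Thus $\sum_{v\in Z}\vartheta_e\hat{r}_e(v)=\vartheta_e\hat{\phi}_e\hat{\rho}_e(Z)\le 0$ for every $e$, giving $\sum_{v\in Z}m_v\ge\sum_{v\in Z}\Delta_v\ge 0$, which closes the inequality $\vol(\supp(\hat{x}))\le\|\Delta\|_1$.

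For the first equality I would write both sides through the incidence identity $\vol(\supp(\hat{x}))=\sum_{e\in E}\vartheta_e\,|\{v\in e:\hat{x}_v>0\}|$ and compare it termwise with $\sum_{e\in\supp(\hat{\phi})}\vartheta_e=\sum_{e\in E}\vartheta_e\,\one[f_e(\hat{x})>0]$. One direction is immediate, since $f_e(\hat{x})=0$ whenever $\hat{x}$ vanishes on all of $e$: every active edge meets $\supp(\hat{x})$, and charging each active edge to an incident active node gives $\sum_{e\in\supp(\hat{\phi})}\vartheta_e\le\vol(\supp(\hat{x}))$. The reverse relation, which is exactly what promotes this to the claimed equality, is where I expect the main obstacle to lie: it requires collapsing, inside each $\vartheta_e$-weighted term, the per-edge incidence count $|\{v\in e:\hat{x}_v>0\}|$ onto the activity indicator $\one[f_e(\hat{x})>0]$. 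My plan would be to establish the boundary structure of the optimal routing — that mass moves monotonically along the potential $\hat{x}$, so active hyperedges lie on the frontier of the active region — by running the argument edge by edge on the decreasing level-set groups $A_1,\dots,A_m$ of Lemma~\ref{lem:maximizers}, using the tightness pattern $m_v=d_v(1+\sigma\hat{z}_v)$ on the active groups and the prefix-saturation of $\hat{\rho}_e$ on the remaining groups, so that every $\vartheta_e$ is accounted for exactly once on both sides. Pushing this accounting through a general submodular base polytope, rather than only the unit or cardinality-based cases where the saturation pattern is rigid, is the step I anticipate will require the most care.
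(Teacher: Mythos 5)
Everything you actually carry out is correct, and it is essentially the paper's own proof in lightly rearranged form. The $\sigma>0$ claim via $\hat{z}_v^p=\hat{x}_v^q$ and the direction $\sum_{e\in\supp(\hat{\phi})}\vartheta_e\le\vol(\supp(\hat{x}))$ (every active hyperedge meets an active node, charged through $d_v=\sum_{e\ni v}\vartheta_e$) are verbatim the paper's steps. For $\vol(\supp(\hat{x}))\le\|\Delta\|_1$, the paper sums the dual stationarity identity over $N=\supp(\hat{x})$ and evaluates $\hat{\rho}_e(N)$ by Lemma~\ref{lem:maximizers} ($w_e(N)\ge0$ on $\partial N$, $0$ on hyperedges inside $N$), obtaining $\Delta(N)-\vol(N)\ge0$. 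Your route --- tightness $m_v=d_v(1+\sigma\hat{z}_v)\ge d_v$ on $N$ by complementary slackness, global conservation $\sum_v m_v=\|\Delta\|_1$ from $\hat{r}_e^T\one_e=0$, and $m(Z)\ge\Delta(Z)\ge0$ from the prefix saturation $\hat{\rho}_e(V\setminus Z)=w_e(V\setminus Z)$, hence $\hat{\rho}_e(Z)\le0$ --- is the same computation summed over the complement $Z$ instead of $N$; the two are equivalent applications of the KKT conditions and Lemma~\ref{lem:maximizers}.

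The genuine problem is your final paragraph. The ``reverse relation'' you defer to the end is a phantom: the ``$=$'' in the restated lemma is a typo in the appendix, and what the paper claims and proves is only the inequality --- its proof opens with ``To see the first inequality,'' and the main-text Lemma~2 states $|\supp(\hat{\phi})|\le\vol(\supp(\hat{x}))$. Moreover the equality is false in general, so no amount of care with saturation patterns in general base polytopes can rescue the plan you sketch. Concretely, take $V=\{u,v\}$, a single hyperedge $e=\{u,v\}$ with unit cut-cost and $\vartheta_e=1$ (so $d_u=d_v=1$), $p=q=2$, $\Delta=(\delta,0)$ with $\delta>2$, and small $\sigma>0$. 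Here $f_e(x)=|x_u-x_v|$, and the dual stationarity conditions give $\hat{x}_u-\hat{x}_v=\delta/(2+\sigma)>0$ and $\hat{x}_u+\hat{x}_v=(\delta-2)/\sigma>0$, so both coordinates are strictly positive: $\vol(\supp(\hat{x}))=2$ while $\sum_{e\in\supp(\hat{\phi})}\vartheta_e=1$. The structural reason is the one your own accounting exposes: an active hyperedge generically has \emph{all} of its incident nodes active (mass pushed across it raises the potential at the receiving end), so the per-edge incidence count $|\{v\in e:\hat{x}_v>0\}|$ exceeds the indicator $\one[f_e(\hat{x})>0]$ whenever $|e|\ge2$ is touched. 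Delete the last paragraph: your ``immediate'' direction is the entire first claim, and with that your proposal is complete and coincides with the paper's proof.
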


\begin{proof}
To see the first inequality, note that if $\hat{x}_v = 0$ for every $v \in e$ for some $e$, then $f_e(\hat{x}) = 0$. By Lemma~\ref{lem:conjugate}, this means $\hat{\phi}_e = 0$. Thus, $\hat{\phi}_e \neq 0$ only if there is some $v \in e$ such that $\hat{x}_v \neq 0$. Therefore, we have that
\[
	\sum_{e \in \supp(\hat{\phi})}\vartheta_e \le \sum_{v \in \supp(\hat{x})} \sum_{e \in E : v \in e} \vartheta_e =  \sum_{v \in \supp(\hat{x})} d_v = \vol(\supp(\hat{x})).
\]
To see the last inequality, note that, by the first order optimality condition of \eqref{eq:dualLq}, if $\hat{x}_v \neq 0$ then we must have
\begin{equation}
\label{eq:dualstationarity}
	\Delta_v - d_v = \sum_{e \in E}\vartheta_ef_e(\hat{x})^{q-1}\hat{\rho}_{e,v} + \sigma d_v\hat{x}_v^{q-1}, ~~ \mbox{for some} ~~ \hat{\rho}_e \in \partial f_e(\hat{x}) = \argmax_{\rho_e \in B_e} \rho_e^T\hat{x}.
\end{equation}
Denote $N := \supp(\hat{x})$ and $E[N] := \{e \in E ~|~ v \in N ~\mbox{for all}~ v \in e\}$. Note that $E[N] \cap \partial N = \emptyset$, and $E[N] \cup \partial N = \{e\in E ~|~ v \in N ~\mbox{for some}~v \in e\}$, that is,  $E[N] \cup \partial N$ contain all hyperedges that are incident to some node in $N$. Moreover, we have that for any $\hat{\rho}_e \in \argmax_{\rho_e \in B_e} \rho_e^T\hat{x}$,
\[
\sum_{v \in N} \hat{\rho}_{e,v} = \hat{\rho}_e(N) = \left\{
\begin{array}{ll}
w_e(N), & \mbox{if}~ e \in \partial N,\\
0, & \mbox{if}~ e \in E[N],
\end{array}
\right.
\]
where $\hat{\rho}_e(N) = w_e(N)$ for $e \in \partial N$ follows from Lemma~\ref{lem:maximizers}, since $\hat{x}_v > 0 $ for $v \in N$ and $\hat{x}_v = 0$ for $v \not\in N$. The equality $\hat{\rho}_e(N) = 0$ for $e \in E[N]$ follows from $\hat{\rho}_e(N) = \hat{\rho}_e(e) = 0$ because $e \subseteq N$ and $\hat{\rho}_{e,v} = 0$ for all $v \not\in e$.

Taking sums over $v \in N$ on both sides of equation~\eqref{eq:dualstationarity} we obtain
\[
\begin{split}
\Delta(N) - \vol(N)~
&=~\sum_{v \in N}\sum_{e \in E} \vartheta_e f_e(\hat{x})^{q-1} \hat{\rho}_{e,v} + \sum_{v \in N} \sigma d_v\hat{x}_v^{q-1}\\
&=~\sum_{v \in N}\sum_{e \in E[N]} \vartheta_e f_e(\hat{x})^{q-1} \hat{\rho}_{e,v} + \sum_{v \in N}\sum_{e \in \partial N} \vartheta_e f_e(\hat{x})^{q-1} \hat{\rho}_{e,v} + \sum_{v \in N} \sigma d_v\hat{x}_v^{q-1}\\
&=~\sum_{e \in E[N]}\vartheta_e f_e(\hat{x})^{q-1} \sum_{v \in N} \hat{\rho}_{e,v} + \sum_{e \in \partial N}\vartheta_e f_e(\hat{x})^{q-1} \sum_{v \in N} \hat{\rho}_{e,v} + \sum_{v \in N} \sigma d_v\hat{x}_v^{q-1}\\
&=~ 0 + \sum_{e \in \partial N}\vartheta_e f_e(\hat{x})^{q-1}w_e(N) + \sum_{v \in N} \sigma d_v\hat{x}_v^{q-1}\\
&\ge~ 0.
\end{split}
\]
The second equality follows from $\hat{\rho}_{e,v} = 0$ for all $v \not\in e$. This proves $\vol(\supp(\hat{x})) \le \Delta(\supp(\hat{x})) \le \|\Delta\|_1$. 

Finally, if $\sigma > 0$, then $\vol(\supp(\hat{z})) = \vol(\supp(\hat{x}))$ follows from Lemma~\ref{lem:conjugate} that $\hat{z}^p = \hat{x}^q$ for all $v \in V$. 
\end{proof}

The following inequality is a special case of H\"{o}lder's inequality for degree-weighted norms. It will become useful later.
\begin{lemma}
\label{lem:holder}
For $x \in \RR^{|V|}$ and $p > 1$ we have that
\[
	\Bigg(\sum_{v \in V} d_v |x_v|\Bigg)^p \le \vol(\supp(x))^{p-1} \sum_{v \in V} d_v |x_v|^p.
\]
\end{lemma}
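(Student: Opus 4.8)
The plan is to obtain the stated inequality as a direct instance of H\"{o}lder's inequality, where the only real decision is the correct way to factor each summand. Let $q := p/(p-1)$ be the conjugate exponent of $p$, so that $1/p + 1/q = 1$; the hypothesis $p > 1$ is exactly what makes $q$ a well-defined finite exponent. Since every term with $x_v = 0$ contributes nothing to either side, I would restrict all sums to $\supp(x)$ from the start. The crucial step is then to write each summand as the product $d_v|x_v| = \big(d_v^{1/p}|x_v|\big)\cdot d_v^{1/q}$, which is valid because $d_v^{1/p}\cdot d_v^{1/q} = d_v$.

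Applying H\"{o}lder's inequality to this factorization over $v \in \supp(x)$ with exponents $p$ and $q$ gives
\[
	\sum_{v \in \supp(x)} d_v|x_v| \le \Bigg(\sum_{v \in \supp(x)} d_v|x_v|^p\Bigg)^{1/p}\Bigg(\sum_{v \in \supp(x)} d_v\Bigg)^{1/q}.
\]
The first factor on the right equals $\big(\sum_{v \in V} d_v|x_v|^p\big)^{1/p}$, since the omitted terms vanish, and the second factor is precisely $\vol(\supp(x))^{1/q}$ by the definition of volume. Raising both sides to the $p$-th power then yields
\[
	\Bigg(\sum_{v \in V} d_v|x_v|\Bigg)^p \le \vol(\supp(x))^{p/q}\sum_{v \in V} d_v|x_v|^p.
\]

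It remains only to simplify the exponent: since $1/q = 1 - 1/p$, we have $p/q = p - 1$, which is exactly the claimed bound. I do not expect any genuine obstacle here; the two points that warrant a moment's care are confirming the arithmetic $p/q = p-1$ and checking that restricting the sums to $\supp(x)$ is harmless, both of which are immediate. The entire content of the lemma is the choice of the factorization that turns a degree-weighted $\ell_1$-type sum into the product of a degree-weighted $\ell_p$ norm and a power of the support volume.
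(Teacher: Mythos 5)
Your proof is correct and follows exactly the paper's argument: the same factorization $d_v|x_v| = \bigl(d_v^{1/p}|x_v|\bigr)\cdot d_v^{1/q}$ restricted to $\supp(x)$, the same application of H\"{o}lder's inequality with conjugate exponents $p$ and $q = p/(p-1)$, and the same final step of raising to the $p$-th power using $p/q = p-1$. There is nothing to add or correct.
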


\begin{proof}
Let $q = p/(p-1)$. Apply H\"{o}lder's inequality we have
\[
\begin{split}
\sum_{v \in V} d_v|x_v| = \sum_{v \in \supp(x)} |d_v^{1/q}| |d_v^{1/p}x_v| 
&\le \Bigg(\sum_{v \in \supp(x)} d_v\Bigg)^{1/q} \Bigg(\sum_{v \in \supp(x)}d_v |x_v|^p\Bigg)^{1/p} \\
&= \vol(\supp(x))^{1/q} \Bigg(\sum_{v \in V}d_v |x_v|^p\Bigg)^{1/p}.
\end{split}
\]
\end{proof}

\begin{lemma}[Lemma I.2 in \cite{LM18}]
\label{lem:rayleigh}
For any $x \in \RR^{|V|}_+ \setminus \{0\}$ and $q \ge 1$, one has
\[
    \frac{\sum_{e \in E}\vartheta_ef_e(x)^q}{\sum_{v \in V}d_vx_v^q} \ge \frac{c(x)^q}{q^q},
\]
where
\[
    c(x) := \min_{h \ge 0}\frac{\vol(\partial\{v\in V | x_v^q > h\})}{\vol(\{v\in V | x_v^q > h\})} = \min_{h \ge 0}\frac{\vol(\partial\{v\in V | x_v > h\})}{\vol(\{v\in V | x_v > h\})}.
\]
\end{lemma}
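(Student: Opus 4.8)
The plan is to reduce the general-$q$ estimate to the base case $q=1$ (a ``co-area''/Cheeger-type inequality) and then transfer it to arbitrary $q\ge 1$ by a power substitution combined with a H\"older estimate. The single tool driving everything is the classical layer-cake (Choquet integral) representation of the Lov\'asz extension: for $x\in\RR^{|V|}_+$ and any $e$,
\[
f_e(x)=\int_0^\infty w_e(S_t(x))\,dt,\qquad S_t(x):=\{v\in V\mid x_v>t\}.
\]
Summing this against $\vartheta_e$ and using $\vol(\partial S)=\sum_{e\in E}\vartheta_e w_e(S)$ turns edge quantities into an integral of cut-sizes of sweep sets, which is exactly where the conductance constant $c(x)$ can enter.

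First I would establish the key intermediate inequality
\[
\sum_{e\in E}\vartheta_e f_e(x^q)\ \ge\ c(x)\sum_{v\in V}d_v x_v^q,
\]
where $x^q$ denotes the vector with entries $x_v^q$. Applying the layer-cake formula to $x^q$ and observing that $\{v\mid x_v^q>t\}=S_{t^{1/q}}(x)$ (because $t\mapsto t^{1/q}$ is increasing, which is also why the two displayed formulas for $c(x)$ agree), the left side equals $\int_0^\infty \vol(\partial S_{t^{1/q}}(x))\,dt$. The definition of $c(x)$ supplies the pointwise bound $\vol(\partial S_h)\ge c(x)\vol(S_h)$ for every threshold $h$, and integrating $\vol(S_{t^{1/q}}(x))$ back in $t$ recovers $\sum_v d_v x_v^q$, giving the claim.

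The harder direction is to bound the same quantity \emph{from above} by the target Rayleigh numerator. Here I would prove a per-hyperedge ``mean-value'' estimate
\[
f_e(x^q)\ \le\ q\,\big(\max_{v\in e}x_v\big)^{q-1}f_e(x).
\]
This follows from the substitution $t=s^q$ in the layer-cake integral for $x^q$, which gives $f_e(x^q)=q\int_0^\infty s^{q-1}w_e(S_s(x))\,ds$, and then from the fact that the integrand is supported on $s\le \max_{v\in e}x_v$ (since $S_s\cap e=\emptyset$ beyond that value), so $s^{q-1}$ may be replaced by its maximum over the support. Summing against $\vartheta_e$ and applying H\"older over $E$ with exponents $q$ and $q/(q-1)$ peels off a factor $(\sum_e\vartheta_e f_e(x)^q)^{1/q}$ and leaves $(\sum_e\vartheta_e(\max_{v\in e}x_v)^q)^{(q-1)/q}$; bounding $(\max_{v\in e}x_v)^q\le\sum_{v\in e}x_v^q$ and using $d_v=\sum_{e\ni v}\vartheta_e$ collapses this residual factor into $(\sum_v d_v x_v^q)^{(q-1)/q}$.

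Finally I would chain the two displays: the lower bound reads $c(x)\sum_v d_v x_v^q\le \sum_e\vartheta_e f_e(x^q)$, and the upper bound reads $\sum_e\vartheta_e f_e(x^q)\le q(\sum_e\vartheta_e f_e(x)^q)^{1/q}(\sum_v d_v x_v^q)^{(q-1)/q}$; dividing through by $(\sum_v d_v x_v^q)^{(q-1)/q}$ (nonzero since $x\neq 0$) and raising to the $q$-th power rearranges into $\sum_e\vartheta_e f_e(x)^q/\sum_v d_v x_v^q\ge c(x)^q/q^q$. The main obstacle is the middle step: unlike the graph case, where $f_e(x)=|x_u-x_v|$ makes the mean-value inequality elementary, here $s\mapsto w_e(S_s)$ need not be monotone, so the argument must exploit only that its support lies in $[0,\max_{v\in e}x_v]$, together with submodularity entering through the layer-cake identity; lining up the powers and the H\"older exponents so that the edge-indexed and node-indexed sums recombine correctly is the delicate bookkeeping.
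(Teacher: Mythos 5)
Your proof is correct, and every step checks out: the layer-cake identity $f_e(y)=\int_0^\infty w_e(\{v \mid y_v>t\})\,dt$ for $y\ge 0$ is valid here because $w_e(e)=0$ kills the negative-range term; the substitution $t=s^q$ together with the support observation $w_e(S_s)=0$ for $s\ge\max_{v\in e}x_v$ yields $f_e(x^q)\le q\,(\max_{v\in e}x_v)^{q-1}f_e(x)$ without needing any monotonicity of $s\mapsto w_e(S_s)$; and the H\"older step with exponents $q$ and $q/(q-1)$, combined with $(\max_{v\in e}x_v)^q\le\sum_{v\in e}x_v^q$ and $d_v=\sum_{e\ni v}\vartheta_e$, correctly collapses the residual factor so that dividing by $\left(\sum_v d_vx_v^q\right)^{(q-1)/q}$ and raising to the $q$-th power gives the stated bound. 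Note that the paper itself contains no proof of this lemma --- it is imported verbatim as Lemma I.2 of \cite{LM18} --- and your argument follows essentially the same route as the proof in that cited source (co-area lower bound $\sum_e\vartheta_e f_e(x^q)\ge c(x)\sum_v d_vx_v^q$, a per-edge mean-value estimate, then H\"older), so while your write-up is a genuine self-contained proof rather than a citation, there is nothing methodologically divergent to report.
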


Recall that the objective function of our primal diffusion problem~\eqref{eq:primalLp} consists of two parts. The first part is $\sum_{e \in E}\vartheta_e \phi_e^p$ and it penalizes the cost of flow routing, the second part is $\sum_{v \in V} d_v z_v^p$ and it penalizes the cost of excess mass. An immediate consequence of Lemma~\ref{lem:rayleigh} is the inequality in Lemma~\ref{lem:lower} that relates the cost of optimal flow routing $\sum_{e \in E}\vartheta_e \hat{\phi}_e^p$ and the cost of excess mass $\sum_{v \in V} d_v\hat{z}_v^p$ at optimality.

For $h>0$, recall that the sweep sets are defined as $S_h := \{v \in V | \hat{x}_v \ge h\}$. 

Let $\hat{h} \in \argmin_{h > 0} \Phi(S_h)$ and denote $\hat{S} = S_{\hat{h}}$. That is, $\hat{S} = S_h$ for some $h > 0$ and $\Phi(\hat{S}) \le \Phi(S_h)$ for all $h > 0$. 

\begin{lemma}
\label{lem:lower}
For $p > 1$ and $q = p/(p-1)$ we have that
\[
	 \sum_{e \in E}\vartheta_e \hat{\phi}_e^p \ge \left(\frac{\Phi(\hat{S})}{q}\right)^q\sum_{v \in V} d_v \hat{z}_v^p.
\]
\end{lemma}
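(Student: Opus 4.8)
The plan is to translate both sides of the claimed inequality into quantities that depend only on the dual optimum $\hat{x}$, and then read off the bound from the generalized Rayleigh quotient in Lemma~\ref{lem:rayleigh}. Concretely, Lemma~\ref{lem:conjugate} gives $\hat{\phi}_e^p = f_e(\hat{x})^q$ for every $e$ and, when $\sigma > 0$, $\hat{z}_v^p = \hat{x}_v^q$ for every $v$, so that $\sum_{e} \vartheta_e \hat{\phi}_e^p = \sum_{e} \vartheta_e f_e(\hat{x})^q$ and $\sum_{v} d_v \hat{z}_v^p = \sum_{v} d_v \hat{x}_v^q$. The entire statement therefore reduces to showing $\sum_{e} \vartheta_e f_e(\hat{x})^q \ge (\Phi(\hat{S})/q)^q \sum_{v} d_v \hat{x}_v^q$, which is essentially the content of Lemma~\ref{lem:rayleigh}, provided we can identify the quantity $c(\hat{x})$ appearing there with the sweep-cut conductance $\Phi(\hat{S})$.

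First I would dispose of the degenerate cases. If $\sigma = 0$ then $\hat{z} = 0$ by convention, the right-hand side vanishes, and the inequality holds because $\vartheta_e, \hat{\phi}_e^p \ge 0$. If $\hat{x} = 0$ then $f_e(\hat{x}) = 0$ for all $e$ and both sides are zero. Otherwise $\hat{x} \in \RR^{|V|}_+ \setminus \{0\}$ and Lemma~\ref{lem:rayleigh} applies directly, yielding $\sum_{e} \vartheta_e f_e(\hat{x})^q \ge (c(\hat{x})/q)^q \sum_{v} d_v \hat{x}_v^q$. Substituting the conjugate identities then gives $\sum_{e} \vartheta_e \hat{\phi}_e^p \ge (c(\hat{x})/q)^q \sum_{v} d_v \hat{z}_v^p$, so it remains only to argue $c(\hat{x}) \ge \Phi(\hat{S})$.

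The main obstacle is precisely this last comparison, because the ratio defining $c(\hat{x})$ divides by the full volume $\vol(\{v : \hat{x}_v > h\})$, whereas $\Phi$ divides by $\min\{\vol(\cdot), \vol(V \setminus \cdot)\}$; for a fixed set the former denominator is never smaller, so a naive comparison only gives $c(\hat{x}) \le \Phi(\hat{S})$, the wrong direction. The observation that saves the argument is locality: since $f_e(\hat{x}) = 0$ whenever $\hat{x}$ vanishes on $e$, every super-level set $\{v : \hat{x}_v > h\}$ with $h \ge 0$ lies inside $\supp(\hat{x})$, and by Lemma~\ref{lem:support_a} we have $\vol(\supp(\hat{x})) \le \|\Delta\|_1$. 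In the local regime this is at most $\tfrac{1}{2}\vol(V)$, so every such set $U$ satisfies $\vol(U) \le \vol(V \setminus U)$, whence $\vol(U) = \min\{\vol(U), \vol(V \setminus U)\}$; the two denominators coincide on exactly the sets that matter. Since $\hat{x}$ takes finitely many distinct values, the strict super-level sets $\{v : \hat{x}_v > h\}$ and the sweep sets $S_h = \{v : \hat{x}_v \ge h\}$ range over the same finite family of super-level sets of $\hat{x}$, so the minimization defining $c(\hat{x})$ and the minimization defining $\Phi(\hat{S}) = \min_{h>0}\Phi(S_h)$ are taken over the same sets with the same objective, giving $c(\hat{x}) = \Phi(\hat{S})$. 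Combining this equality with the Rayleigh bound completes the proof.

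I would take care to surface the volume condition $\vol(\supp(\hat{x})) \le \tfrac{1}{2}\vol(V)$ explicitly, since this is exactly the point where locality of the diffusion is invoked, and without it the identification $c(\hat{x}) = \Phi(\hat{S})$ can fail and the lemma becomes strictly stronger than what Lemma~\ref{lem:rayleigh} delivers.
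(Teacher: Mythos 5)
Your proposal is correct and takes essentially the same route as the paper: Lemma~\ref{lem:conjugate} converts both sides into the dual quantities $\sum_{e}\vartheta_e f_e(\hat{x})^q$ and $\sum_{v}d_v\hat{x}_v^q$, and Lemma~\ref{lem:rayleigh} then gives the bound, which is exactly the paper's two-line proof. The one place you go beyond the paper is the identification $c(\hat{x}) = \Phi(\hat{S})$: the paper invokes Lemma~\ref{lem:rayleigh} without comment, implicitly equating $\Phi(S_h)$ with $\vol(\partial S_h)/\vol(S_h)$ (it does the same silently at the end of the proof of Theorem~\ref{thm:conductance_lp}), which is only valid when every sweep set has at most half the total volume --- precisely the locality condition $\vol(\supp(\hat{x})) \le \|\Delta\|_1 \le \tfrac{1}{2}\vol(V)$ that you rightly surface, so your extra care closes a gap the paper elides rather than diverging from it, and your separate treatment of $\sigma = 0$ matches the paper's convention $\hat{z} = 0$.
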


\begin{proof}
By Lemma~\ref{lem:conjugate},
\[
	\sum_{e \in E}\vartheta_e \hat\phi_e^p = \sum_{e \in E}\vartheta_e f_e(\hat{x})^q \quad\mbox{and}\quad \sum_{v \in V} d_v \hat{z}_v^p = \sum_{v \in V} d_v \hat{x}_v^q,
\]
and the result follows from applying Lemma~\ref{lem:rayleigh}.
\end{proof}

Given a vector $a \in \RR^{|V|}$ and a set $S \subseteq V$, recall that we write $a(S) = \sum_{v \in S} a_v$. This actually defines a modular set-function $a$ taking input on subsets of $V$. The Lov\'{a}sz extension of modular function $a$ is simply $f(x) = a^Tx$~\cite{Bach2011a}. Since all modular functions are also submodular, we arrive at the following lemma that follows from a classical property of the Choquet integral/Lov\'{a}sz extension.

\begin{lemma}
\label{lem:lovasz}
We have that  
\[
\Delta^T\hat{x} = \int_{h=0}^{+\infty} \Delta(S_h) dh, \quad
d^T\hat{x} = \int_{h=0}^{+\infty} \vol(S_h) dh, \quad
f_e(\hat{x}) = \int_{h=0}^{+\infty} w_e(S_h) dh.
\]
\end{lemma}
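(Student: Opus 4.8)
The statement to prove collects three identities, all instances of the same phenomenon: the Lovász extension of a submodular (or modular) set function evaluated at a nonnegative vector equals the integral of that set function over the superlevel sets of the vector. Concretely I want
\[
\Delta^T\hat{x} = \int_{0}^{+\infty} \Delta(S_h)\, dh,\quad
d^T\hat{x} = \int_{0}^{+\infty} \vol(S_h)\, dh,\quad
f_e(\hat{x}) = \int_{0}^{+\infty} w_e(S_h)\, dh,
\]
where $S_h = \{v : \hat{x}_v \ge h\}$.

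The plan is to prove the single master identity and then read off all three as special cases. The master claim is that for any set function $g : 2^V \to \RR$ with $g(\emptyset)=0$ whose Lovász extension is $\hat g(x) := \int_0^{+\infty} g(\{v : x_v \ge h\})\,dh$ (valid for $x \in \RR^{|V|}_+$), and for the modular functions $\Delta$ and $d$ whose Lovász extensions are the linear maps $x \mapsto \Delta^T x$ and $x \mapsto d^T x$, the layer-cake formula holds. For the third identity I additionally need that the support function $f_e(x) = \max_{\rho_e \in B_e} \rho_e^T x$ of the base polytope $B_e$ coincides with the Lovász extension of $w_e$ in exactly this integral form; this is the classical Choquet-integral characterization of the Lovász extension cited after \eqref{eq:dualL2} and again in the statement, so I may invoke it directly.

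\textbf{Key steps, in order.} First I would establish the layer-cake identity for a single nonnegative vector. Fix $x = \hat{x} \in \RR^{|V|}_+$ and let its distinct positive values be $a_1 > a_2 > \cdots > a_m > 0$, attained on disjoint sets $A_1,\dots,A_m$, with $a_{m+1} := 0$. For a threshold $h \in (a_{i+1}, a_i]$ one has $\{v : \hat{x}_v \ge h\} = A_1 \cup \cdots \cup A_i =: T_i$, which is constant on each such interval; for $h > a_1$ the superlevel set is empty and contributes nothing. Hence for any set function $g$ with $g(\emptyset)=0$,
\[
\int_{0}^{+\infty} g(S_h)\, dh
= \sum_{i=1}^{m} (a_i - a_{i+1})\, g(T_i).
\]
This finite-sum expression is precisely the definition of the Lovász extension $\hat g(\hat x)$ via the sorted (Choquet) decomposition. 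Second, I specialize $g$. For the \emph{modular} functions $\Delta$ and $d$, the Lovász extension is linear, so $\hat{\Delta}(\hat{x}) = \Delta^T \hat{x}$ and $\hat{d}(\hat{x}) = d^T \hat{x}$; this is the remark in the excerpt that the Lovász extension of a modular function $a$ is $f(x) = a^T x$. Equating with the finite-sum/integral form above yields the first two identities. For $g = w_e$, which is submodular with $w_e(\emptyset)=0$, I invoke the identity $f_e(\hat{x}) = \hat{w}_e(\hat{x})$ — that the support function of the base polytope equals the Lovász extension — so the integral form gives the third identity.

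\textbf{Where the work concentrates.} The only genuinely substantive point is the identification of the support function $f_e$ with the Lovász extension of $w_e$, i.e. that $\max_{\rho_e \in B_e} \rho_e^T \hat{x} = \sum_{i=1}^m (a_i - a_{i+1}) w_e(T_i)$. This is exactly the content of Lemma~\ref{lem:maximizers} (Proposition 4.2 in \cite{Bach2011a}): a maximizer $\hat\rho_e$ satisfies $\hat\rho_e(T_i) = w_e(T_i)$ for every prefix $T_i = A_1 \cup \cdots \cup A_i$, and telescoping $\hat\rho_e^T \hat{x} = \sum_i a_i \,\hat\rho_e(A_i)$ using Abel summation together with these prefix equalities reproduces $\sum_i (a_i - a_{i+1}) w_e(T_i)$. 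Everything else is bookkeeping: the partition of $(0,\infty)$ into level intervals and the linearity of the modular extensions are routine. I would therefore present the abstract layer-cake step once, note that it is definitional for modular $\Delta, d$, and for $w_e$ cite Lemma~\ref{lem:maximizers} to close the remaining identity, rather than recomputing the sorted decomposition from scratch.
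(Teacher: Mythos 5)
Your proposal is correct and follows essentially the same route as the paper's proof, which simply observes that $\Delta$ and $d$ are modular (with linear Lov\'{a}sz extensions $\Delta^Tx$, $d^Tx$), that $f_e$ is the Lov\'{a}sz extension of $w_e$, and then cites the Choquet-integral representation (Proposition 3.1 in \cite{Bach2011a}). The only difference is that you re-derive the cited facts self-containedly — the layer-cake decomposition over level intervals, and the identity $f_e(\hat{x})=\sum_i(a_i-a_{i+1})w_e(T_i)$ via Lemma~\ref{lem:maximizers} with Abel summation — which is sound but not a distinct argument.
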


\begin{proof}
Recall that, by definition, $\vol(S) = d(S)$ where $d$ is the degree vector. $\Delta$ and $d$ are modular functions on $2^V$ and $w_e$ is a submodular function on $2^V$. The Lov\'{a}sz extension of $\Delta$ and $d$ are $\Delta^Tx$ and $d^Tx$, respectively. The Lov\'{a}sz extension of $w_e$ is $f_e(x)$. The results then follow immediately from representing the Lov\'{a}sz extensions using Choquet integrals. See, e.g., Proposition 3.1 in \cite{Bach2011a}.
\end{proof}

\subsection{Proof of Theorem 1 in the main paper}

We restate the theorem below with respect to the general formulations \eqref{eq:primalLp} and \eqref{eq:dualLq} for any $p \ge 2$ and $q = p/(p-1)$.

Let us recall that the sweep sets are defined as $S_h := \{v \in V | \hat{x}_v \ge h\}$. 

\begin{theorem}
\label{thm:conductance_lp}
Under Assumptions~\ref{assum:overlap_a},~\ref{assum:delta_a},~\ref{assum:sigma_a}, for some $h>0$ we have that
\[
	\Phi(S_h) \le O\left(\frac{\Phi(C)^{1/q}}{\alpha\beta}\right).
\]
\end{theorem}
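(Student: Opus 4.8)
The plan is to control the generalized Rayleigh quotient $N_e/N_z$, where $N_e := \sum_{e \in E}\vartheta_e f_e(\hat{x})^q$ and $N_z := \sum_{v \in V}d_v\hat{x}_v^q$, and then convert it into a conductance bound through Lemma~\ref{lem:lower}, which already packages the Rayleigh inequality of Lemma~\ref{lem:rayleigh} into the form $N_e \ge (\Phi(\hat{S})/q)^q N_z$, i.e.\ $\Phi(\hat{S})^q \le q^q\, N_e/N_z$. So the whole theorem reduces to proving $N_e/N_z \le O(\Phi(C)/(\alpha\beta)^{\Theta(1)})$. The first ingredient is an energy identity: using the conjugate relations $\hat{\phi}_e^p = f_e(\hat{x})^q$ and $\hat{z}_v^p = \hat{x}_v^q$ from Lemma~\ref{lem:conjugate}, equating the primal objective of \eqref{eq:primalLp} with the dual objective of \eqref{eq:dualLq} (strong duality), and using $1/p+1/q=1$, I obtain
\[
	N_e + \sigma N_z = (\Delta - d)^T\hat{x}.
\]

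Next I upper bound the ratio in terms of $1/N_z$. Since $\Delta_v = \delta d_v$ on $S$ and $0$ elsewhere while $\hat{x} \ge 0$, dropping the nonpositive off-$S$ terms gives $(\Delta-d)^T\hat{x} \le \delta\sum_{v\in S}d_v\hat{x}_v$, and a degree-weighted Hölder step (cf.\ Lemma~\ref{lem:holder}) bounds this by $\delta\,\vol(S)^{1/p}N_z^{1/q}$. Combined with the energy identity (discarding $\sigma N_z \ge 0$) this yields $N_e \le \delta\,\vol(S)^{1/p}N_z^{1/q}$, hence $N_e/N_z \le \delta\,(\vol(S)/N_z)^{1/p}$. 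It therefore remains to prove a lower bound on $N_z$.

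This lower bound is the crux. Summing the primal capacity constraint over $C$, using $\hat{\rho}_e(C) \le w_e(C)$ for the optimal base-polytope vectors (Lemma~\ref{lem:maximizers}) and $w_e(C)=0$ for $e\notin\partial C$, and invoking Assumption~\ref{assum:delta_a} ($\Delta(C)\ge 3\vol(C)$), I get $2\vol(C) \le \sum_{e\in\partial C}\vartheta_e\hat{\phi}_e w_e(C) + \sigma\sum_{v\in C}d_v\hat{z}_v$. Writing $\hat{\phi}_e = f_e(\hat{x})^{q-1}$ and $\hat{z}_v = \hat{x}_v^{q-1}$ (Lemma~\ref{lem:conjugate}), applying Hölder to each sum, and using $w_e(C)\in[0,1]$ so that $\sum_{e\in\partial C}\vartheta_e w_e(C)^q \le \vol(\partial C) \le \Phi(C)\vol(C)$, gives
\[
	2\vol(C) \le N_e^{1/p}\big(\Phi(C)\vol(C)\big)^{1/q} + \sigma\, N_z^{1/p}\vol(C)^{1/q}.
\]
Substituting $N_e \le \delta\,\vol(S)^{1/p}N_z^{1/q}$ turns this into a relation $2\vol(C) \le A\,N_z^{1/(pq)} + B\,N_z^{1/p}$ for explicit constants $A,B$; since at least one summand is $\ge \vol(C)$, a two-case split produces a lower bound on $N_z$ in each regime. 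Plugging that back into $N_e/N_z \le \delta\,(\vol(S)/N_z)^{1/p}$ and simplifying with $\vol(S)\le\vol(C)/\beta$ (Assumption~\ref{assum:overlap_a}), $\delta = 3/\alpha$, and $\sigma \le \beta\Phi(C)/3$ (Assumption~\ref{assum:sigma_a}), the volume factors cancel because the exponents conspire via $q/p = q-1$, leaving $N_e/N_z \le O(\Phi(C)/(\alpha^q\beta^{q-1}))$. Lemma~\ref{lem:lower} then gives $\Phi(\hat{S}) \le q\,(N_e/N_z)^{1/q}$, and using $q\le 2$ together with $\beta^{(q-1)/q}\ge\beta$ yields $\Phi(\hat{S}) \le O(\Phi(C)^{1/q}/(\alpha\beta))$; as $\hat{S}$ is itself a sweep set $S_h$, the theorem follows.

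I expect the main obstacle to be the lower bound on $N_z$. The difficulty is that summing the capacity constraint over $C$ entangles the cut contribution (controlled through $\Phi(C)$ by the submodular costs $w_e(C)$) with the excess-mass contribution (controlled by $\sigma$), so the forced lower bound on $N_z$ naturally splits into two regimes, and one must verify that \emph{both} regimes, after the cancellations above, land inside the target $O(\Phi(C)^{1/q}/(\alpha\beta))$. The supporting delicate point is that the submodular cut term genuinely behaves like a conductance term, which relies on the normalization $w_e(C)\in[0,1]$ (so $w_e(C)^q\le w_e(C)$) and on the base-polytope characterization of Lemma~\ref{lem:maximizers} to justify $\hat{\rho}_e(C)\le w_e(C)$.
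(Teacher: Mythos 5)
Your proof is correct, but it takes a genuinely different route from the paper's. The paper argues in flow/cut style: it first lower-bounds the routing cost $\hat\nu=\sum_{e}\vartheta_e\hat\phi_e^p$ by $\vol(C)^p/\vol(\partial C)^{p-1}$ (Claim~\ref{claim:flows}, proved by explicitly solving the KKT system of a min-cost routing problem over $\partial C$ in the small-excess regime, and by a Rayleigh/H\"{o}lder/locality argument in the large-excess regime), then introduces edge lengths $\hat{l}(e)=\max(1/\vol(C)^{1/q},\,f_e(\hat{x})/\hat\nu^{1/q})$, bounds the length-weighted cut cost using the locality Lemma~\ref{lem:support_a} (Claim~\ref{claim:top}), and finally extracts the sweep cut by an averaging argument over the Choquet-integral representation (Lemma~\ref{lem:lovasz}). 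You instead run a spectral-style argument: bound the Rayleigh quotient $N_e/N_z$ of the dual solution and convert via the Cheeger-type Lemma~\ref{lem:lower}. Your route never touches the length construction, the integral averaging, or Lemma~\ref{lem:support_a} --- you sidestep the support bound by applying H\"{o}lder only over the seed set $S$, where $\Delta-d\le\delta d$ --- and it replaces the paper's KKT computation with a single H\"{o}lder application to the capacity constraint summed over $C$; indeed your case (i) bound $N_e\ge\vol(C)/\Phi(C)^{p-1}$ is exactly the paper's Claim~\ref{claim:flows} obtained more cheaply, and your exact energy identity $N_e+\sigma N_z=(\Delta-d)^T\hat{x}$ sharpens the paper's one-sided $(\Delta-d)^T\hat{x}\ge\hat\nu$. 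I checked the exponent bookkeeping: $(q-1)p=q$, $(p-1)(q-1)=1$, and $1/p+q/p^2=q-1$ make the volume factors cancel as you claim, and the two regimes give $N_e/N_z\le 3^q\Phi(C)/(\alpha^q\beta^{q-1})$ and $N_e/N_z\le\Phi(C)\beta^{1/q}/\alpha$ respectively, hence $\Phi(\hat{S})\le q(N_e/N_z)^{1/q}\le 6\,\Phi(C)^{1/q}/(\alpha\beta)$ --- a marginally better constant than the paper's $12$. Two cosmetic points: the inequality $\hat\rho_e(C)\le w_e(C)$ is immediate from membership in $B_e$, so citing Lemma~\ref{lem:maximizers} there is unnecessary; and, like the paper, you implicitly identify $\Phi(S_h)$ with $\vol(\partial S_h)/\vol(S_h)$ (i.e., assume the sweep sets are the smaller side of the cut), but that convention is already baked into Lemma~\ref{lem:lower} and Lemma~\ref{lem:rayleigh}, so you are entitled to it.
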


Recall that $\hat{S}$ is such that $\hat{S} = S_h$ for some $h > 0$ and $\Phi(\hat{S}) \le \Phi(S_h)$ for all $h > 0$. We will assume without loss of generality that $\Phi(C) \le (\Phi(\hat{S})/q)^q$, as otherwise $\Phi(\hat{S}) < q\Phi(C)^{1/q}$ and the statement in Theorem~\ref{thm:conductance_lp} already holds.

Denote $\hat{\nu} := \sum_{e \in E}\vartheta_e\hat{\phi}_e^p$, the cost of optimal flow routing. The following claim states that $\hat{\nu}$ must be large.

\begin{claim}
\label{claim:flows}
$\hat\nu \ge \vol(C)^p/\vol(\partial C)^{p-1}.$
\end{claim}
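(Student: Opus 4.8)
The plan is to exploit the fact that the source places far more mass on $C$ than $C$ can absorb, so a large amount of mass is forced to leave $C$, and the only conduit for this is the cut $\partial C$. First I would sum the primal capacity constraint $\Delta - \sum_{e}\vartheta_e \hat r_e \le d + \sigma D\hat z$ over the nodes $v \in C$, where $\hat r_e = \hat\phi_e\hat\rho_e$ with $\hat\rho_e \in B_e$. Using $\sum_{v\in C}d_v = \vol(C)$, $\sum_{v\in C}\Delta_v = \Delta(C)$, and $\sum_{v\in C}\hat r_e(v) = \hat r_e(C)$, this rearranges to
\[
\Delta(C) - \vol(C) - \sigma\sum_{v\in C}d_v\hat z_v \ \le\ \sum_{e\in E}\vartheta_e\hat r_e(C).
\]
The base-polytope constraint gives $\hat r_e(C) = \hat\phi_e\hat\rho_e(C) \le \hat\phi_e w_e(C)$, so the right-hand side is at most $\sum_e \vartheta_e\hat\phi_e w_e(C)$. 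Applying H\"older's inequality with exponents $p,q$ and then using $w_e(C)^q \le w_e(C)$ (valid since $w_e(C)\in[0,1]$ and $q\ge 1$) bounds this by
\[
\Big(\sum_e\vartheta_e\hat\phi_e^p\Big)^{1/p}\Big(\sum_e\vartheta_e w_e(C)^q\Big)^{1/q} \le \hat\nu^{1/p}\,\vol(\partial C)^{1/q},
\]
recalling $\vol(\partial C) = \sum_e\vartheta_e w_e(C)$.

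For the left-hand side, Assumption~\ref{assum:delta_a} gives $\Delta(C) - \vol(C) \ge 2\vol(C)$, so it remains to argue that the $\sigma$-excess term $\sigma\sum_{v\in C}d_v\hat z_v$ does not consume this surplus. This is the step I expect to be the main obstacle, since $\sigma>0$ couples the excess variable $\hat z$ into the conservation balance and I must control it using quantities I already understand. My plan is to chain together several earlier results: H\"older restricted to $C$ gives $\sum_{v\in C}d_v\hat z_v \le \vol(C)^{1/q}\big(\sum_{v\in C}d_v\hat z_v^p\big)^{1/p}$; extending the inner sum to all of $V$ and invoking Lemma~\ref{lem:lower} together with the standing reduction $\Phi(C)\le(\Phi(\hat S)/q)^q$ yields $\sum_{v\in V}d_v\hat z_v^p \le \hat\nu/\Phi(C)$; and finally Assumption~\ref{assum:sigma_a} gives $\sigma\le\Phi(C)/3$. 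Under the standard assumption that $C$ is the smaller side, so $\Phi(C)\vol(C)=\vol(\partial C)$, these combine (using $1-1/p = 1/q$) to
\[
\sigma\sum_{v\in C}d_v\hat z_v \ \le\ \tfrac{1}{3}\big(\Phi(C)\vol(C)\big)^{1/q}\hat\nu^{1/p} = \tfrac{1}{3}\vol(\partial C)^{1/q}\hat\nu^{1/p}.
\]

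Putting the pieces together, the first displayed inequality becomes $2\vol(C) \le \tfrac{4}{3}\hat\nu^{1/p}\vol(\partial C)^{1/q}$, hence $\vol(C)\le \hat\nu^{1/p}\vol(\partial C)^{1/q}$. Raising to the $p$-th power and using $p/q=p-1$ gives exactly $\hat\nu \ge \vol(C)^p/\vol(\partial C)^{p-1}$. Beyond the $\sigma$-term estimate, the only subtlety I would flag is the identification $\Phi(C)\vol(C)=\vol(\partial C)$, which relies on $\min\{\vol(C),\vol(V\setminus C)\}=\vol(C)$, i.e.\ on the target cluster occupying at most half the total volume; this is consistent with the local-clustering setting throughout the paper.
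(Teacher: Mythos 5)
Your proof is correct, but it takes a genuinely different route from the paper's. The paper proves Claim~\ref{claim:flows} by a dichotomy on the total excess $\sigma\sum_{v\in V}d_v\hat{z}_v$: if it is below $\vol(C)$, at least $\vol(C)$ units of mass must cross $\partial C$, and the minimum possible cost $\sum_{e\in\partial C}\vartheta_e\phi_e^p$ subject to $\sum_{e\in\partial C}\vartheta_e\phi_e w_e(C)\ge\vol(C)$ is computed exactly via KKT conditions; if it is at least $\vol(C)$, the bound follows from Lemma~\ref{lem:lower} (through the standing reduction $\Phi(C)\le(\Phi(\hat{S})/q)^q$), the degree-weighted H\"{o}lder inequality of Lemma~\ref{lem:holder}, and crucially the locality bound $\vol(\supp(\hat{z}))\le\|\Delta\|_1\le3\vol(C)/\beta$ of Lemma~\ref{lem:support_a}. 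You avoid the case split entirely: summing the capacity constraint over $C$ handles both sources of leakage in one inequality. Your H\"{o}lder step on the flow term is exactly the inequality whose extremal case the paper computes by hand (the paper's optimizer $\bar{\phi}_e\propto w_e(C)^{1/(p-1)}$ is the tightness case of that H\"{o}lder application), and your treatment of the excess term replaces the paper's support bound: you obtain the factor $\vol(C)^{1/q}$ directly from H\"{o}lder restricted to $C$, rather than from $\vol(\supp(\hat{z}))^{1/q}\le(3\vol(C)/\beta)^{1/q}$, so Lemma~\ref{lem:support_a} and the $\beta$-cancellation are not needed at all (you only use $\beta\le1$ in weakening Assumption~\ref{assum:sigma_a}). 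The payoff is a shorter, more elementary argument that even yields the slightly stronger constant $\hat{\nu}\ge(3/2)^p\vol(C)^p/\vol(\partial C)^{p-1}$. Two minor trade-offs: your unified argument always invokes the reduction $\Phi(C)\le(\Phi(\hat{S})/q)^q$, whereas the paper's small-excess branch does not need it (harmless, since that reduction is in force before the claim is stated); and your identification $\Phi(C)\vol(C)=\vol(\partial C)$ indeed presupposes $\vol(C)\le\vol(V\setminus C)$, but the paper makes the same implicit assumption (step $(iv)$ of its large-excess case and the final display of the theorem both use $\Phi(C)=\vol(\partial C)/\vol(C)$), so you are on equal footing there.
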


\begin{proof}
The proof of this claim follows from a case analysis on the total amount of excess mass $\sigma\sum_{v \in V}d_v\hat{z}_v$ at optimality. Intuitively, if the excess is small, then naturally there must be a large amount of flow in order to satisfy the primal constraint; if the excess is large, then Lemma~\ref{lem:lower} and Lemma~\ref{lem:holder} guarantee that flow is also large. We give details below.

Suppose that $\sigma \sum_{v \in V}d_v\hat{z}_v < \vol(C)$. Note that this also includes the case where $\sigma = 0$. By Assumption~\ref{assum:delta_a} there is at least $\Delta(C) \ge 3\vol(C)$ amount of source mass trapped in $C$ at the beginning. Moreover, the primal constraint enforces the nodes in $C$ can settle at most $\sum_{v\in C}(d_v + \sigma d_v \hat{z}_v) \le \vol(C) + \sum_{v \in V}\sigma d_v\hat{z}_v  < 2\vol(C)$ amount of mass. Therefore, the remaining at least $\vol(C)$ amount of mass needs to get out of $C$ using the hyperedges in $\partial C$. That is, the net amount of mass that moves from $C$ to $V\setminus C$ satisfies $\sum_{e \in \partial C}\vartheta_e \hat{r}_e(C) \ge \vol(C)$. We focus on the cost of $\hat{\phi}$ restricted to these hyperedges along. It is easy to see that
\begin{subequations}
\begin{align}
	\sum_{e \in \partial C} \vartheta_e \hat{\phi}_e^p ~
  	&\ge~\min_{\phi\in\RR^{|\partial C|}_+}~\sum_{e \in \partial C} \vartheta_e \phi_e^p ~~\mbox{subject to}~~\hat{r}_e \in \phi_eB_e, ~\forall e \in \partial C \label{eq:flowcut_a}\\
  	&\ge~\min_{\phi\in\RR^{|\partial C|}_+}~\sum_{e \in \partial C} \vartheta_e \phi_e^p ~~\mbox{subject to}~\sum_{e \in \partial C}\vartheta_e \hat{r}_e(C) \le \sum_{e \in \partial C} \vartheta_e \phi_e w_e(C) \label{eq:flowcut_b}\\
  	&\ge~\min_{\phi\in\RR^{|\partial C|}_+}~\sum_{e \in \partial C} \vartheta_e \phi_e^p ~~\mbox{subject to}~~\vol(C) \le \sum_{e \in \partial C} \vartheta_e \phi_e w_e(C). \label{eq:flowcut_c}
\end{align}
\end{subequations}
The first inequality follows because $\hat{\phi}$ restricted to $\partial C$ is a feasible solution in problem~\eqref{eq:flowcut_a}. The second inequality follows because $\hat{r}_e \in \phi_eB_e$ implies $\hat{r}_e(C) \le \phi_e w_e(C)$, therefore every feasible solution for \eqref{eq:flowcut_a} is also a feasible solution for \eqref{eq:flowcut_b}. The third inequality follows because $\vol(C) \le \sum_{e\in E}\vartheta_e \hat{r}_e(C)$. Let $\bar{\phi} \in \RR^{|\partial C|}_+$ be an optimal solution of problem \eqref{eq:flowcut_c}. The optimality condition of \eqref{eq:flowcut_c} is given by (we may assume the $p$ factor in the gradient of $\sum_{e \in \partial C} \vartheta_e \phi_e^p$ is absorbed into multipliers $\lambda$ and $\eta_e$)
\begin{equation}
\label{eq:flowcut_optcond}
\begin{split}
&\vartheta_e \phi_e^{p-1} - \lambda\vartheta_e w_e(C) - \eta_e = 0, ~ \forall e \in \partial C\\
& \phi_e \ge 0, ~ \eta_e \ge 0, ~ \phi_e\eta_e = 0, ~ \forall e \in \partial C\\
&\vol(C) \le \sum_{e \in \partial C}\vartheta_e \phi_e w_e(C)\\
& \lambda \ge 0, ~ \lambda\bigg(\vol(C)-\sum_{e \in \partial C}\vartheta_e \phi_e w_e(C)\bigg) = 0.
\end{split}
\end{equation}
If $\lambda = 0$, then the conditions in \eqref{eq:flowcut_optcond} imply that $\vartheta_e \phi_e^{p-1} = \eta_e$, but then by complimentary slackness we would obtain $\phi_e = \eta_e = 0$ for all $e \in \partial C$ which will violate feasibility. Therefore we must have $\lambda>0$, and consequently, we have that
\begin{equation}
\sum_{e \in \partial C} \vartheta_e \bar{\phi}_e w_e(C) = \vol(C). \label{eq:flowcond2}
\end{equation}
Moreover, the conditions in \eqref{eq:flowcut_optcond} imply that for $e \in \partial C$, $\bar{\phi}_e = 0$ if and only if $w_e(C) = 0$, and hence we have that
\begin{equation}
\label{eq:flowcond1}
\vartheta_e \bar{\phi}_e^{p-1} = \lambda \vartheta_e w_e(C), ~\forall e \in \partial C.
\end{equation}
Rearrange \eqref{eq:flowcond1} we get
\[
	\bar\phi_e w_e(C) = \lambda^{1/(p-1)}w_e(C)^{p/(p-1)}, ~\forall e \in \partial C.
\]
Substitute the above into \eqref{eq:flowcond2},
\[
	\vol(C) = \sum_{e \in \partial C} \vartheta_e \bar\phi_e w_e(C) = \sum_{e \in \partial C}\vartheta_e \lambda^{1/(p-1)} w_e(C)^{p/(p-1)},
\]
this gives
\[
	\lambda^{1/(p-1)} = \frac{\vol(C)}{\sum_{e\in\partial C}\vartheta_e w_e(C)^{p/(p-1)}}.
\]
Therefore, the solution $\bar\phi$ for \eqref{eq:flowcut_c} is give by
\[
	\bar\phi_e = \lambda^{1/(p-1)} w_e(C)^{1/(p-1)} = \frac{\vol(C)w_e(C)^{1/(p-1)}}{\sum_{e' \in \partial C}\vartheta_{e'}w_{e'}(C)^{p/(p-1)}}, \quad \forall e \in \partial C,
\]
and hence,
\[
\begin{split}
	\hat\nu = \sum_{e \in E} \vartheta_e \hat\phi_e^p
	\ge \sum_{e \in \partial C} \vartheta_e\hat\phi_e^p
	\ge \sum_{e \in \partial C} \vartheta_e\bar\phi_e^p
	&=\sum_{e \in \partial C} \vartheta_e \frac{\vol(C)^p w_e(C)^{p/(p-1)}}{\left(\sum_{e' \in \partial C}\vartheta_{e'}w_{e'}(C)^{p/(p-1)}\right)^p} \\
	&=\frac{\vol(C)^p \sum_{e \in \partial C} \vartheta_e w_e(C)^{p/(p-1)}}{\left(\sum_{e' \in \partial C}\vartheta_{e'}w_{e'}(C)^{p/(p-1)}\right)^p} \\
	&=\frac{\vol(C)^p}{\left(\sum_{e' \in \partial C}\vartheta_{e'}w_{e'}(C)^{p/(p-1)}\right)^{p-1}}\\
	&\ge\frac{\vol(C)^p}{\big(\sum_{e' \in \partial C}\vartheta_{e'} w_{e'}(C)\big)^{p-1}}
\end{split}
\]
where the last inequality follows because $w_e(C) \in [0,1]$ and $p\ge1$.

Suppose now that $\sigma \sum_{v \in V}d_v\hat{z}_v \ge \vol(C)$. Becase $\Phi(C) \le (\Phi(\hat{S})/q)^q$ (recall that we assumed this without loss of generality), by Assumption~\ref{assum:sigma_a}, we know that $\sigma < (\phi(\hat{S})/q)^q$. Therefore,
\[
\begin{split}
	\hat\nu = \sum_{e \in E} \vartheta_e \hat\phi_e^p
	&\stackrel{(i)}{\ge} \sigma \sum_{v \in V} d_v \hat{z}_v^p\\
	&\stackrel{(ii)}{\ge} \frac{\sigma \left(\sum_{v \in V} d_v \hat{z}_v\right)^p}{\vol(\supp(\hat{z}))^{p-1}}\\
	&\stackrel{(iii)}{\ge} \frac{\sigma^p \left(\sum_{v \in V} d_v \hat{z}_v\right)^p}{\sigma^{p-1}(3\vol(C)/\beta)^{p-1}}\\
	&\stackrel{(iv)}{\ge} \frac{\sigma^p \left(\sum_{v \in V} d_v \hat{z}_v\right)^p}{\vol(\partial C)^{p-1}}\\
	&\stackrel{(v)}{\ge} \frac{\vol(C)^p}{\vol(\partial C)^{p-1}}.
\end{split}
\]
$(i)$ is due to Lemma~\ref{lem:rayleigh}. $(ii)$ is due to Lemma~\ref{lem:holder}. $(iii)$ is due to Lemma~\ref{lem:support_a} that $\vol(\supp(\hat{z})) \le \|\Delta\|_1$ and Assumption~\ref{assum:delta_a} that $\|\Delta\|_1 \le 3\vol(c)/\beta$, so $\vol(\supp(\hat{z}))^{p-1} \le (3\vol(C)/\beta)^{p-1}$ for $p \ge 1$. $(iv)$ is due to Assumption~\ref{assum:sigma_a} that $\sigma \le \frac{\beta\vol(\partial C)}{3\vol(C)}$, so $(3\sigma\vol(C)/\beta)^{p-1} \le \vol(\partial C)^{p-1}$ for $p\ge1$. $(v)$ is due to the assumption that $\sigma \sum_{v \in V}d_v\hat{z}_v \ge \vol(C)$.
\end{proof}

To connect $\Phi(S_h)$ with $\Phi(C)$, we define the {\em length} of a hyperedge $e \in E$ as
\[
	\hat{l}(e) := \left\{
	\begin{array}{ll}
	\max(1/\vol(C)^{1/q}, f_e(\hat{x})/\hat{\nu}^{1/q}), & \mbox{if}~f_e(\hat{x}) > 0,\\
	0, & \mbox{otherwise}.
	\end{array}
	\right.
\]

The next claim follows from simple algebraic computations and the locality of solutions in Lemma~\ref{lem:support_a}.

\begin{claim}
\label{claim:top}
$\sum_{e \in E} \vartheta_e f_e(\hat{x})\hat{l}(e)^{q-1} \le 4\hat\nu^{1/q}/\beta$.
\end{claim}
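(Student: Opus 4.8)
The plan is to bound the summand pointwise from the definition of $\hat{l}(e)$, then split the resulting sum into two pieces: one collapses exactly to $\hat{\nu}^{1/q}$, and the other is controlled by H\"{o}lder's inequality together with the locality estimate of Lemma~\ref{lem:support_a}. Throughout I will use the two facts that drive everything. First, by Lemma~\ref{lem:conjugate}, $\hat{\nu}=\sum_{e\in E}\vartheta_e\hat{\phi}_e^p=\sum_{e\in E}\vartheta_e f_e(\hat{x})^q$, and $f_e(\hat{x})>0$ precisely when $\hat{\phi}_e>0$. Second, for every $e$ with $f_e(\hat{x})>0$ the definition gives $\hat{l}(e)=\max\{\vol(C)^{-1/q},\,f_e(\hat{x})\hat{\nu}^{-1/q}\}\le \vol(C)^{-1/q}+f_e(\hat{x})\hat{\nu}^{-1/q}$. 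Since $p\ge 2$ we have $q=p/(p-1)\in(1,2]$, hence $q-1\in(0,1]$ and $t\mapsto t^{q-1}$ is subadditive on $\RR_{\ge 0}$; applying this yields $\hat{l}(e)^{q-1}\le \vol(C)^{-(q-1)/q}+f_e(\hat{x})^{q-1}\hat{\nu}^{-(q-1)/q}$, which also holds trivially for $e$ with $f_e(\hat{x})=0$ since the summand vanishes.

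Multiplying by $\vartheta_e f_e(\hat{x})$ and summing over $E$ then gives
\[
\sum_{e\in E}\vartheta_e f_e(\hat{x})\,\hat{l}(e)^{q-1}\;\le\;\frac{1}{\vol(C)^{(q-1)/q}}\sum_{e\in E}\vartheta_e f_e(\hat{x})\;+\;\frac{1}{\hat{\nu}^{(q-1)/q}}\sum_{e\in E}\vartheta_e f_e(\hat{x})^{q}.
\]
The key exponent identities are $(q-1)/q=1/p$ and $1-1/p=1/q$. Using $\sum_{e}\vartheta_e f_e(\hat{x})^q=\hat{\nu}$, the second piece collapses exactly to $\hat{\nu}^{\,1-(q-1)/q}=\hat{\nu}^{1/q}$.

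For the first piece I would apply H\"{o}lder's inequality with exponents $q$ and $p$ over the index set $\{e:f_e(\hat{x})>0\}$, writing $\vartheta_e f_e(\hat{x})=(\vartheta_e^{1/q}f_e(\hat{x}))\,\vartheta_e^{1/p}$, to get $\sum_{e}\vartheta_e f_e(\hat{x})\le \hat{\nu}^{1/q}\big(\sum_{e:f_e(\hat{x})>0}\vartheta_e\big)^{1/p}$. By the equivalence $f_e(\hat{x})>0\iff\hat{\phi}_e>0$ and Lemma~\ref{lem:support_a}, $\sum_{e:f_e(\hat{x})>0}\vartheta_e=\vol(\supp(\hat{x}))$, and the same source-mass estimate used in Claim~\ref{claim:flows} (Assumptions~\ref{assum:overlap_a} and~\ref{assum:delta_a}) gives $\vol(\supp(\hat{x}))\le\|\Delta\|_1\le 3\vol(C)/\beta$. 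Since $(q-1)/q=1/p$, the first piece is at most $\hat{\nu}^{1/q}\big(3\vol(C)/(\beta\vol(C))\big)^{1/p}=\hat{\nu}^{1/q}(3/\beta)^{1/p}$, and because $3/\beta\ge 1$ and $1/p\le 1$ this is at most $3\hat{\nu}^{1/q}/\beta$. Adding the two pieces and using $\beta\le 1$ to absorb the stray $\hat{\nu}^{1/q}$ into $\hat{\nu}^{1/q}/\beta$ yields $\sum_{e}\vartheta_e f_e(\hat{x})\hat{l}(e)^{q-1}\le 3\hat{\nu}^{1/q}/\beta+\hat{\nu}^{1/q}\le 4\hat{\nu}^{1/q}/\beta$, as claimed.

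The computation is short, so the main obstacle is not any single hard step but the exponent bookkeeping, in particular recognizing that subadditivity of $t\mapsto t^{q-1}$ is available \emph{only} because $p\ge 2$ forces $q\le 2$; for $p\in(1,2)$ the inequality would reverse and a different decomposition of the maximum would be required. A secondary point to handle carefully is restricting the H\"{o}lder step to the support $\{e:f_e(\hat{x})>0\}$ so that the second factor becomes $\vol(\supp(\hat{x}))$ rather than the full edge volume, which is what lets Lemma~\ref{lem:support_a} enter and produce the $1/\beta$ factor.
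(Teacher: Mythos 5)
Your proof is correct and reaches the paper's bound with the same constant $4/\beta$, but by a genuinely different mechanism. The paper partitions the edges according to which argument of the max defining $\hat{l}(e)$ is active: writing $l(e) = f_e(\hat{x})/\hat{\nu}^{1/q}$, on $\{e : l(e) = \hat{l}(e)\}$ the summand is $\vartheta_e f_e(\hat{x})^q/\hat{\nu}^{(q-1)/q}$ and summing over all of $E$ gives $\hat{\nu}^{1/q}$, while on $\{e : l(e) < \hat{l}(e)\}$ it uses the pointwise cap $f_e(\hat{x}) < \hat{\nu}^{1/q}/\vol(C)^{1/q}$ and then sums $\vartheta_e$ over that set via Lemma~\ref{lem:support_a}, getting $3\hat{\nu}^{1/q}/\beta$. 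You instead bound the max by the sum, invoke subadditivity of $t \mapsto t^{q-1}$ to distribute the exponent, and control the resulting cross term $\sum_e \vartheta_e f_e(\hat{x})$ with a weighted H\"{o}lder inequality with exponents $(q,p)$, feeding in the same support estimate $\sum_{e: f_e(\hat{x})>0}\vartheta_e \le \vol(\supp(\hat{x})) \le \|\Delta\|_1 \le 3\vol(C)/\beta$; both routes rest on the conjugacy identity $\hat{\nu} = \sum_{e}\vartheta_e f_e(\hat{x})^q$ from Lemma~\ref{lem:conjugate} and on Lemma~\ref{lem:support_a}. The trade-off you flag is real: your subadditivity step needs $q-1\le 1$, i.e., exactly the regime $p\ge2$ assumed throughout, whereas the paper's case split is exponent-agnostic and would survive for any $p>1$; in exchange, your argument dispenses with the case analysis and isolates precisely where locality enters (the single H\"{o}lder factor $\vol(\supp(\hat{x}))^{1/p}$), at the cost of the slightly lossier intermediate step $(3/\beta)^{1/p} \le 3/\beta$, which is harmless since both proofs finish by absorbing the stray $\hat{\nu}^{1/q}$ using $\beta\le1$. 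One cosmetic point: you cite Lemma~\ref{lem:support_a} for the equality $\sum_{e:f_e(\hat{x})>0}\vartheta_e = \vol(\supp(\hat{x}))$; the lemma's proof in fact only establishes ``$\le$'' there (the stated equality appears to be a typo in the appendix), but the inequality is all your H\"{o}lder step consumes, so nothing breaks.
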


\begin{proof}
For $e \in E$, define $l(e) := f_e(\hat{x})/\hat{\nu}^{1/q}$. Then $l(e) \le \hat{l}(e)$. Moreover,
\[
\sum_{e:l(e)<\hat{l}(e)} \vartheta_e \le \sum_{e \in \supp(\hat\phi)}\vartheta_e \le \vol(\supp(\hat{x})) \le \|\Delta\|_1 = \frac{3}{\alpha} \vol(S) \le \frac{3}{\beta}\vol(C).
\]
The first inequality follows from that $l(e) < \hat{l}(e)$ only if $l(e) \neq 0$, and by Lemma~\ref{lem:conjugate}, $l(e) \neq 0$ if and only if $\hat\phi_e \neq 0$. The second and the third inequalities are due to Lemma~\ref{lem:support_a}. The second to last equality follows from the diffusion setting \eqref{eq:source_a} and Assumption~\ref{assum:delta_a} that $\delta = 3/\alpha$. The last inequality follows from Assumption~\ref{assum:overlap_a}. Therefore,
\[
\begin{split}
	\sum_{e \in E}\vartheta_e f_e(\hat{x})\hat{l}(e)^{q-1} 
	& = \sum_{e:l(e)=\hat{l}(e)} \vartheta_e f_e(\hat{x}) \frac{f_e(\hat{x})^{q-1}}{\hat\nu^{(q-1)/q}} + \sum_{e:l(e)<\hat{l}(e)} \vartheta_e f_e(\hat{x})\frac{1}{\vol(C)^{(q-1)/q}}\\
	&\le \sum_{e:l(e)=\hat{l}(e)} \vartheta_e f_e(\hat{x}) \frac{f_e(\hat{x})^{q-1}}{\hat\nu^{(q-1)/q}} + \sum_{e:l(e)<\hat{l}(e)} \vartheta_e \frac{\hat\nu^{1/q}}{\vol(C)^{1/q}}\frac{1}{\vol(C)^{(q-1)/q}}\\
	&= \frac{1}{\hat\nu^{(q-1)/q}}\sum_{e:l(e)=\hat{l}(e)} \vartheta_e f_e(\hat{x})^q + \frac{\hat\nu^{1/q}}{\vol(C)}\sum_{e:l(e)<\hat{l}(e)} \vartheta_e\\
	&\le \frac{1}{\hat\nu^{(q-1)/q}}\sum_{e \in E} \vartheta_e f_e(\hat{x})^q + \frac{\hat\nu^{1/q}}{\vol(C)}\frac{3\vol(C)}{\beta}\\
	&= \frac{\hat\nu}{\hat\nu^{(q-1)/q}} + \frac{3\hat\nu^{1/q}}{\beta}\\
	&\le \frac{4\hat\nu^{1/q}}{\beta}
\end{split}
\]
where the last equality follows from Lemma~\ref{lem:conjugate} that $\hat\nu = \sum_{e\in E}\vartheta_e \hat\phi_e^p = \sum_{e \in E}\vartheta_e f_e(\hat{x})^q$.
\end{proof}

By the strong duality between \eqref{eq:primalLp} and \eqref{eq:dualLq}, we know that
\[
	(\Delta - d)^T\hat{x} - \frac{1}{q}\sum_{e \in E}\vartheta_e f_e(\hat{x})^q - \frac{\sigma}{q}\sum_{v \in V} d_v \hat{x}_v^q = \frac{1}{p}\sum_{e \in E}\vartheta_e \hat\phi_e^p + \frac{\sigma}{p}\sum_{v \in V}d_v\hat{z}_v^p.
\]
Hence, by Lemma~\ref{lem:conjugate}, we get
\[
	(\Delta - d)^T\hat{x} \ge \frac{1}{q}\sum_{e \in E}\vartheta_e f_e(\hat{x})^q + \frac{1}{p}\sum_{e \in E}\vartheta_e \hat\phi_e^p = \sum_{e\in E}\vartheta_e\hat\phi_e^p = \hat\nu.
\] 
It then follows that
\begin{equation}
\label{eq:ratio}
	\frac{\sum_{e \in E} \vartheta_e f_e(\hat{x})\hat{l}(e)^{q-1}}{(\Delta - d)^T \hat{x}} \le \frac{\sum_{e \in E} \vartheta_e f_e(\hat{x})\hat{l}(e)^{q-1}}{\hat\nu} \stackrel{(i)}{\le} \frac{4\hat\nu^{1/q}}{\beta\hat\nu} = \frac{4}{\beta\hat\nu^{1/p}} \stackrel{(ii)}{\le} \frac{4\vol(\partial C)^{1/q}}{\beta\vol(C)},
\end{equation}
where $(i)$ is follows from Claim~\ref{claim:top} and $(ii)$ follows from Claim~\ref{claim:flows}.

We can write the left-most ratio in \eqref{eq:ratio} in its integral form, as follows. By Lemma~\ref{lem:lovasz}, we have 
\[
	(\Delta - d)^T \hat{x} = \int_{h=0}^\infty (\Delta(S_h) - \vol(S_h)) dh,
\]
and
\[
\begin{split}
	\sum_{e \in E} \vartheta_e f_e(\hat{x}) \hat{l}(e)^{q-1} 
	&= \sum_{e \in E} \vartheta_e \int_{h=0}^\infty w_e(S_h) dh~\hat{l}(e)^{q-1}\\
	&= \int_{h=0}^\infty \sum_{e \in E} \vartheta_e w_e(S_h)\hat{l}(e)^{q-1}dh\\
	&= \int_{h=0}^\infty \sum_{e \in \partial S_h} \vartheta_e w_e(S_h)\hat{l}(e)^{q-1}dh,
\end{split}
\]
where the last equality follows from the fact that $w_e(S_h) = 0$ for $e \not\in \partial S_h$. Therefore, we get
\[
	\int_{h=0}^\infty \frac{\sum_{e \in \partial S_h} \vartheta_e w_e(S_h)\hat{l}(e)^{q-1}}{\Delta(S_h)-\vol(S_h)} dh \le \frac{4\vol(\partial C)^{1/q}}{\beta\vol(C)},
\]
which means that there exists $h > 0$ such that
\begin{equation}
\label{eq:first}
	\frac{\sum_{e\in\partial S_h}\vartheta_e w_e(S_h) \hat{l}(e)^{q-1}}{\Delta(S_h) - \vol(S_h)} \le \frac{4\vol(\partial C)^{1/q}}{\beta\vol(C)}.
\end{equation}

Finally, we connect the left hand side in inequality~\eqref{eq:first} to the conductance of $S_h$.
For the denominator, by Assumption~\ref{assum:delta_a}, we have
\begin{equation}
\label{eq:second}
	\Delta(S_h) - \vol(S_h) \le \frac{3}{\alpha}\vol(S_h).
\end{equation}
For the numerator, every hyperedge $e \in \partial S_h$ must contain some $u,v \in e$ such that $\hat{x}_u \neq \hat{x}_v$, thus $f_e(\hat{x}) > 0$, which means $\hat{l}(e) \ge 1/\vol(C)^{1/q}$. This gives
\begin{equation}
\label{eq:third}
\sum_{e \in \partial S_h} \vartheta_e w_e(S_h) \hat{l}(e)^{q-1} \ge \frac{\sum_{e \in \partial S_h} \vartheta_e w_e(S_h)}{\vol(C)^{(q-1)/q}} = \frac{\vol(\partial S_h)}{\vol(C)^{(q-1)/q}}.
\end{equation}

Put \eqref{eq:first}, \eqref{eq:second} and \eqref{eq:third} together, there exists $h > 0$ such that
\[
	\Phi(S_h) = \frac{\vol(\partial S_h)}{\vol(S_h)} \le \frac{12\vol(\partial C)^{1/q}}{\alpha\beta\vol(C)^{1/q}} = \frac{12\Phi(C)^{1/q}}{\alpha\beta}.
\]

\section{Optimization algorithm for HFD}\label{sec:A02}

In this section we give details on an Alternating Minimization (AM) algorithm~\cite{Beck2015} that solves the primal problem~\eqref{eq:primalLp}. In Algorithm~\ref{alg:lpAM} we write the basic AM steps in a slightly more general form than what is given by Algorithm 1 in the main paper. The key observation is that the AM method provides a unified framework to solve HFD, when the objective function of the primal problem~\eqref{eq:primalLp} is penalized by any $\ell_p$-norm for $p \ge 2$.

{\centering
\begin{minipage}{.7\linewidth}
\begin{algorithm}[H]
	\caption{Alternating Minimization for HFD}
	\label{alg:lpAM}
	{\bf Initialization:} 
	\[
	\phi^{(0)} := 0, r^{(0)} := 0, s^{(0)}_e := D^{-1}A_e\left[\Delta - d\right]_+, \forall e \in E.
	\]
	{\bf For $k = 0,1,2,\ldots$ do:}
	\begin{align*}
	&(\phi^{(k+1)}, r^{(k+1)}) := \argmin\limits_{(\phi,r)\in\mathcal{C}} \sum\limits_{e \in E}\vartheta_e\left(\phi_e^p + \frac{1}{\sigma^{p-1}}\|s_e^{(k)} - r_e\|_p^p\right)\\
	&s^{(k+1)} := \argmin\limits_{s} \sum\limits_{e \in E}\vartheta_e \|s_e - r_e^{(k+1)}\|_p^p, 
	\hspace{2mm}\mbox{s.t.}~\Delta-\sum\limits_{e\in E}\vartheta_es_e \le d, \ s_{e,v}=0,\forall v\not\in e.
	\end{align*}
\end{algorithm}
\end{minipage}
\par
}
\vspace{5pt}

Let us remind the reader the definitions and notation that we will use. We consider a generic hypergraph $H = (V,E,\WW)$ where $\WW = \{w_e,\vartheta_e\}_{e \in E}$ are submodular hyperedge weights. \ For each $e \in E$, we define a diagonal matrix $A_e \in \mathbb{R}^{|V| \times |V|}$ such that $[A_e]_{v,v} = 1$ if $v \in e$ and 0 otherwise.\ We use the notation $r \in \bigotimes_{e\in E}\mathbb{R}^{|V|}$ to represent a vector in the space $\RR^{|V||E|}$, where each $r_e \in \RR^{|V|}$ corresponds to a block in $r$ indexed by $e \in E$. \ For a vector $r_e \in \RR^{|V|}$, $r_{e,v}$ is the entry in $r_e$ that corresponds to $v \in V$. \ For a vector $x \in \RR^{|V|}$, $[x]_+ := \max\{x,0\}$ where the maximum is taken entry-wise. 

We denote $\mathcal{C} := \{(\phi,r) \in \RR^{|E|}_+ \times (\bigotimes_{e\in E}\RR^{|V|}) ~|~  r_e \in \phi_eB_e, ~\forall e \in E\}$.

We will prove the equivalence between the primal diffusion problem~\eqref{eq:primalLp} and its separable reformulation shortly, but let us start with a simple lemma that gives closed-form solution for one of the AM sub-problems.

\begin{lemma}
\label{lem:s-step}
The optimal solution to the following problem
\begin{equation}
\label{prob:s-step}
\min_{s\in\bigotimes_{e\in E}\mathbb{R}^{|V|}} \sum_{e \in E} \vartheta_e \|s_e-r_e\|_p^p, ~\mbox{s.t.}~\Delta - \sum_{e\in E}\vartheta_e s_e \le d, ~s_{e,v} = 0, \forall v \not\in e.
\end{equation}
is given by
\begin{equation}
\label{eq:s-step}
s_e^* = r_e + A_eD^{-1}\Big[\Delta - \sum_{e' \in E}\vartheta_{e'} r_{e'} - d \Big]_+, ~\forall e \in E.
\end{equation}
\end{lemma}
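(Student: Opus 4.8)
The plan is to exploit the separability of \eqref{prob:s-step} across nodes. First I would note that the constraints $s_{e,v}=0$ for $v\not\in e$, together with the fact that $r_e\in\phi_eB_e$ forces $r_{e,v}=0$ for $v\not\in e$, make every summand with $v\not\in e$ vanish identically. Hence the objective may be rewritten as $\sum_{v\in V}\sum_{e:v\in e}\vartheta_e|s_{e,v}-r_{e,v}|^p$, while the single coupling constraint $\Delta-\sum_{e\in E}\vartheta_es_e\le d$ reads component-wise as $\sum_{e:v\in e}\vartheta_es_{e,v}\ge\Delta_v-d_v$. Both the objective and the constraints decouple completely across $v$, so it suffices to solve, for each fixed node $v$, the scalar-constrained problem
\[
\min_{\{s_{e,v}\}_{e:v\in e}}\ \sum_{e:v\in e}\vartheta_e|s_{e,v}-r_{e,v}|^p\quad\text{s.t.}\quad\sum_{e:v\in e}\vartheta_es_{e,v}\ge\Delta_v-d_v.
\]

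Next I would solve this subproblem via its KKT conditions. Since $p\ge2$, the objective is strictly convex and continuously differentiable, so the minimizer is unique and the KKT conditions are both necessary and sufficient. I would split into two cases according to the sign of $\Delta_v-\sum_{e}\vartheta_er_{e,v}-d_v$. If this quantity is nonpositive, then $s_{e,v}=r_{e,v}$ already satisfies the constraint and attains objective value zero, hence is optimal; this is precisely the case in which the term $[\,\Delta_v-\sum_{e'}\vartheta_{e'}r_{e',v}-d_v\,]_+$ in \eqref{eq:s-step} equals zero. Otherwise the constraint is active at optimality, and stationarity of the Lagrangian $\sum_{e:v\in e}\vartheta_e|s_{e,v}-r_{e,v}|^p-\lambda\big(\sum_{e:v\in e}\vartheta_es_{e,v}-\Delta_v+d_v\big)$ gives $p|s_{e,v}-r_{e,v}|^{p-1}\sign(s_{e,v}-r_{e,v})=\lambda$ for every $e\ni v$, independently of $e$. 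The key consequence is that all the residuals $s_{e,v}-r_{e,v}$ must equal a common constant $t$, whose value is then pinned down purely by the active linear constraint rather than by $p$.

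Finally I would determine $t$. Substituting $s_{e,v}=r_{e,v}+t$ into $\sum_{e:v\in e}\vartheta_es_{e,v}=\Delta_v-d_v$ and using the generalized degree identity $\sum_{e:v\in e}\vartheta_e=d_v$ yields $t=\tfrac{1}{d_v}\big(\Delta_v-\sum_{e}\vartheta_er_{e,v}-d_v\big)$, which is strictly positive in this case (so that the multiplier $\lambda=p\,t^{p-1}$ is nonnegative, confirming dual feasibility). Combining the two cases gives $s_{e,v}^*=r_{e,v}+\tfrac{1}{d_v}[\,\Delta_v-\sum_{e'}\vartheta_{e'}r_{e',v}-d_v\,]_+$ for $v\in e$, which is exactly the component form of \eqref{eq:s-step} since $A_eD^{-1}$ selects coordinate $v\in e$ and scales by $1/d_v$. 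The computation is otherwise routine; the only points worth emphasizing are the node-wise decomposition, the identity $\sum_{e:v\in e}\vartheta_e=d_v$, and the (perhaps surprising) fact that the minimizer is independent of $p$, since the common residual $t$ is fixed by a single linear equation.
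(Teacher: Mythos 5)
Your proposal is correct and follows essentially the same route as the paper's proof: the same node-wise decomposition into subproblems $\min \sum_{e \ni v}\vartheta_e|s_{e,v}-r_{e,v}|^p$ subject to $\sum_{e \ni v}\vartheta_e s_{e,v} \ge \Delta_v - d_v$, the same KKT analysis showing the $\vartheta_e$'s cancel so all residuals share a common value $t$, and the same use of the identity $\sum_{e \ni v}\vartheta_e = d_v$ to pin down $t$ and recover the $[\cdot]_+$ formula. The only cosmetic difference is that you organize the two cases by the sign of $\Delta_v - \sum_{e}\vartheta_e r_{e,v} - d_v$ while the paper splits on whether the multiplier $\lambda$ is zero or positive, which is the same dichotomy via complementary slackness.
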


\begin{proof}
Rewrite \eqref{prob:s-step} as
\begin{align*}
\min_{s\in\bigotimes_{e\in E}\mathbb{R}^{|V|}} &\sum_{v\in V}\sum_{e\in E} \vartheta_e |s_{e,v} - r_{e,v}|^p\\
\mbox{s.t.}\hspace{6mm}&\Delta_v - \sum_{e \in E} \vartheta_e s_{e,v} \le d_v, ~\forall v \in V\\
&s_{e,v} = 0, ~\forall v \not\in e.
\end{align*}
Then it is immediate to see that \eqref{prob:s-step} decomposes into $|V|$ sub-problems indexed by $v \in V$,
\begin{equation}
\label{prob:s-step-sub}
\min_{\xi_v \in \mathbb{R}^{|E_v|}} \sum_{e \in E_v} \vartheta_e |\xi_{v,e} - r_{e,v}|^p, ~\mbox{s.t.}~ \Delta_v - \sum_{e \in E_v} \vartheta_e \xi_{v,e} \le d_v,
\end{equation}
where $E_v := \{e \in E ~|~ v \in e\}$ is the set of hyperedges incident to $v$, and we use $\xi_{v,e}$ for the entry in $\xi_v$ that corresponds to $e \in E_v$. Let $\xi_v^*$ denote the optimal solution for \eqref{prob:s-step-sub}. We have that $s^*_{e,v} = \xi^*_{v,e}$ if $v \in e$ and $s^*_{e,v}=0$ otherwise. Therefore, it suffices to find $\xi_v^*$ for $v \in V$. The optimality condition of \eqref{prob:s-step-sub} is given by
\begin{align*}
&p\vartheta_e|\xi_{v,e} - r_{e,v}|^{p-1}\sign(\xi_{v,e} - r_{e,v}) - \vartheta_e \lambda\ni 0, ~\forall e \in E_v,\\
&\lambda \ge 0, ~\Delta_v - \sum_{e \in E_v}\vartheta_e \xi_{v,e} \le d_v,
~\lambda\Big(\Delta_v - \sum_{e \in E_v}\vartheta_e \xi_{v,e} - d_v\Big) = 0,
\end{align*}
where
\[
	\sign(a) := \left\{\begin{array}{ll} \{-1\}, & \mbox{if} ~ a < 0, \\ \{1\}, & \mbox{if} ~ a > 0, \\ \mbox{$[-1,1]$} & \mbox{if} ~ a = 0. \end{array}\right.
\]
There are two cases about $\lambda$. We show that in both cases the solution given by \eqref{eq:s-step} is optimal.

{\em Case 1.} If $\lambda > 0$, then we must have that $p\vartheta_e|\xi_{v,e} - r_{e,v}|^{p-1}>0$ for all $e \in E_v$ (otherwise, the stationarity condition would be violated). This means that $p|\xi_{v,e} - r_{e,v}|^{p-1} = \lambda$ for all $e \in E_v$, that is, $\xi_{v,e_1} - r_{e_1,v} = \xi_{v,e_2} - r_{e_2,v} > 0$ for every $e_1,e_2 \in E_v$. Denote $t_v :=\xi_{v,e} - r_{e,v}$. Because $\lambda>0$, by complementarity we have
\[
	\Delta_v - \sum_{e \in E_v}\vartheta_e (t_v+r_{e,v}) = \Delta_v - \sum_{e \in E_v}\vartheta_e \xi_{v,e} = d_v,
\]
which implies that $t_v = (\sum_{e \in E_v} \vartheta_e)^{-1}(\Delta_v - \sum_{e \in E_v}\vartheta_e r_{e,v} - d_v)$. Note that $\Delta_v - \sum_{e \in E_v}\vartheta_e r_{e,v} - d_v > 0$ because $\Delta_v - \sum_{e \in E_v}\vartheta_e \xi_{v,e} - d_v = 0$ and $\xi_{v,e} > r_{e,v}$ for all $e \in E_v$. Therefore we have that
\[
	s^*_{e,v} = \xi^*_{v,e} = r_{e,v} + d_v^{-1}\Big[\Delta_v - \sum_{e \in E_v}\vartheta_e r_{e,v} - d_v\Big]_+.
\]
{\em Case 2.} If $\lambda = 0$, then we have that $p\vartheta_e|\xi_{v,e} - r_{e,v}|^{p-1}\sign(\xi_{v,e} - r_{e,v}) \ni 0$ for all $e \in E_v$, which implies $\xi_{v,e} - r_{e,v}= 0$ for all $e \in E_v$. Then we must have
\[
	\Delta_v - \sum_{e \in E_v}\vartheta_e r_{e,v} = \Delta_v - \sum_{e \in E_v}\vartheta_e \xi_{v,e} \le d_v.
\]
Therefore we still have that
\[
	s^*_{e,v} = \xi^*_{v,e} = r_{e,v} = r_{e,v} + d_v^{-1}\Big[\Delta_v - \sum_{e \in E_v}\vartheta_e r_{e,v} - d_v\Big]_+.
\]
The required result then follows from the definition of $A_e$ and $D$.
\end{proof}

We are now ready to show that the primal problem \eqref{eq:primalLp} can be cast into an equivalent separable formulation, which can then be solved by the AM method in Algorithm~\ref{alg:lpAM}. We give the reformulation under general $\ell_p$-norm penalty and arbitrary $\vartheta_e > 0$.

\begin{lemma}[Lemma 3 in the main paper]
\label{lem:sep_a}
The following problem is equivalent to \eqref{eq:primalLp} for any $\sigma > 0$, in the sense that $(\hat{\phi},\hat{r},\hat{z})$ is optimal in \eqref{eq:primalLp} for some $\hat{z} \in \RR^{|V|}$ if and only if $(\hat{\phi},\hat{r},\hat{s})$ is optimal in \eqref{eq:sep} for some $\hat{s} \in \bigotimes_{e\in E}\mathbb{R}^{|V|}$.
\begin{equation}
\label{eq:sep}
\begin{split}
	\min_{\phi, r, s}~&\frac{1}{p}\sum_{e \in E} \vartheta_e\left(\phi_e^p + \frac{1}{\sigma^{p-1}}\left\|s_e - r_e\right\|_p^p\right)\\
	\textnormal{s.t.}~&(\phi,r) \in \mathcal{C}, ~\Delta - \sum_{e \in E}\vartheta_e s_e \le d, ~s_{e,v} = 0, \forall v \not\in e.
\end{split}
\end{equation}
\end{lemma}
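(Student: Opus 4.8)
The plan is to prove the equivalence by \emph{partial minimization}: for each fixed $(\phi,r)\in\mathcal{C}$, I minimize out the remaining variable ($z$ in \eqref{eq:primalLp}, and $s$ in \eqref{eq:sep}) and show that the two resulting reduced objectives, viewed as functions of $(\phi,r)$ over the common feasible set $\mathcal{C}$, are \emph{identical}. Note that both inner problems are feasible for every $(\phi,r)\in\mathcal{C}$ (take $z$ or the free coordinates of $s$ large enough, using $\sigma,d_v>0$). Matching reduced objectives then immediately yields that the two problems share the same optimal value and the same set of optimal $(\hat\phi,\hat r)$, and the closed-form minimizers of the two inner problems supply the explicit bijection between optimal $\hat z$ and optimal $\hat s$, which is exactly the ``for some $\hat z$ / for some $\hat s$'' correspondence claimed.

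First I would handle \eqref{eq:primalLp}. Fix $(\phi,r)\in\mathcal{C}$ and define the per-node excess $b_v:=\big[\Delta_v-\sum_{e\in E}\vartheta_e r_{e,v}-d_v\big]_+$. The constraint $\Delta-\sum_{e\in E}\vartheta_e r_e\le d+\sigma Dz$ is equivalent to $\sigma d_v z_v\ge b_v$ for every $v$; since $z_v\ge0$ and $p\ge2>1$ makes $z_v\mapsto z_v^p$ strictly increasing on $\RR_{\ge0}$, the inner minimization decouples across nodes with unique minimizer $\sigma d_v\hat z_v=b_v$. Substituting back gives the reduced objective
\[
\frac1p\sum_{e\in E}\vartheta_e\phi_e^p+\frac{1}{p\sigma^{p-1}}\sum_{v\in V}\frac{b_v^p}{d_v^{p-1}}.
\]

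Next I would handle \eqref{eq:sep}. For the same $(\phi,r)\in\mathcal{C}$ the term $\frac1p\sum_{e}\vartheta_e\phi_e^p$ is constant, and the inner problem over $s$ is, up to the positive scalar $\tfrac1{p\sigma^{p-1}}$, exactly problem \eqref{prob:s-step}; by Lemma~\ref{lem:s-step} its (unique, since $|\cdot|^p$ is strictly convex for $p>1$) minimizer is $\hat s_e=r_e+A_eD^{-1}\big[\Delta-\sum_{e'}\vartheta_{e'}r_{e'}-d\big]_+$. The crucial observation is that $r_e\in\phi_eB_e$ forces $r_{e,v}=0$ for $v\notin e$, so $\hat s_e-r_e$ is supported on $e$ with $(\hat s_e-r_e)_v=b_v/d_v$ for $v\in e$; hence $\|\hat s_e-r_e\|_p^p=\sum_{v\in e}(b_v/d_v)^p$. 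Summing over $e$ and swapping the order of summation via $d_v=\sum_{e\ni v}\vartheta_e$ yields $\sum_{e\in E}\vartheta_e\|\hat s_e-r_e\|_p^p=\sum_{v\in V}(b_v/d_v)^p\sum_{e\ni v}\vartheta_e=\sum_{v\in V}b_v^p/d_v^{p-1}$, so the reduced objective of \eqref{eq:sep} coincides exactly with that of \eqref{eq:primalLp}.

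The remaining step is bookkeeping. Because the two reduced objectives agree for every $(\phi,r)\in\mathcal{C}$, the sets of optimal $(\hat\phi,\hat r)$ coincide and so do the optimal values; and for any such pair, an optimal triple in \eqref{eq:primalLp} forces $\sigma d_v\hat z_v=b_v$, while an optimal triple in \eqref{eq:sep} forces $d_v(\hat s_e-\hat r_e)_v=b_v$ for $v\in e$, giving the one-to-one correspondence $\sigma d_v\hat z_v=b_v=d_v(\hat s_e-\hat r_e)_v$ in both directions. I expect the main obstacle to be purely the index manipulation that converts the hyperedge-wise $p$-norm penalty into the node-wise excess penalty — specifically, verifying that $r_e$ (hence $\hat s_e-r_e$) vanishes off $e$ so that the correction distributes the excess $b_v$ evenly as $b_v/d_v$ across the $d_v$ incident hyperedges, which is exactly what makes the weighted sum collapse to $\sum_{v}b_v^p/d_v^{p-1}$ and match the $z$-penalty.
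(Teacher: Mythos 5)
Your proof is correct, and it takes a genuinely different route from the paper's. The paper argues by \emph{solution transfer}: it takes an optimal $(\hat\phi,\hat r,\hat z)$ of \eqref{eq:primalLp}, maps it to the feasible point $\hat s_e := \hat r_e + \sigma A_e\hat z$ of \eqref{eq:sep} with identical objective value (giving $\hat\nu_1\ge\hat\nu_2$), and then, for the converse, takes an optimal $(\phi',r',s')$ of \eqref{eq:sep}, invokes Lemma~\ref{lem:s-step} to pin down $s'$ in closed form, and constructs $z' := \frac{1}{\sigma}D^{-1}\bigl[\Delta-\sum_{e}\vartheta_e r'_e-d\bigr]_+$ feasible in \eqref{eq:primalLp} with the same value (giving $\hat\nu_2\ge\hat\nu_1$). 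You instead eliminate the auxiliary variable in \emph{both} problems by partial minimization and verify that the two value functions of $(\phi,r)$ on $\mathcal{C}$ coincide pointwise, both equaling $\frac1p\sum_e\vartheta_e\phi_e^p+\frac{1}{p\sigma^{p-1}}\sum_v b_v^p/d_v^{p-1}$ with $b_v=[\Delta_v-\sum_e\vartheta_e r_{e,v}-d_v]_+$; your key steps check out, including the observation that $r_e\in\phi_eB_e$ forces $r_{e,v}=0$ off $e$ (needed for Lemma~\ref{lem:s-step}'s closed form to respect the constraint $s_{e,v}=0$, $v\notin e$) and the collapse $\sum_e\vartheta_e\sum_{v\in e}(b_v/d_v)^p=\sum_v b_v^p/d_v^{p-1}$ via $d_v=\sum_{e\ni v}\vartheta_e$. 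What your approach buys: a single computation yields both directions of the ``if and only if'' at once, and it proves something slightly stronger than the lemma --- the reduced problems agree as functions, not merely in optimal value, and the inner witnesses $\hat z$ and $\hat s$ are unique (by strict convexity/monotonicity for $p>1$) with the explicit correspondence $\sigma d_v\hat z_v=d_v(\hat s_e-\hat r_e)_v=b_v$, which is exactly the recovery formula the paper states separately in its remark after the lemma. What the paper's approach buys is symmetry of exposition (feasible-point exchange with matched objective values) without needing to argue that an optimal triple's auxiliary block must solve the inner problem; in your write-up that step is correct but worth stating explicitly, since it is what licenses replacing $\hat z$ (resp.\ $\hat s$) by the inner minimizer at optimality.
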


\begin{proof}
We will show the forward direction and the converse follows from exactly the same reasoning. Let $\hat\nu_1$ and $\hat\nu_2$ denote the optimal objective value of problems \eqref{eq:primalLp} and \eqref{eq:sep}, respectively. Let $(\hat{\phi},\hat{r},\hat{z})$ be an optimal solution for \eqref{eq:primalLp}. Define $\hat{s}_e := \hat{r}_e + \sigma A_e \hat{z}$ for $e \in E$. We show that $(\hat{\phi},\hat{r},\hat{s})$ is an optimal solution for \eqref{eq:sep}. 

Because $\hat{r}_{e,v} = 0$ for all $v \not\in e$, by the definition of $A_e$, we know that $\hat{s}_{e,v} = 0$ for all $v \not\in e$. Moreover,
\[
	\sigma D\hat{z} = \sigma \sum_{e \in E}\vartheta_e A_e\hat{z} = \sum_{e \in E}\vartheta_e(\hat{s}_e-\hat{r}_e),
\]
so
\[
	 \Delta - \sum_{e \in E}\vartheta_e \hat{s}_e = \Delta - \sum_{e \in E}\vartheta_e \hat{r}_e - \sigma D\hat{z} \le d.
\]
Therefore, $(\hat{\phi},\hat{r},\hat{s})$ is a feasible solution for \eqref{eq:sep}. Furthermore,
\[
\begin{split}
	\sigma \sum_{v \in V} d_v \hat{z}_v^p 
	&= \sigma \sum_{e \in E} \vartheta_e \sum_{v \in e} \hat{z}_v^p 
	= \sigma\sum_{e \in E}\vartheta_e \|A_e\hat{z}\|_p^p \\
	&= \frac{1}{\sigma^{p-1}}\sum_{e \in E} \vartheta_e  \left\|\sigma A_e\hat{z}\right\|_p^p 
	= \frac{1}{\sigma^{p-1}}\sum_{e \in E} \vartheta_e  \left\|\hat{s}_e - \hat{r}_e\right\|_p^p.
\end{split}
\]
This means that $(\hat{\phi},\hat{r},\hat{s})$ attains objective value $\hat\nu_1$ in \eqref{eq:sep}. Hence $\hat\nu_1 \ge \hat\nu_2$. 

In order to show that $(\hat{\phi},\hat{r},\hat{s})$ is indeed optimal for \eqref{eq:sep}, it left to show that $\hat\nu_2 \ge \hat\nu_1$. Let $(\phi',r',s')$ be an optimal solution for \eqref{eq:sep}. Then we know that
\begin{equation}
\label{eq:altmins}
	s' = \argmin_{s\in\bigotimes_{e\in E}\RR^{|V|}} \sum_{e \in E} \vartheta_e \|s_e-r'_e\|_p^p, 
	~\mbox{s.t.}~\Delta - \sum_{e\in E}\vartheta_e s_e \le d, ~s_{e,v} = 0 ~\forall v \not\in e.
\end{equation}
According to Lemma~\ref{lem:s-step}, we know that
\begin{equation}
\label{eq:s_update}
	s'_e = r'_e + A_eD^{-1}\Big[\Delta - \sum_{e' \in E}\vartheta_{e'} r'_{e'} - d \Big]_+, ~\forall e \in E.
\end{equation}
Define $z' := \frac{1}{\sigma}D^{-1}[\Delta - \sum_{e \in E}\vartheta_e r'_e - d]_+$. Then $z' \ge 0$. Moreover, we have that
\[
	\sum_{e \in E}\vartheta_e s'_e - \sum_{e\in E}\vartheta_e r'_e = \sum_{e \in E} \vartheta_e A_eD^{-1}\Big[\Delta - \sum_{e' \in E}\vartheta_{e'} r'_{e'} - d \Big]_+ = \Big[\Delta - \sum_{e' \in E}\vartheta_{e'} r'_{e'} - d \Big]_+ = \sigma D z',
\]
so
\[
	\Delta - \sum_{e \in E}\vartheta_e r'_e = \Delta - \sum_{e \in E}\vartheta_e s'_e + \sigma D z' \le d + \sigma D z'.
\]
Therefore, $(\phi', r', z')$ is a feasible solution for \eqref{eq:primalLp}. Furthermore,
\[
\begin{split}
	\frac{1}{\sigma^{p-1}}\sum_{e \in E} \vartheta_e  \left\|s'_e - r'_e\right\|_p^p
	&= \frac{1}{\sigma^{p-1}}\sum_{e \in E} \vartheta_e  \left\|\sigma A_e z'\right\|_p^p 
	= \sigma\sum_{e \in E}\vartheta_e \|A_e z'\|_p^p \\
	&= \sigma \sum_{e \in E} \vartheta_e \sum_{v \in e} {z'}_v^p 
	= \sigma \sum_{v \in V} d_v {z'}_v^p.
\end{split}
\]
This means that $(\phi', r', z')$ attains objective value $\hat\nu_2$ in \eqref{eq:primalLp}. Hence $\hat\nu_2 \ge \hat\nu_1$.
\end{proof}

{\bf Remark.} The constructive proof of Lemma~\ref{lem:sep_a} means that, given an optimal solution $(\hat{\phi},\hat{r},\hat{s})$ for problem~\eqref{eq:sep}, one can recover an optimal solution $(\hat{\phi},\hat{r},\hat{z})$ for our original primal formulation~\eqref{eq:primalLp} via $\hat{z} := \frac{1}{\sigma}D^{-1}[\Delta - \sum_{e \in E}\vartheta_e \hat{r}_e - d]_+$. It then follows from Lemma~\ref{lem:conjugate} that the dual optimal solution $\hat{x}$ is given by $\hat{x} = \hat{z}^{p-1}$. Therefore, a sweep cut rounding procedure readily applies to the solution $(\hat{\phi},\hat{r},\hat{s})$ of problem~\eqref{eq:sep}.

Let $g(\phi,r,s)$ denote the objective function of problem~\eqref{eq:sep} and let $g^*$ denote its optimal objective value.

The following theorem gives the convergence rate of Algorithm~\ref{alg:lpAM} applied to \eqref{eq:sep}, when its objective function is penalized by $\ell_p$-norm for $p \ge 2$.

\begin{theorem}[\cite{Beck2015}]
\label{thm:convergence}
Let $\{\phi^{(k)},r^{(k)},s^{(k)}\}_{k\ge0}$ be the sequence generated by Algorithm~\ref{alg:lpAM}. Then for any $k \ge 1$,
\[
	g(\phi^{(k)},r^{(k)},s^{(k)}) - g^* \le \frac{3\max\{g(\phi^{(0)},r^{(0)},s^{(0)}) - g^*, L_p R^2\}}{k},
\]
where
\[
\begin{split}
	R &= \max_{(\phi,r,s) \in \mathcal{F}} ~\max_{(\hat{\phi},\hat{r},\hat{s}) \in \mathcal{O}} \big\{\|\phi-\hat{\phi}\|_2^2+\|r-\hat{r}\|_2^2+\|s-\hat{s}\|_2^2~\big|~g(\phi,r,s)\le g(\phi^{(0)},r^{(0)},s^{(0)})\big\},\\
	L_p &= (p-1)\frac{\vartheta_{\max}^{2/p}\|\Delta\|_p^{p-2}}{d_{\min}^{(p-1)(p-2)/p}\sigma^{p-1}},
\end{split}
\]
where $\mathcal{F}$ and $\mathcal{O}$ denote the feasible set and set of optimal solutions, respectively, $\vartheta_{\max} := \max\limits_{e \in E}\vartheta_e$, and $d_{\min} := \min\limits_{v \in \supp(\Delta)}d_v$.
\end{theorem}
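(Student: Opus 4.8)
The plan is to recognize problem~\eqref{eq:sep} as a two-block composite convex program whose objective we write as $g(\phi,r,s) = f(\phi,r,s) + \iota_{\mathcal{C}}(\phi,r) + \iota_{\mathcal{S}}(s)$, where $\iota_{\mathcal{C}}$ and $\iota_{\mathcal{S}}$ are indicator functions of the convex constraint sets (the cone $\mathcal{C}$ for the first block, and $\mathcal{S} := \{s : \Delta - \sum_{e}\vartheta_e s_e \le d, ~ s_{e,v}=0 ~\forall v \notin e\}$ for the second block), and the smooth part is
\[
	f(\phi,r,s) = \frac{1}{p}\sum_{e \in E}\vartheta_e\Big(\phi_e^p + \tfrac{1}{\sigma^{p-1}}\|s_e - r_e\|_p^p\Big).
\]
For $p \ge 2$ each summand is convex and continuously differentiable with a locally Lipschitz gradient, the nonsmooth part is block-separable across $(\phi,r)$ and $s$, and the only inter-block coupling is through the smooth term $\|s_e-r_e\|_p^p$. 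Since Algorithm~\ref{alg:lpAM} performs exact minimization alternately over the two blocks, this is precisely the setting of the abstract $O(1/k)$ rate for alternating minimization in~\cite{Beck2015}. The whole task therefore reduces to (i) verifying the structural hypotheses and (ii) producing an explicit uniform block-Lipschitz constant together with a finite $R$.

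First I would show that all iterates remain in the initial sublevel set $\mathcal{L} := \{(\phi,r,s) \in \mathcal{F} : g(\phi,r,s) \le g(\phi^{(0)},r^{(0)},s^{(0)})\}$. Because both sub-minimizations in Algorithm~\ref{alg:lpAM} are exact, $g$ is monotonically nonincreasing, so every iterate lies in $\mathcal{L}$. I would then argue $\mathcal{L}$ is bounded: the term $\sum_e\vartheta_e\phi_e^p$ bounds $\phi$ (using $p\ge2$ and $\vartheta_e>0$); the constraint $r_e \in \phi_e B_e$ together with boundedness of the base polytope $B_e$ bounds $r$ in terms of $\phi$; and the coupling term then bounds $s$. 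Boundedness of $\mathcal{L}$ makes $R$ finite and yields a uniform bound $M := \max_{e,v}|s_{e,v}-r_{e,v}|$ over $\mathcal{L}$.

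Next I would compute the block-Lipschitz constants of $\nabla f$ over $\mathcal{L}$. The Hessian of the coupling block $\frac{\vartheta_e}{p\sigma^{p-1}}\|s_e-r_e\|_p^p$, with respect to either $s_e$ or $r_e$, equals $\frac{(p-1)\vartheta_e}{\sigma^{p-1}}\,\mathrm{diag}(|s_{e,v}-r_{e,v}|^{p-2})$, so on $\mathcal{L}$ its operator norm is at most $\frac{(p-1)\vartheta_{\max}}{\sigma^{p-1}}M^{p-2}$; the Hessian of $\frac1p\vartheta_e\phi_e^p$ contributes a term without the $\sigma^{-(p-1)}$ factor and is dominated for the relevant range of $\sigma$. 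Substituting the explicit sublevel-set bound on $M$ — which the initialization $s^{(0)}_e = D^{-1}A_e[\Delta-d]_+$ and the inequality $g \le g(\phi^{(0)},r^{(0)},s^{(0)})$ convert into a bound of order $\|\Delta\|_p/(\vartheta_{\max}^{1/p}d_{\min}^{(p-1)/p})$ — recovers exactly the stated $L_p$. With the hypotheses verified and $L_p,R$ in hand, the claimed bound with constant $3$ follows directly from the abstract theorem of~\cite{Beck2015}.

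The main obstacle is step (ii): pinning down the explicit constant $M$, and hence $L_p$, with the precise $\vartheta_{\max}$, $d_{\min}$, and $\|\Delta\|_p$ dependence. This requires carefully propagating the sublevel-set inequality $\tfrac{1}{p\sigma^{p-1}}\sum_e\vartheta_e\|s_e-r_e\|_p^p \le g(\phi^{(0)},r^{(0)},s^{(0)})$ through the degree normalization in $D^{-1}$ and the hyperedge weights $\vartheta_e$; this is a bookkeeping exercise rather than a conceptual difficulty, since the $O(1/k)$ rate itself is inherited verbatim from~\cite{Beck2015}.
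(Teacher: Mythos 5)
Your skeleton is the paper's: the theorem itself is imported from \cite{Beck2015}, and the paper's own work consists of exactly the two ingredients you identify --- the remark that exact alternating minimization decreases $g$ monotonically, so all iterates live in the initial sublevel set, and Lemma~\ref{lem:lipschitz}, which establishes block Lipschitz smoothness of $g$ over those sublevel sets with the stated constant, using the same observation you make that the blockwise objective is coordinate-separable, so its Hessian is diagonal and the Lipschitz constant is the largest diagonal second derivative ($L_s \le L_{\phi,r}$ then follows by the $r$/$s$ symmetry, as you note).

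There is, however, one concrete slip in your step (ii), in precisely the bookkeeping you flag as the main obstacle. From the sublevel inequality $\frac{\vartheta_e}{p\sigma^{p-1}}|s_{e,v}-r_{e,v}|^p \le g(\phi^{(0)},r^{(0)},s^{(0)}) \le \|\Delta\|_p^p/(p\sigma^{p-1}d_{\min}^{p-1})$ one gets $|s_{e,v}-r_{e,v}| \le \|\Delta\|_p/(\vartheta_e^{1/p}d_{\min}^{(p-1)/p})$, so your uniform bound $M=\max_{e,v}|s_{e,v}-r_{e,v}|$ scales with $\vartheta_{\min}^{-1/p}$, \emph{not} $\vartheta_{\max}^{-1/p}$ as you assert: larger $\vartheta_e$ forces a smaller gap, and the worst hyperedge is the lightest one. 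Executing your plan literally --- a single global $M$ multiplied by $\vartheta_{\max}$ --- yields $(p-1)\vartheta_{\max}\vartheta_{\min}^{-(p-2)/p}\|\Delta\|_p^{p-2}/(d_{\min}^{(p-1)(p-2)/p}\sigma^{p-1})$, which is strictly weaker than the stated $L_p$ whenever the weights $\vartheta_e$ are heterogeneous. The fix, and what the paper's Lemma~\ref{lem:lipschitz} actually does, is to bound each diagonal Hessian entry \emph{per hyperedge} before maximizing, so the $\vartheta_e$ in the Hessian pairs with the $\vartheta_e^{-(p-2)/p}$ from the sublevel bound and partially cancels:
\[
(p-1)\frac{\vartheta_e}{\sigma^{p-1}}|s_{e,v}-r_{e,v}|^{p-2} \;\le\; (p-1)\,\vartheta_e^{1-(p-2)/p}\frac{\|\Delta\|_p^{p-2}}{d_{\min}^{(p-1)(p-2)/p}\sigma^{p-1}} \;=\; (p-1)\,\vartheta_e^{2/p}\frac{\|\Delta\|_p^{p-2}}{d_{\min}^{(p-1)(p-2)/p}\sigma^{p-1}},
\]
and only then take $\max_{e}$ to obtain $\vartheta_{\max}^{2/p}$. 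A smaller point: your claim that the $\phi$-block term $(p-1)\vartheta_e\phi_e^{p-2}$ is ``dominated for the relevant range of $\sigma$'' conceals the same implicit assumption the paper uses --- after the sublevel bound this term carries $\sigma^{-(p-1)(p-2)/p}$, and absorbing it into $\sigma^{-(p-1)}$ requires $\sigma \le 1$ --- so it should be stated rather than left to ``the relevant range.''
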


{\bf Remark.} When $p=2$, as considered in the main paper, the objective function $g(\phi,r,s)$ has Lipschitz continuous gradient with constant $L_2 = \vartheta_{\max}/\sigma$. When $p>2$, the gradient of $g(\phi,r,s)$ is not generally Lipschitz continuous. However, the sub-linear convergence rate in Theorem~\ref{thm:convergence} applies as long as $g(\phi,r,s)$ is block Lipschitz smooth in the sub-level sets containing the iterates generated by Algorithm~\ref{alg:lpAM}. We give more details in Subsection~\ref{sec:lipschitz}.

\subsection{Block Lipschitz smoothness over sub-level set}
\label{sec:lipschitz}

Recall that $g(\phi,r,s)$ denotes the objective function of problem~\eqref{eq:sep}. Lemma~\ref{lem:lipschitz} concerns specifically the setting when problem~\ref{eq:sep} is penalized by the $\ell_p$-norm for some $p > 2$.

\begin{lemma}[Block Lipschitz smoothness]
\label{lem:lipschitz}
The partial gradient $\nabla_{(\phi,r)} g(\phi,r,s)$ is Lipschitz continuous over the sub-level sets (given any fixed $s$)
\[
	U_{\phi,r}(s) := \{(\phi,r) \in \RR^{|V|}_+ \times (\mbox{$\bigotimes_{e\in E}$}\RR^{|V|}) ~|~ g(\phi,r,s) \le g(\phi^{(0)}, r^{(0)}, s^{(0)})\}
\]
with constant $L_{\phi,r}$ such that
\[
	L_{\phi,r} \le (p-1)\frac{\vartheta_{\max}^{2/p}\|\Delta\|_p^{p-2}}{d_{\min}^{(p-1)(p-2)/p}\sigma^{p-1}},
\]	
where $\vartheta_{\max} := \max_{e\in E} \vartheta_e$ and $d_{\min} := \min_{v \in \supp(\Delta)} d_v$. The partial gradient $\nabla_s g(\phi,r,s)$ is Lipschitz continuous over the sub-level sets (given any fixed $(\phi,r)$)
\[
	U_s(\phi,r) := \{s \in \mbox{$\bigotimes_{e\in E}$}\RR^{|V|} ~|~ g(\phi,r,s) \le g(\phi^{(0)}, r^{(0)}, s^{(0)})\}
\]
with constant $L_s \le L_{\phi,r}$.
\end{lemma}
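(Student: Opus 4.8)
The plan is to exploit the full separability of $g$ and reduce each block Lipschitz constant to a bound on the largest diagonal Hessian entry over the relevant sub-level set. Since $g(\phi,r,s) = \frac{1}{p}\sum_{e\in E}\vartheta_e(\phi_e^p + \sigma^{-(p-1)}\sum_{v}|s_{e,v}-r_{e,v}|^p)$ is a separable sum of univariate convex functions, the Hessian block $\nabla^2_{(\phi,r)}g$ is diagonal with entries $\vartheta_e(p-1)\phi_e^{p-2}$ (the $\phi$-entries) and $\vartheta_e(p-1)\sigma^{-(p-1)}|s_{e,v}-r_{e,v}|^{p-2}$ (the $r$-entries), and $\nabla^2_s g$ is diagonal with exactly the same $r$-entries. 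For $p>2$ each univariate term $t\mapsto|t|^p$ is $C^2$ with continuous second derivative $p(p-1)|t|^{p-2}$, so these Hessians are well defined. Because each set $U_{\phi,r}(s)$ and $U_s(\phi,r)$ is a sub-level set of a convex function and hence convex, the segment between any two of its points stays inside; integrating the Hessian along that segment shows $\nabla_{(\phi,r)}g$ (resp. $\nabla_s g$) is Lipschitz with constant equal to the supremum over the set of the corresponding diagonal entries, which for a diagonal PSD matrix is just the largest diagonal entry.

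First I would bound those diagonal entries. On either sub-level set the constraint $g \le g^{(0)} := g(\phi^{(0)},r^{(0)},s^{(0)})$ forces each individual summand to be small, namely $\phi_e^p \le p g^{(0)}/\vartheta_e$ and $|s_{e,v}-r_{e,v}|^p \le p\sigma^{p-1}g^{(0)}/\vartheta_e$; raising to the power $(p-2)/p$ bounds $\phi_e^{p-2}$ and $|s_{e,v}-r_{e,v}|^{p-2}$ directly in terms of $g^{(0)}$. It then remains to control $g^{(0)}$. Plugging the initialization $\phi^{(0)}=r^{(0)}=0$, $s^{(0)}_e = D^{-1}A_e[\Delta-d]_+$ into $g$ and using the degree identity $\sum_{e\ni v}\vartheta_e = d_v$, I obtain $g^{(0)} = \frac{1}{p\sigma^{p-1}}\sum_{v\in V}d_v^{1-p}[\Delta_v-d_v]_+^p$. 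Since $[\Delta_v-d_v]_+\le\Delta_v$ and every $v$ with $\Delta_v>0$ satisfies $d_v\ge d_{\min}$ (so $d_v^{1-p}\le d_{\min}^{1-p}$), this yields $g^{(0)} \le \|\Delta\|_p^p/(p\sigma^{p-1}d_{\min}^{p-1})$.

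Substituting this bound and simplifying the exponents produces the stated constant. For the $r$-entries, careful tracking of the powers gives $\vartheta_e(p-1)\sigma^{-(p-1)}|s_{e,v}-r_{e,v}|^{p-2} \le (p-1)\vartheta_e^{2/p}\|\Delta\|_p^{p-2}\sigma^{-(p-1)}d_{\min}^{-(p-1)(p-2)/p}$, which is exactly $L_{\phi,r}$ after maximizing over $e$ to pull out $\vartheta_{\max}$. The $\phi$-entries yield the same expression except that the exponent of $\sigma$ comes out as $-(p-1)(p-2)/p$; since $(p-1)(p-2)/p \le p-1$, we have $\sigma^{-(p-1)(p-2)/p}\le\sigma^{-(p-1)}$ in the relevant regime $\sigma\le 1$ (which holds under Assumption~\ref{assum:sigma_a}, as $\Phi(C)\le1$ forces $\sigma<1$), so the $\phi$-entries are dominated and $L_{\phi,r}$ is as claimed. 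A useful sanity check is that at $p=2$ the formula collapses to $\vartheta_{\max}/\sigma$, matching the constant-Hessian case in the remark. Finally, because $\nabla^2_s g$ shares precisely the $r$-entries, bounded over $U_s(\phi,r)$ by the same per-coordinate inequality $|s_{e,v}-r_{e,v}|^p\le p\sigma^{p-1}g^{(0)}/\vartheta_e$, the supremum of its diagonal entries is at most that of $\nabla^2_{(\phi,r)}g$, giving $L_s \le L_{\phi,r}$.

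The main obstacle I anticipate is the exponent bookkeeping in the third step: keeping the powers of $\vartheta_e$, $\sigma$, $d_{\min}$ and $\|\Delta\|_p$ aligned so that the $r$-entry bound lands exactly on $L_{\phi,r}$, and verifying that the $\phi$-entry contribution is genuinely dominated rather than contributing a different (and for $\sigma>1$ potentially larger) power of $\sigma$ — which is where the regime $\sigma\le1$ is used. The computation of $g^{(0)}$ via $\sum_{e\ni v}\vartheta_e=d_v$ together with the crude but sufficient bound $[\Delta_v-d_v]_+\le\Delta_v$ is the other delicate point, since it is what surfaces the $d_{\min}^{-(p-1)}$ and $\|\Delta\|_p^p$ factors in the correct form.
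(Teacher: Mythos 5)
Your proposal is correct and follows essentially the same route as the paper's proof: exploit coordinate-wise separability so the block Hessians are diagonal, bound each diagonal entry over the sub-level set by observing that every nonnegative summand of $g$ is at most $g(\phi^{(0)},r^{(0)},s^{(0)})\le \|\Delta\|_p^p/(p\sigma^{p-1}d_{\min}^{p-1})$, and conclude $L_s\le L_{\phi,r}$ by the symmetry of the $r$- and $s$-entries. Your only addition is making explicit the step $\sigma^{-(p-1)(p-2)/p}\le\sigma^{-(p-1)}$ for the $\phi$-entries, which the paper also uses but without flagging that it requires $\sigma\le 1$ — a fair and slightly more transparent treatment of the same bound.
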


\begin{proof}
Fix $s \in \bigotimes_{e \in E}\RR^{|V|}$ and consider
\[
	g_1(\phi,r) := g(\phi,r,s) = \frac{1}{p}\sum_{e \in E} \vartheta_e\phi_e^p + \frac{1}{p\sigma^{p-1}}\sum_{e \in E}\sum_{v \in V} \vartheta_e|r_{e,v}-s_{e,v}|^p.
\]
The function $g_1(\phi,r)$ is coordinate-wise separable and hence its second order derivative $\nabla^2 g_1(\phi,r)$ is a diagonal matrix. Therefore, the largest eigenvalue of $\nabla^2 g_1(\phi,r)$ is the largest coordinate-wise second order partial derivative, that is,
\[
	L_{\phi,r} = \max_{(\phi,r) \in U_{\phi,r}(s)}\lambda_{\max}(\nabla^2 g_1(\phi,r)) 
	=  \max_{(\phi,r) \in U_{\phi,r}(s)}\max_{e\in E, v\in V}\{\nabla^2_{\phi_e}g_1(\phi,r), \nabla^2_{r_{e,v}}g_1(\phi,r)\}.
\] 
So it suffices to upper bound $\nabla^2_{\phi_e}G(\phi,r)$ and $\nabla^2_{r_{e,v}}G(\phi,r)$ for all $(\phi,r) \in  U_{\phi,r}(s)$. We have that
\[
	g(\phi^{(0)}, r^{(0)}, s^{(0)}) 
	= \frac{1}{p\sigma^{p-1}}\sum_{e \in E} \vartheta_e \sum_{v \in e} \frac{[\Delta_v - d_v]_+^p}{d_v^p}
	= \frac{1}{p\sigma^{p-1}}\sum_{v \in V} \frac{[\Delta_v - d_v]_+^p}{d_v^{p-1}}
	\le \frac{\|\Delta\|_p^p}{p\sigma^{p-1}d_{\min}^{p-1}}
\]
where $d_{\min} = \min_{v \in \supp(\Delta)} d_v$. It follows that for all $(\phi,r) \in U_{\phi,r}(s)$,
\begin{align*}
	\nabla^2_{\phi_e}g_1(\phi,r)&= (p-1)\vartheta_e\phi_e^{p-2} \le \frac{(p-1)\vartheta_e^{2/p}\|\Delta\|_p^{p-2}}{d_{\min}^{(p-1)(p-2)/p}\sigma^{(p-1)(p-2)/p}} \le \frac{(p-1)\vartheta_e^{2/p}\|\Delta\|_p^{p-2}}{d_{\min}^{(p-1)(p-2)/p}\sigma^{p-1}}, ~\forall e \in E,\\
	\nabla^2_{r_{e,v}}g_1(\phi,r)&=(p-1)\frac{\vartheta_e}{\sigma^{p-1}}|s_{e,v}-r_{e,v}|^{p-2} \le \frac{(p-1)\vartheta_e^{2/p}\|\Delta\|_p^{p-2}}{d_{\min}^{(p-1)(p-2)/p}\sigma^{p-1}}, ~\forall e\in E, ~\forall v \in V,
\end{align*}
because otherwise we would have $g(\phi,r,s) > g(\phi^{(0)}, r^{(0)}, s^{(0)})$. Hence,
\[
	L_{\phi,r}
	\le \max_{e\in E} \frac{(p-1)\vartheta_e^{2/p}\|\Delta\|_p^{p-2}}{d_{\min}^{(p-1)(p-2)/p}\sigma^{p-1}} 
	= \frac{(p-1)\vartheta_{\max}^{2/p}\|\Delta\|_p^{p-2}}{d_{\min}^{(p-1)(p-2)/p}\sigma^{p-1}}.
\]
Finally, by the symmetry between $r$ and $s$ in $F(\phi,r,s)$, we know that $L_s \le L_{\phi,r}$.
\end{proof}

{\bf Remark.} Because the iterates generated by Algorithm~\ref{alg:lpAM} monotonically decrease the objective function value, in particular, we have that
\[
	g(\phi^{(0)},r^{(0)},s^{(0)}) \ge g(\phi^{(k+1)},r^{(k+1)},s^{(k)}) \ge g(\phi^{(k+1)},r^{(k+1)},s^{(k+1)})
\]
for any $k \ge 0$. Therefore, the sequence of iterates live in the sub-level sets. As a result, for any $p > 2$, the block Lipschitz smoothness within sub-level sets suffices to obtain the sub-linear convergence rate for the AM method~\cite{Beck2015}.

\subsection{Alternating minimization sub-problems}

We now discuss how to solve the sub-problems in Algorithm~\ref{alg:lpAM} efficiently. By Lemma~\ref{lem:s-step}, we know that the sub-problem with respect to $s$,
\[
	s^{(k+1)} := \argmin\limits_{s} \sum\limits_{e \in E}\vartheta_e \|s_e - r_e^{(k+1)}\|_p^p, ~\mbox{s.t.}~\Delta-\sum\limits_{e\in E}\vartheta_es_e \le d, \ s_{e,v}=0,\forall v\not\in e,
\]
has closed-form solution
\[
	s_e^{(k+1)} = r_e^{(k+1)} + A_eD^{-1}\Big[\Delta - \sum_{e' \in E}\vartheta_{e'} r_{e'}^{(k+1)} - d \Big]_+, ~\forall e \in E.
\]
For the sub-problem with respect to $(\phi,r)$,
\[
	(\phi^{(k+1)}, r^{(k+1)}) := \argmin\limits_{(\phi,r)\in\mathcal{C}} \sum\limits_{e \in E}\vartheta_e\left(\phi_e^p + \frac{1}{\sigma^{p-1}}\|s_e^{(k)} - r_e\|_p^p\right),
\]
note that it decomposes into $|E|$ independent problems that can be minimized separately. That is, for $e \in E$, we have
\begin{equation}
\label{eq:r-step-primal}
\begin{split}
	(\phi_e^{(k+1)}, r_e^{(k+1)})
	&= \argmin_{\phi_e\ge0,r_e\in\phi_eB_e} \vartheta_e\phi_e^p + \frac{1}{\sigma^{p-1}}\vartheta_e\|s_e^{(k)} - r_e\|_p^p \\
	& = \argmin_{\phi_e\ge0,r_e\in\phi_eB_e} \frac{1}{p}\phi_e^p + \frac{1}{p\sigma^{p-1}}\|s_e^{(k)} - r_e\|_p^p.
\end{split}
\end{equation}
The above problem~\eqref{eq:r-step-primal} is strictly convex so it has a unique minimizer.

We focus on $p = 2$ first. In this case, problem~\eqref{eq:r-step-primal} can be solved in sub-linear time using either the conic Frank-Wolfe algorithm or the conic Fujishige-Wolfe minimum norm algorithm studied in \cite{li2020quadratic}. Notice that the dimension of problem \eqref{eq:r-step-primal} is the size of the corresponding hyperedge. Therefore, as long as the hyperedge is not extremely large, we can easily obtain a good update $(\phi_e^{(k+1)}, r_e^{(k+1)})$.

If $B_e$ has a special structure, for example, if the hyperedge weight $w_e$ models unit cut-cost, then an exact solution for \eqref{eq:r-step-primal} can be computed in time $O(|e| \log |e|)$ \cite{li2020quadratic}. For completeness we transfer the algorithmic details in \cite{li2020quadratic} to our setting and list them in Algorithm~\ref{alg:unit}. The basic idea is to find optimal dual variables achieving dual optimality, and then recover primal optimal solution from the dual. We refer the reader to \cite{li2020quadratic} for detailed justifications. Given $e \in E$, $s_e \in \RR^{|V|}$, and $a,b \in \RR$, denote 
\[
	e_{\ge}(a) := \{v \in e ~|~ s_{e,v} \ge \sigma a\} ~~\mbox{and}~~ e_{\le}(b) := \{v \in e ~|~ s_{e,v}\le \sigma b\}.
\]
Define 
\[
	\gamma(a,b) := a - b + \sum_{v \in e_{\ge}(a)}\sigma\left(a - \frac{s_{e,v}}{\sigma}\right).
\]

{\centering
\begin{minipage}{.85\linewidth}
\begin{algorithm}[H]
\caption{An Exact Projection Algorithm for \eqref{eq:r-step-primal} ($p=2$, unit cut-cost) \cite{li2020quadratic}}
\label{alg:unit}
\begin{algorithmic}[1]
\STATE {\bf Input:} $e$, $s_e$. 
\STATE $a \gets \max_{v \in e} s_{e,v}/\sigma$, ~ $b \gets \min_{v \in e} s_{e,v}/\sigma$
\STATE {\bf While} true:
\STATE \hspace{4mm} $w_a \gets \sigma\,|e_{\ge}(a)|$, \ $w_b \gets \sigma\,|e_{\le}(b)|$
\STATE \hspace{4mm} $a_1 \gets \max_{v \in e \setminus e_{\ge}(a)} s_{e,v}/\sigma$, \ $b_1 \gets b + (a-a_1)w_a/w_b$
\STATE \hspace{4mm} $b_2 \gets \min_{v \in e \setminus e_{\le}(b)} s_{e,v}/\sigma$, \ $a_2 \gets a - (b_2-b)w_b/w_a$
\STATE \hspace{4mm} $i^* \gets \argmin_{i \in \{1,2\}} b_i$
\STATE \hspace{4mm} {\bf If} $a_{i^*} \le b_{i^*}$ {\bf or} $\gamma(a_{i^*}, b_{i^*}) \le 0$ {\bf break}
\STATE \hspace{4mm} $a \gets a_{i^*}$, \ $b \gets b_{i^*}$
\STATE $a \gets a - \gamma(a, b)w_b/(w_aw_b + w_a + w_b)$, \ $b \gets b + \gamma(a, b)w_a/(w_aw_b + w_a + w_b)$
\STATE {\bf For} $v \in e$ {\bf do}:
\STATE \hspace{4mm} {\bf If} $v \in e_{\ge}(a)$ {\bf then} \ $r_{e,v} \gets s_{e,v} - \sigma a$
\STATE \hspace{4mm} {\bf Else if} $v \in e_{\le}(b)$ {\bf then} \ $r_{e,v} \gets s_{e,v} - \sigma b$
\STATE \hspace{4mm} {\bf Else} \ $r_{e,v} \gets 0$
\STATE {\bf Return:} $r_e$
\end{algorithmic}
\end{algorithm}
\end{minipage}
\par
}
\vspace{5pt}

Now we discuss the case $p > 2$ in \eqref{eq:r-step-primal}. The dual of \eqref{eq:r-step-primal} is written as
\begin{equation}
\label{eq:r-step-dual}
	\min_{y_e} \frac{1}{q}f_e(y_e)^q + \frac{\sigma}{q}\|y_e\|_q^q - y_e^Ts_e^{(k)}.
\end{equation}
Let $(\phi_e^*, r_e^*)$ and $y_e^*$ be optimal solutions of \eqref{eq:r-step-primal} and \eqref{eq:r-step-dual}, respectively. Then one has
\[
	r_e^* = s_e^{(k)} - \sigma (y_e^*)^{q-1} ~\mbox{and}~ \phi_e^*= \big((r_e^*)^Ty_e^*\big)^{1/q}.
\]
Both the derivation of \eqref{eq:r-step-dual} and the above relations between $(\phi_e^*, r_e^*)$ and $y_e^*$ follow from similar reasoning and algebraic computations used in the proofs of Lemma~\ref{lem:dual} and Lemma~\ref{lem:conjugate}. Therefore, we can use subgradient method to compute $y_e^*$ first and then recover $\phi_e^*$ and $r_e^*$. For special cases like the unit cut-cost, a similar approach to Algorithm~\ref{alg:unit} can be adopted to obtain an almost (up to a binary search tolerance) exact solution, by modifying Steps 2-6 to work with general $\ell_p$-norm and replacing Step 10 with binary search. See Algorithm~\ref{alg:lpunit} for details. 

{\bf Caution.} To simplify notation in Algorithm~\ref{alg:lpunit}, for $c \in \RR$ and $p > 0$, $c^{p}$ is to be interpreted as $c^p := |c|^p\sign(c)$, where we treat $\sign(0) := 0$. For $q = p/(p-1)$, we define
\[
	\gamma_p(a,b) := (a-b)^{q-1} + \sum_{v \in e_{\ge}(a^{q-1})}\sigma\left(a^{q-1} - \frac{s_{e,v}}{\sigma}\right).
\]
{\centering
\begin{minipage}{.88\linewidth}
\begin{algorithm}[H]
\caption{An $\ell_p$-Projection Algorithm for \eqref{eq:r-step-primal} ($p>2$, unit cut-cost)}
\label{alg:lpunit}
\begin{algorithmic}[1]
\STATE {\bf Input:} $e$, $s_e$. 
\STATE $a \gets \max_{v \in e} (s_{e,v}/\sigma)^{p-1}$, ~ $b \gets \min_{v \in e} (s_{e,v}/\sigma)^{p-1}$, ~ $q \gets p/(p-1)$
\STATE {\bf While} true:
\STATE \hspace{4mm} $w_a \gets \sigma\,|e_{\ge}(a^{q-1})|$, \ $w_b \gets \sigma\,|e_{\le}(b^{q-1})|$
\STATE \hspace{4mm} $a_1 \gets \max_{v \in e \setminus e_{\ge}(a^{q-1})} (s_{e,v}/\sigma)^{p-1}$, ~$b_1 \gets (b^{q-1} + (a^{q-1}-a_1^{q-1})w_a/w_b)^{p-1}$
\STATE \hspace{4mm} $b_2 \gets \min_{v \in e \setminus e_{\le}(b^{q-1})} (s_{e,v}/\sigma)^{p-1}$, ~ $a_2 \gets (a^{q-1} - (b_2^{q-1}-b^{q-1})w_b/w_a)^{p-1}$
\STATE \hspace{4mm} $i^* \gets \argmin_{i \in \{1,2\}} b_i$
\STATE \hspace{4mm} {\bf If} $a_{i^*} \le b_{i^*}$ {\bf or} $\gamma_p(a_{i^*}, b_{i^*}) \le 0$ {\bf break}
\STATE \hspace{4mm} $a \gets a_{i^*}$, \ $b \gets b_{i^*}$
\STATE Employ binary search for $\hat{a} \in [b,a]$ such that $\gamma_p(\hat{a},\hat{b}) = 0$ while maintaining $\hat{b} = (b^{q-1} + (a^{q-1}-\hat{a}^{q-1})w_a/w_b)^{p-1}$ and $\hat{b} \le \hat{a}$
\STATE {\bf For} $v \in e$ {\bf do}:
\STATE \hspace{4mm} {\bf If} $v \in v \in e_{\ge}(\hat{a}^{q-1})$ {\bf then} \ $r_{e,v} \gets s_{e,v} - \sigma \hat{a}^{q-1}$
\STATE \hspace{4mm} {\bf Else if} $v \in e_{\le}(\hat{b}^{q-1})$ {\bf then} \ $r_{e,v} \gets s_{e,v} - \sigma \hat{b}^{q-1}$
\STATE \hspace{4mm} {\bf Else} \ $r_{e,v} \gets 0$
\STATE {\bf Return:} $r_e$
\end{algorithmic}
\end{algorithm}
\end{minipage}
\par
}

\section{Empirical set-up and results}\label{sec:A03}

\subsection{Experiments using synthetic data}

In this subsection we provide details aboue how we generate synthetic hypergraphs using $k$-uniform stochastic block model and how we set the parameters for the algorithms used in our experiments. Additional synthetic experiments that demonstrate or explain the robustness of our method are also provided.

{\bf Data generation.} We generate four sets of hypergraphs using the generalized $k$HSBM described in the main paper. All hypergraphs have $n=100$ nodes. For simplicity, we require that each block in the hypergraph has constant size 50.

\underline{\it 1st set of hypergraphs.} We generate the first set of hypergraphs with $k=3$, constant $p=0.0765$ and varying $q \in [0.0041,0.0735]$. Recall that for $k=3$ there is only one possible inter-cluster probability $q \equiv q_1$. We pick $p=0.0765$ so the expected number of intra-cluster hyperedges is 1500 for each block of size 50. We set a wide range for $q$ so that the interval covers both extremes, i.e., when the ground-truth target cluster is very clean or very noisy. These hypergraphs are used to evaluate the performance of algorithms for the unit cut-cost setting when the target cluster conductance varies. Figure~4 in the main paper uses the local clustering results on these hypergraphs.

\underline{\it 2nd set of hypergraphs.} For the second set of hypergraphs, we vary $k \in \{3,4,5,6\}$. Moreover, we set $q_2 = \cdots = q_{\lfloor k/2 \rfloor} = 0$, so every inter-cluster hyperedge contains a single node on one side and the rest on the other side. In this setting, separating the two ground-truth communities will incur a small penalty using the cardinality cut-cost, but a large penalty using the unit cut-cost. Therefore, methods that exploit appropriate cardinality-based cut-cost should perform better. The hypergraphs are sampled so that the conductance of a block stays the same across different $k$'s. We compute the conductance based on the unit cut-cost when generating the hypergraphs, because the scale of conductance based on the unit cut-cost is less affected by $k$ than the scale of conductance based on the cardinality cut-cost. See details below for how the scale of conductance based on the cardinality cut-cost is affected by $k$. The second set of hypergraphs is used to evaluate the performance of algorithms for both unit and cardinality cut-costs when the hyperedge size varies. Figure~5 in the main paper (and Figure~\ref{fig:edgesize_cond20} and Figure~\ref{fig:edgesize_cond25} in the appendix) uses the local clustering results on these hypergraphs.

\underline{\it 3rd set of hypergraphs.} For the third set of hypergraphs, we set $q_2 = \cdots = q_{\lfloor k/2 \rfloor} = 0$. We consider constant $k=4$ or $k=5$, constant $p$ and varying $q_1$. These hypergraphs are used to evaluate the performance of algorithms for both unit and cardinality cut-costs when the target cluster conductance varies. Figure~\ref{fig:varycond_k4} and Figure~\ref{fig:varycond_k5} in the appendix are based on the local clustering results on these hypergraphs.

\underline{\it 4th set of hypergraphs.} This set consists of two hypergraphs generated with $k=3$, $p=0.04$ and $q \in \{0.001, 0.011\}$. The ground-truth target cluster in the first hypergraph has conductance 0.05, while the ground-truth target cluster in the second hypergraph has conductance 0.3. These two hypergraphs are used to compare the performance of algorithms for the unit cut-cost setting, when the theoretical assumptions of LH holds (for the first hypergraph) or fails (for the second hypergraph).


{\bf Parameters.} For HFD, for all synthetic experiments, we initialize the seed mass so that $\|\Delta\|_1$ is three times the volume of the target cluster (recall from Assumption~\ref{assum:delta} this is without loss of generality). We set $\sigma=0.01$. We tune the parameters for LH as suggested by the authors~\cite{LVHLG20}. Specifically, LH has a regularization parameter $\kappa$ and we let $\kappa = c \cdot r$ where $r$ is the ratio between the number of seed node(s) and the size of the target cluster. We perform a binary search on $c$ and find that $c = 0.35$ gives good results for the synthetic hypergraphs. An important parameter for LH is $\delta$. When $\delta=1$ it models unit cut-cost and when $\delta\ge1$ it models cardinality-based cut-cost with an upper bound $\delta$~\cite{LVHLG20}. We consider both cases $\delta=1$ (U-LH) and $\delta\ge1$ (C-LH). In principle, for $k$-uniform hypergraphs LH should produce the same result for any $\delta\ge k$, so one could simply set $\delta=k$ for C-LH. However in our experiments we find that the $\delta$ value that gives the best clustering results can be much larger than $k$. In order to get the best performance out of C-LH, we run C-LH for $\delta = 2^i$, $i = 0,1,\ldots,12$. Among the 13 output clusters from C-LH we pick the one with the lowest conductance. For ACL, we use the same set of parameter values used in~\cite{LVHLG20} because that parameter setting also produces good results in our synthetic experiments.

{\bf Scale of cardinality-based conductance.} To see how ground-truth conductance scales (computed using the cardinality cut-cost) with hyperedge size $k \ge 2$, let us assume that a hypergraph $H = (V,E)$, having $|V|=100$ nodes and two blocks where each block contains 50 nodes, is generated from $p = 0$, $q_1 = 1$ and $q_2 = \ldots = q_{\lfloor k/2 \rfloor} = 0$. In this case, the hypergraph consists of all and only inter-cluster hyperedges. Let $C$ denote a target cluster, that is, $C$ is either one of the two ground-truth blocks. Since we have $|V|=100$ nodes and each of the two blocks contains $50$ nodes, the total number of hyperedges is 
\[
	|E| = 2\binom{50}{k-1}\binom{50}{1}.
\]
Let $w_e$ denote the cardinality-based cut-cost given by $w_e(S) = \min\{|S \cap e|, |e \setminus S|\}/\lfloor |e|/2 \rfloor$. Then for each $e \in E$ we have that $w_e(C) = \frac{1}{\lfloor k/2 \rfloor}$. Moreover, the volume of $C$ is
\[
	\vol(C) = (k-1)\binom{50}{k-1}\binom{50}{1} + \binom{50}{1}\binom{50}{k-1} = k\binom{50}{k-1}\binom{50}{1},
\]
and hence we have
\[
	\Phi(C) = \frac{\vol(\partial C)}{\vol(C)} = \frac{\sum_{e \in E}w_e(C)}{\vol(C)} =  \frac{\frac{1}{\lfloor k/2 \rfloor}|E|}{\vol(C)} = \frac{\frac{2}{\lfloor k/2 \rfloor}\binom{50}{k-1}\binom{50}{1}}{k\binom{50}{k-1}\binom{50}{1}} = \frac{2}{k\lfloor k/2 \rfloor}.
\]
This means that, for any $p\ge0$, $q_1\le1$, $q_2 = \cdots = q_{\lfloor k/2 \rfloor} = 0$, let $B$ be one of the two blocks in $H$, then $\Phi(B) \le 1$ for $k=2,3$, $\Phi(B) \le 1/4$ for $k = 4$, and $\Phi(B) \le 1/5$ for $k = 5$. This explains why the ranges of ground-truth conductance we consider in Figure~\ref{fig:varycond_k4} and Figure~\ref{fig:varycond_k5} are different from the range of ground-truth conductance in Figure~4 in the main paper. For each $k$ we try to make the range of conductance (i.e., $x$-axis) as wide as possible, but due to the different scales of cardinality-based conductance for different $k$, the ranges vary accordingly.

{\bf Additional results.} Figure~\ref{fig:varycond_k4} and Figure~\ref{fig:varycond_k5} show how the algorithms perform on $k$-uniform hypergraphs for $k=4,5$, respectively, as we vary the target cluster conductances. The plots show that as the target cluster becomes more noisy, the performance of all methods degrades. However, C-HFD is better in terms of both conductance and F1 score, especially when the target cluster is noisy but not complete noise (i.e., the ground-truth conductance is high but not too high). For $k=5$ and high-conductance regime, methods that use unit cut-cost, e.g., U-HFD, have poor performance because they find low-conductance clusters based on the unit cut-cost as opposed to the cardinality cut-cost. In general, lower unit cut-cost conductance does not necessarily translates to lower cardinality-based conductance or higher F1 score. For both Figure~\ref{fig:varycond_k4} and Figure~\ref{fig:varycond_k5}, the ground-truth conductance is computed using cardinality-based cut-cost, therefore the ground-truth conductances (on the $x$-axes) have different scales and ranges. Figure~\ref{fig:edgesize_cond20} and Figure~\ref{fig:edgesize_cond25} show the median (markers) and 25-75 percentiles (lower-upper bars) of conductance ratios and F1 scores for $k = 3, 4, 5, 6$. The target clusters have unit cut-cost conductances around 0.2 for Figure~\ref{fig:edgesize_cond20} and 0.25 for Figure~\ref{fig:edgesize_cond25}. Notice that, when the target clusters are less noisy (cf. Figure~5 in the main paper where target clusters are more noisy, having unit conductance around 0.3), U-HFD and C-HFD are significantly better than other methods. The performance of U-HFD is slightly affected by the hyperedge size when the target clusters have unit conductance around 0.25, while the performance of C-HFD stays the same across all $k$'s.

\begin{figure}[ht!]
	\centering
	\begin{subfigure}{0.45\textwidth}
		\centering
		\includegraphics[width=.95\textwidth]{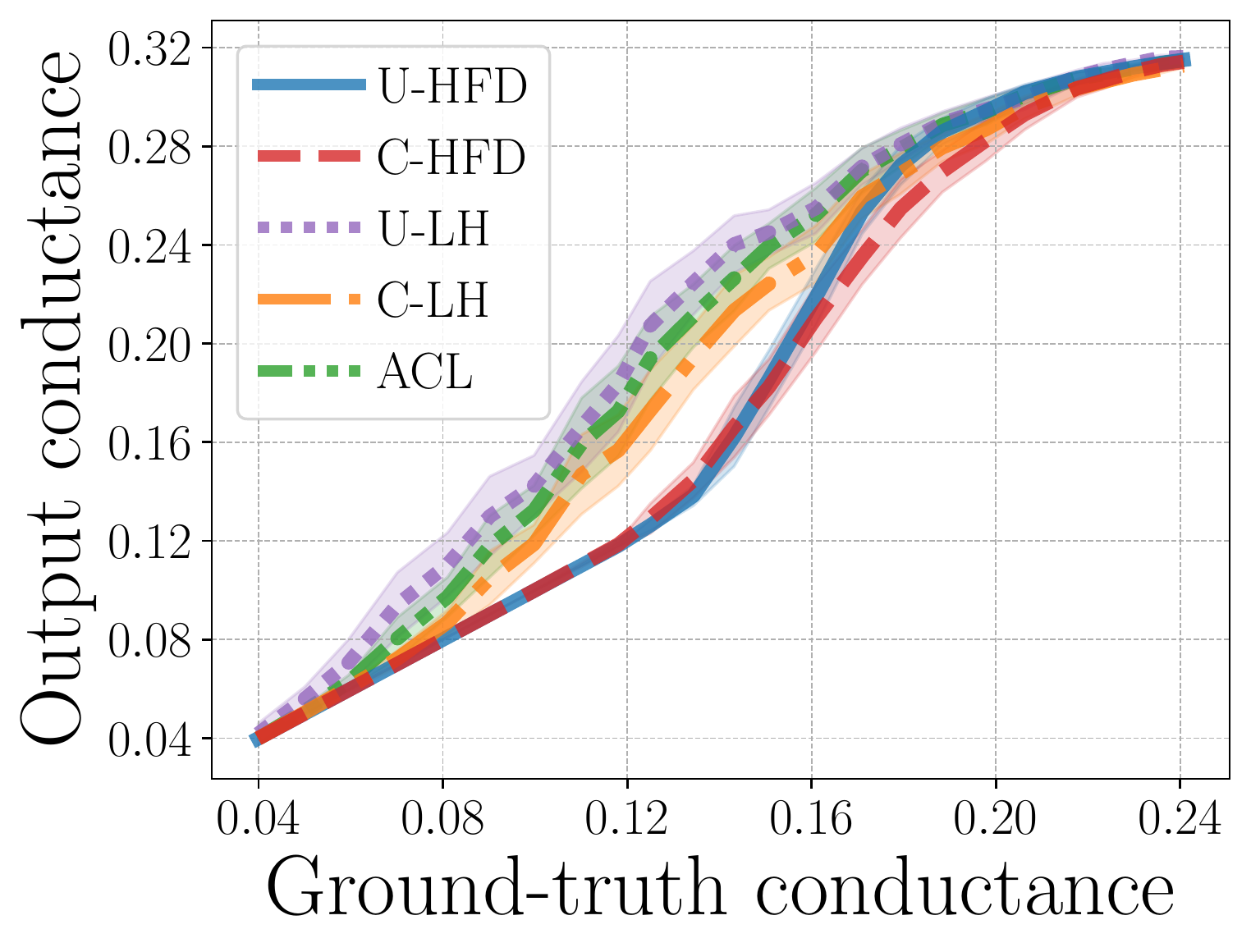}
	\end{subfigure}%
	\begin{subfigure}{0.45\textwidth}
		\centering
		\includegraphics[width=.95\textwidth]{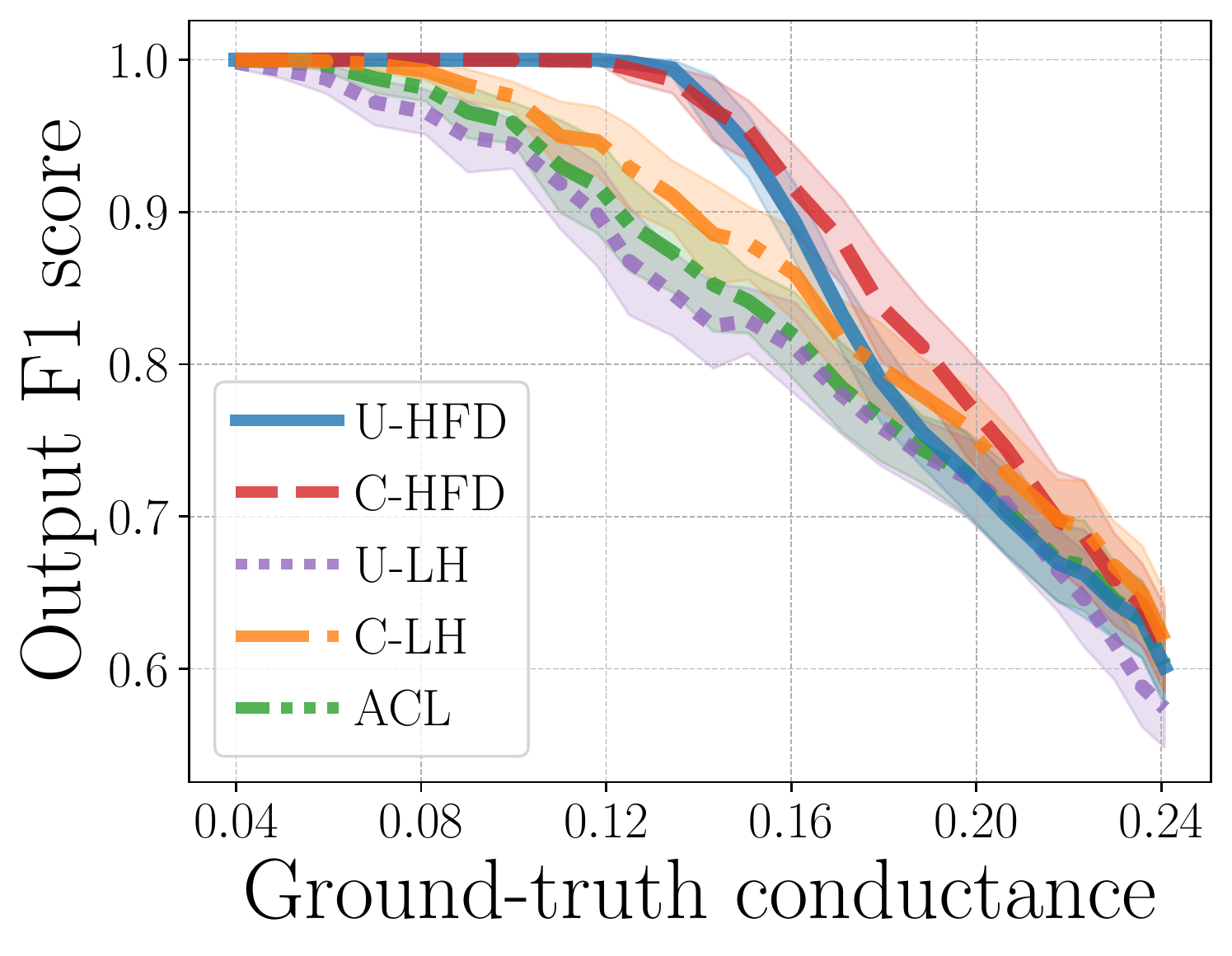}
	\end{subfigure}
	\caption{Average output conductance and F1 score against ground-truth conductance, on $k$-uniform hypergraphs with $k=4$. The error bars show variation over 50 runs using different seed nodes. Both the ground-truth and the target conductances are computed using cardinality-based cut-cost.}
	\label{fig:varycond_k4}
\end{figure}

\begin{figure}[ht!]
	\centering
	\begin{subfigure}{0.45\textwidth}
		\centering
		\includegraphics[width=.95\textwidth]{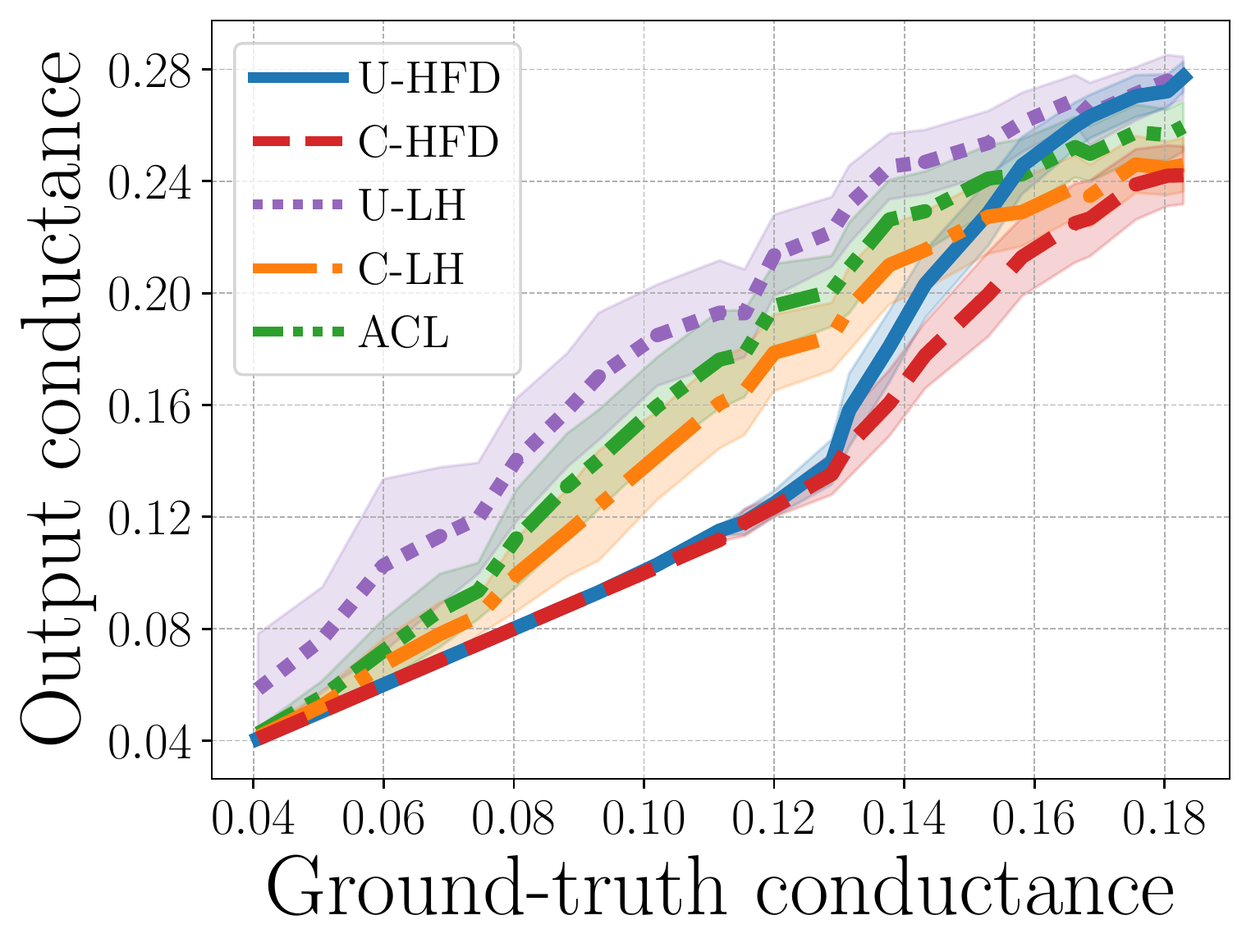}
	\end{subfigure}%
	\begin{subfigure}{0.45\textwidth}
		\centering
		\includegraphics[width=.95\textwidth]{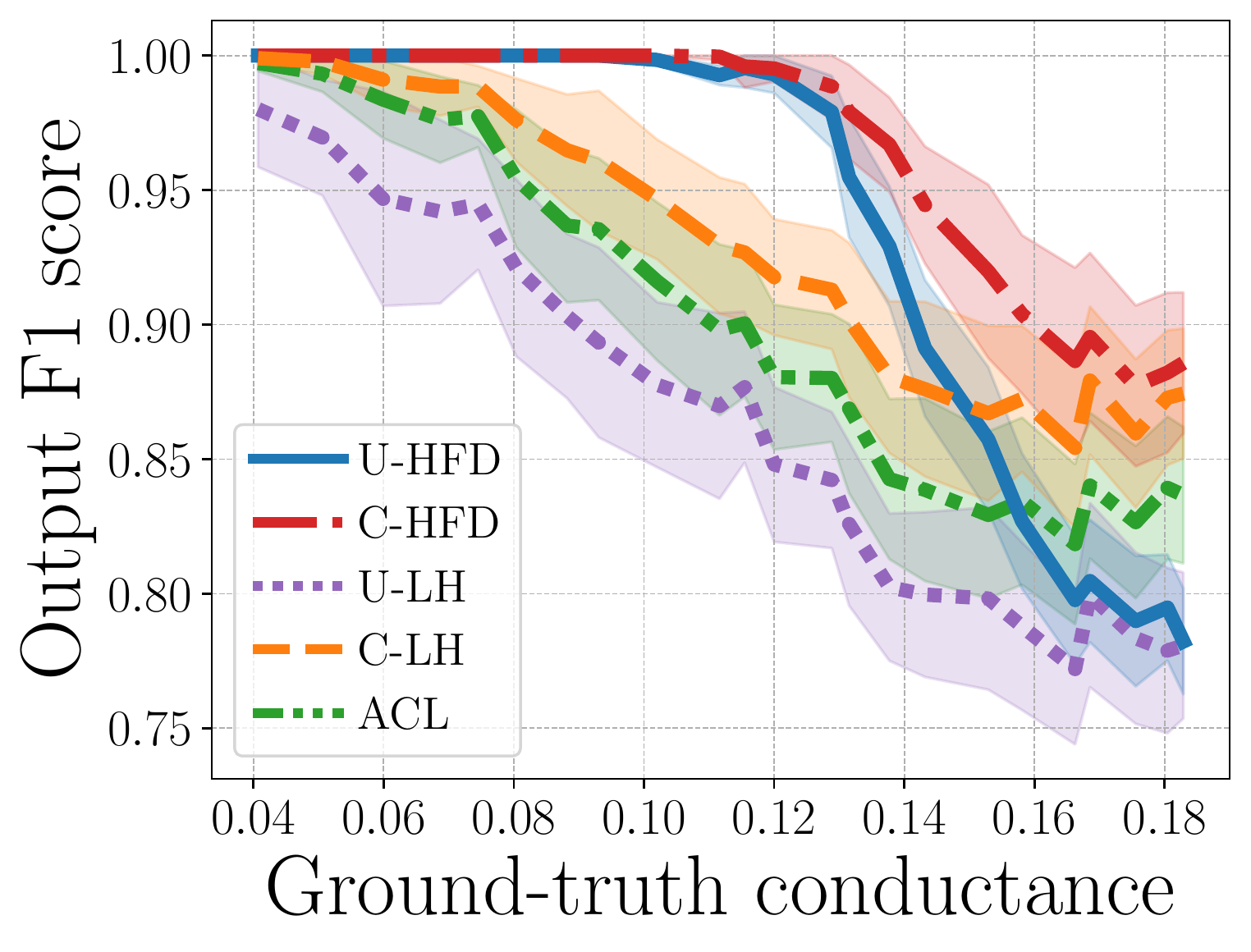}
	\end{subfigure}
	\caption{Average output conductance and F1 score against ground-truth conductance, on $k$-uniform hypergraphs with $k=5$. The error bars show variation over 50 runs using different seed nodes. Both the ground-truth and the target conductances are computed using cardinality-based cut-cost.}
	\label{fig:varycond_k5}
\end{figure}

\begin{figure}[ht!]
	\centering
	\begin{subfigure}{0.45\textwidth}
		\centering
		\includegraphics[width=.95\textwidth]{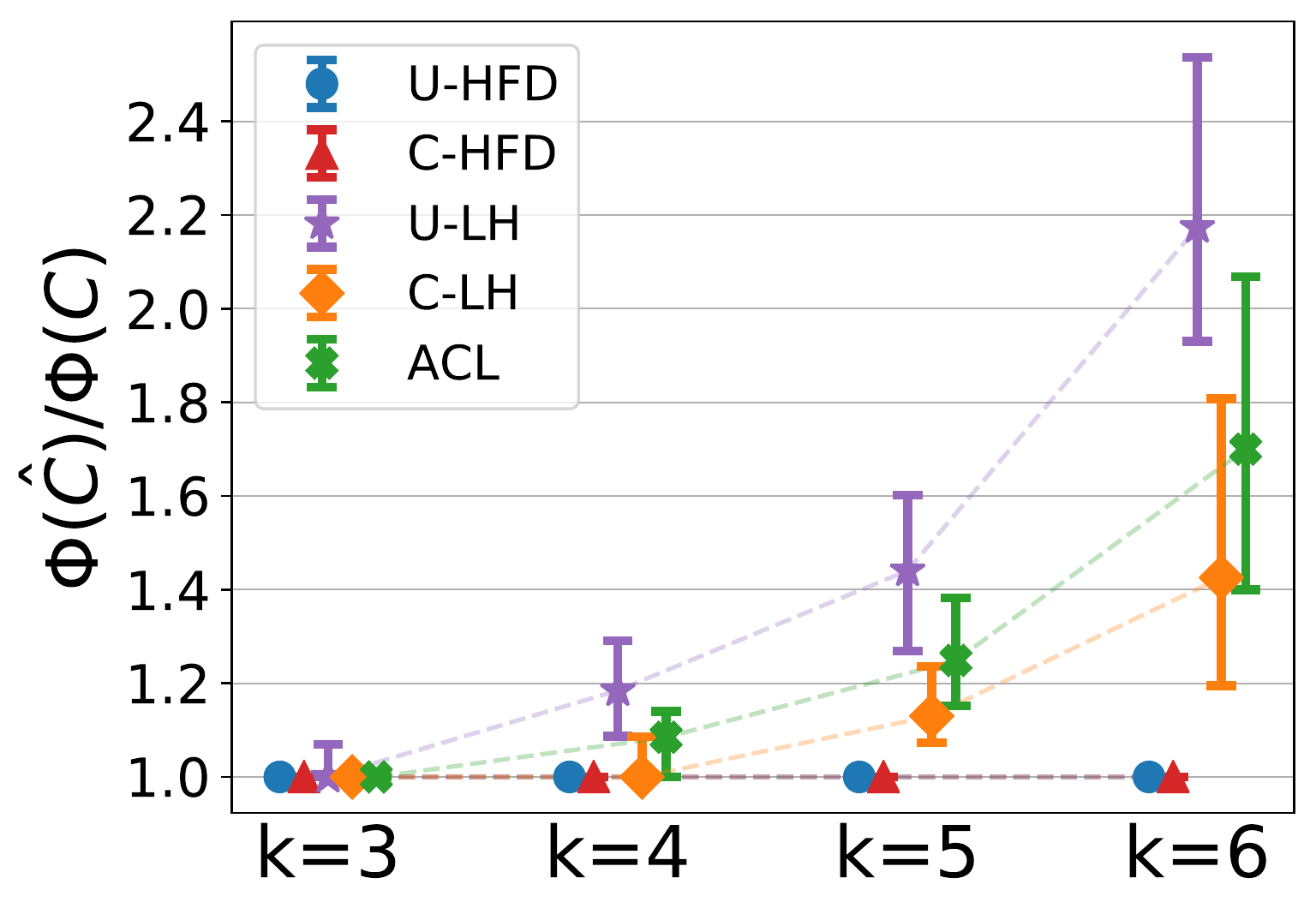}
	\end{subfigure}%
	\begin{subfigure}{0.45\textwidth}
		\centering
		\includegraphics[width=.95\textwidth]{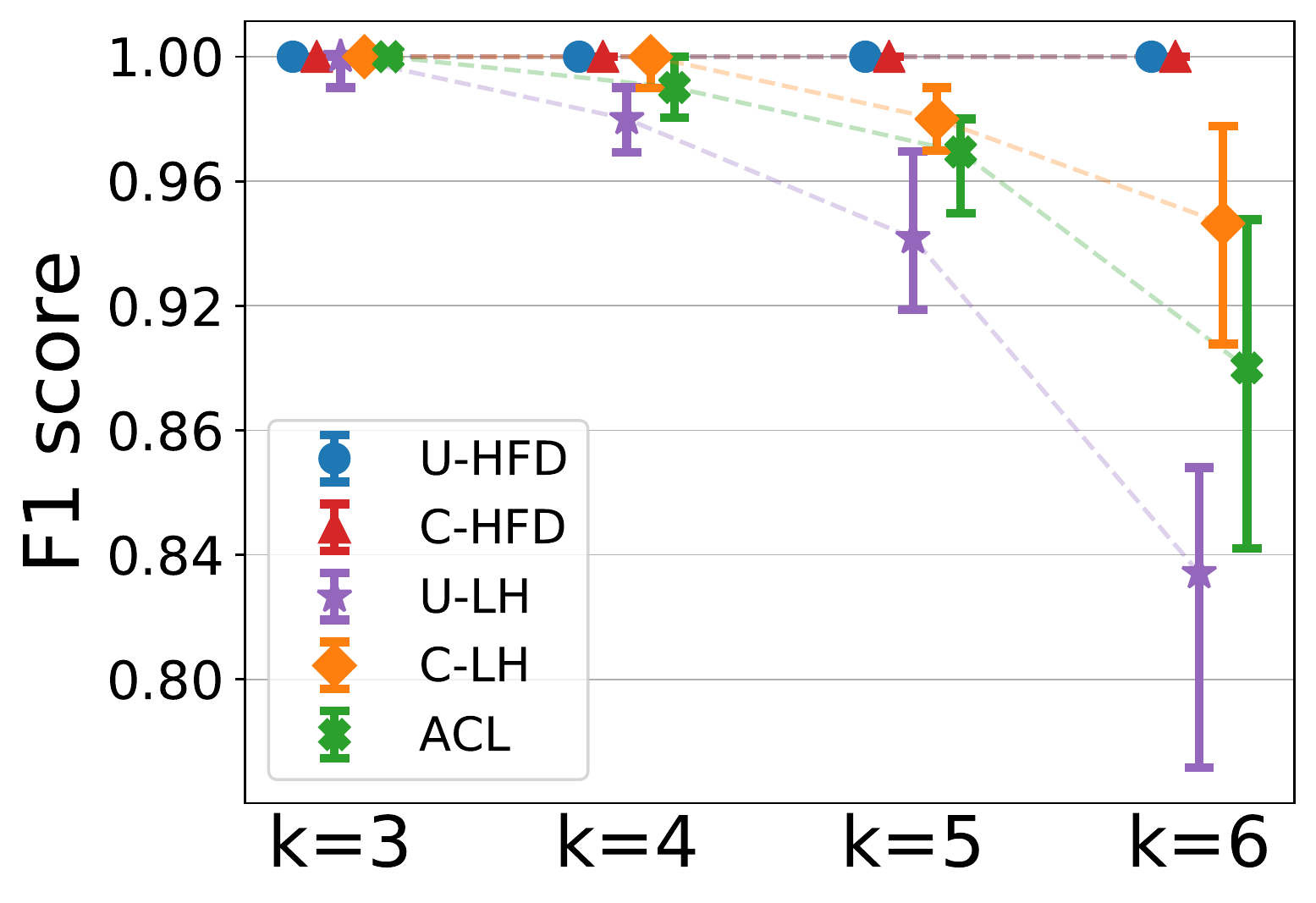}
	\end{subfigure}
	\caption{Conductance ratio and F1 score on $k$-uniform hypergraphs for $k \in \{3,4,5,6\}$. Target clusters have unit conductance around 0.20.} 
	\label{fig:edgesize_cond20}
\end{figure}

\begin{figure}[ht!]
	\centering
	\begin{subfigure}{0.45\textwidth}
		\centering
		\includegraphics[width=.95\textwidth]{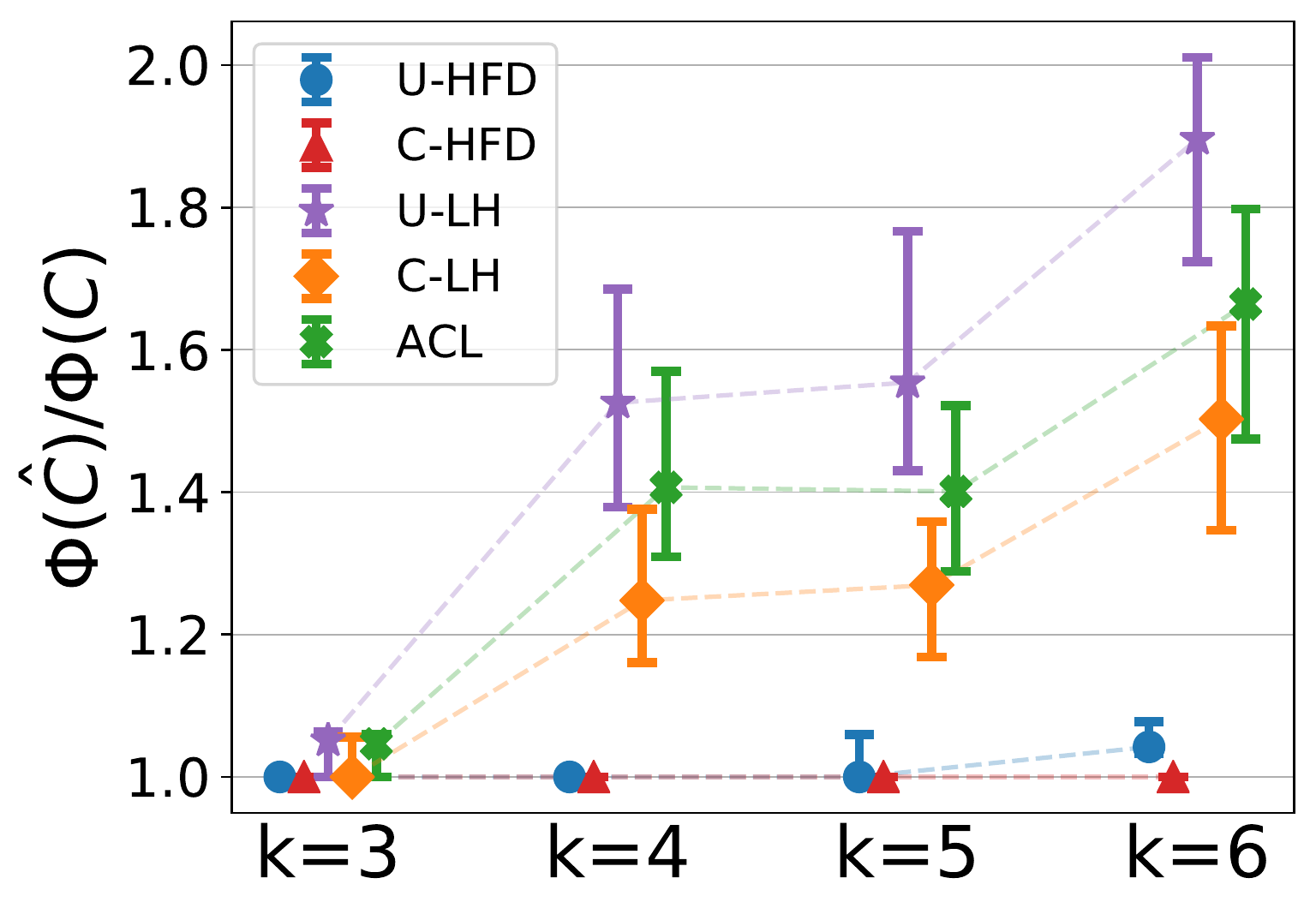}
	\end{subfigure}%
	\begin{subfigure}{0.45\textwidth}
		\centering
		\includegraphics[width=.95\textwidth]{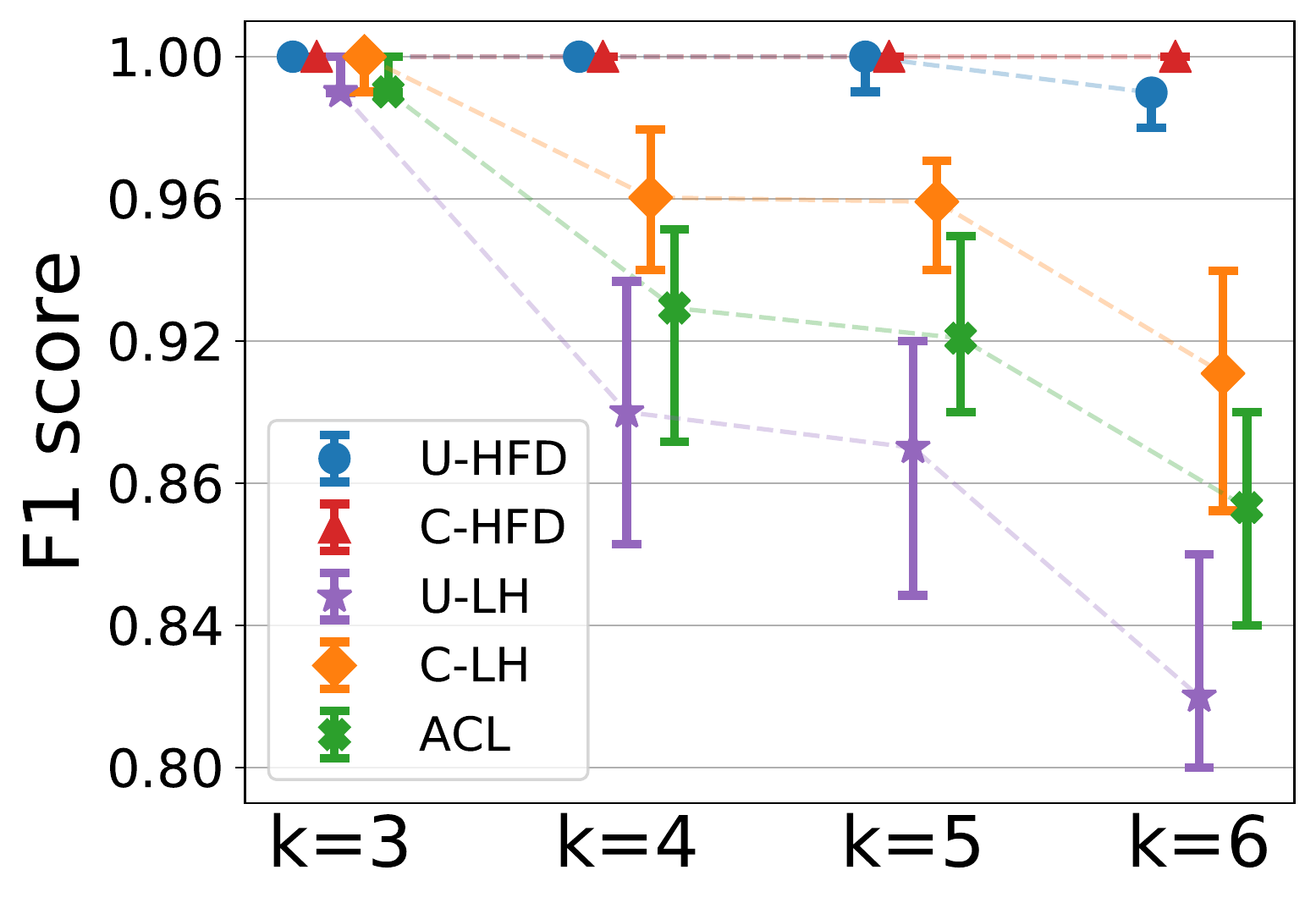}
	\end{subfigure}
	\caption{Conductance ratio and F1 score on $k$-uniform hypergraphs for $k \in \{3,4,5,6\}$. Target clusters have unit conductance around 0.25.} 
	\label{fig:edgesize_cond25}
\end{figure}

{\bf Why is the empirical performance of U-HFD better than U-LH?} For the unit cut-cost setting, the local clustering guarantee for HFD holds under much weaker assumptions than those required for LH. The assumptions for LH could fail in many cases, and consequently we see that U-HFD has significantly better performance than U-LH in the experiments with both synthetic and real data. More specifically, the theoretical framework for LH assumes that the node embeddings are global (i.e., the solution is dense). However, in order to obtain a localized algorithm, the authors use a regularization parameter $\kappa>0$ to impose sparsity in the solution. The localized algorithm computes a sparse approximation to the original global solution, but some clustering errors could also be introduced. In general, this does not seem to be a major issue, as localized solutions only seem to slightly affect the clustering performance as shown in Figure~\ref{fig:LHassump}. A more crucial assumption of LH is that its approximation guarantee relies on a strong condition that the conductance of the target cluster is upper bounded by $\frac{\gamma}{8c}$, where $\gamma \in (0,1)$ is a tuning parameter and $c$ is a constant that depends on both $\gamma$ and a specific sampling strategy for selecting a seed node from the target cluster. In our experiments we find that this assumption often breaks. In what follows we provide a simple illustrating example using synthetic hypergraphs. First of all, we sample a sequence of hypergraphs using $k$HSBM with $n=100$ nodes, two ground-truth communities each consisting of 50 nodes, constant $k=3$, varying $p$ and $q$. For each hypergraph we identify one ground-truth community as the target cluster, and we select a seed node uniformly at random from the target cluster. We compute the quantity $\frac{\gamma}{8c}$ and we find that this quantity is always less than 0.12 for any $\gamma \in (0,1)$. This means that in order for the assumption of LH to hold, the target cluster must have conductance no more than 0.12, which is a very strict requirement and cannot hold in general. In order to compare the performances of LH when its assumption holds or fails, respectively, we picked two hypergraphs (i.e., the fourth set of hypergraphs that we generate) that correspond to the two scenarios. The target clusters have conductance 0.05 and 0.3, respectively. Therefore, the assumption for LH holds for the first hypergraph but fails for the second hypergraph. Moreover, we consider both global and localized solutions for LH. The global solution demonstrates the performance of LH under the required theoretical framework, while the localized solution demonstrates what happens in practice when one uses sparse approximation for computational efficiency. For LH, we compute the global solution by simply setting the regularization parameter $\kappa$ to 0; we tune the localized solution and set $\kappa = 0.25r$ where $r$ is the ratio between the number of seed node(s) and the size of the target cluster. The way we pick $\kappa$ is similar to the authors' choice for LH. For HFD, we set $\sigma = 0.01$ and initial mass 3 times the volume of the target cluster. We run both methods multiple times, each time we use a different node from the target cluster as the single seed node. The median, lower and upper quantiles of F1 scores are shown in Figure~\ref{fig:LHassump}. For LH, observe that (i) for both hypergraphs where the assumption either holds or fails, localizing the solution slightly reduces the F1 score, and (ii) for both global and localized solutions, LH has much worse performance on the hypergraph where its assumption does not hold. On the other hand, HFD perfectly recovers the target clusters in both settings. 

\begin{figure}[htbp!]
	\centering
	\includegraphics[width=.7\textwidth]{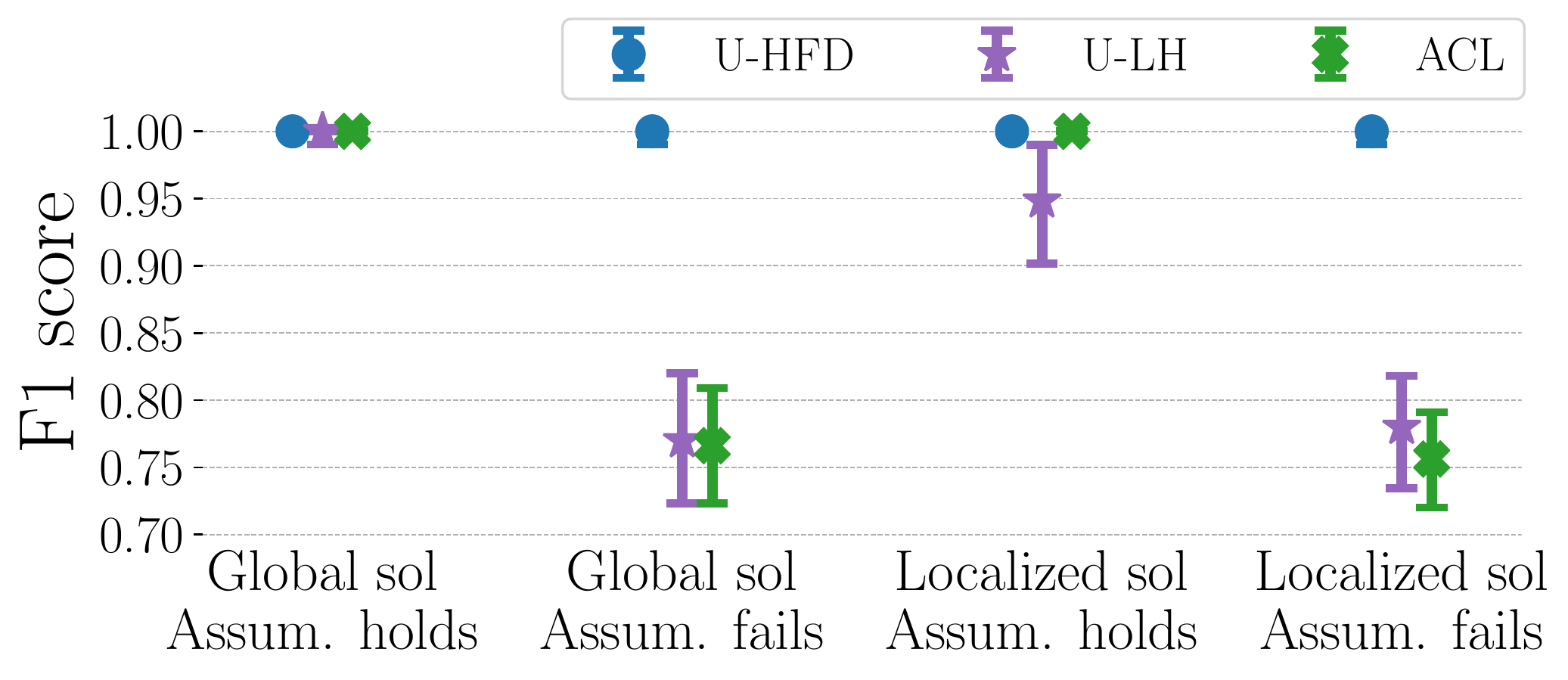}
	\caption{Local clustering results under various settings for LH. The markers show the median, the error bars show the 25th and 75th percentiles, respectively. The left-most case aligns with the required theoretical framework for LH, moreover, the srong assumption on the target cluster conductance is satisfied; the right-most case is what typically happens in practice when one applies localized algorithm for LH, moreover, the assumption on the target cluster conductance does not hold. ACL is a heuristic method that applies to the star expansion of hypergraphs. ACL has no performance guarantee. In practice, we observe that ACL and LH have similar performances.}
	\label{fig:LHassump}
\end{figure}

\subsection{Experiments using real-world data}

\subsubsection{Datasets and ground-truth clusters}
We provide complete details on the real hypergraphs we used in the experiments. The last three datasets are used for additional experiments in the appendix only.

\underline{{\it Amazon-reviews}}~\cite{ni2019justifying,VBK20}.
This is a hypergraph constructed from Amazon product review data, where each node represents a product.\ A set of products are connected by a hyperedge if they are reviewed by the same person.\ We use product category labels as ground truth cluster identities.\ In total there are 29 product categories. Because we are mostly interested in local clustering, we consider all clusters consisting of less than 10,000 nodes.

\underline{{\it Trivago-clicks}}~\cite{CVB2021}. The nodes in this hypergraph are accommodations/hotels.\ A set of nodes are connected by a hyperedge if a user performed ``click-out'' action during the same browsing session, which means the user was forwarded to a partner site. \ We use geographical locations as ground truth cluster identities.\ There are 160 such clusters. We consider all clusters in this dataset that consists of less than 1,000 nodes and has conductance less than 0.25.

\underline{{\it Florida Bay food network}}~\cite{LM17}. Nodes in this hypergraph correspond to different species or organisms that live in the Bay, and hyperedges correspond to transformed network motifs of the original dataset.\ Each species is labelled according its role in the food chain.

\underline{{\it High-school-contact}}~\cite{MFB2015,CVB2021}. Nodes in this hypergraph represent high school students.\ A group of people are connected by a hyperedge if they were all in proximity of one another at a given time, based on data from sensors worn by students.\ We use the classroom to which a student belongs to as ground truth. In total there are 9 classrooms.

\underline{{\it Microsoft-academic}}~\cite{SSSMEHW15,AVB20}. The original co-authorship network is a subset of the Microsoft Academic Graph where nodes are authors and hyperedges correspond to a publication from those authors. We take the dual of the original hypergraph by converting hyperedges to nodes and nodes to hyperedges.\ After constructing the dual hypergraph, we removed all hyperedges having just one node and we kept the largest connected component. \
In the resulting hypergraph, each node represents a paper and is labelled by its publication venue.\ A set of papers are connected by a hyperedge if they share a common coauthor.\ We combine similar computer science conferences into four broader categories: Data (KDD, WWW, VLDB, SIGMOD), ML (ICML, NeurIPS), TCS (STOC, FOCS), CV (ICCV, CVPR).

\underline{{\it Oil-trade network}}. This hypergraph is constructed using the 2017 international oil trade records from UN Comtrade Dataset. We adopt a similar modelling approach to Figure~1 in the main paper. Each node represents a country, $\{v_1,v_2,v_3,v_4\}$ form a hyperedge if the trade surplus from each of $v_1,v_2$ to each of $v_3,v_4$ exceeds 10 million USD (this is roughly 80\% percentile country-wise oil export value). Therefore, two countries belong to the same hyperedge if they share $\ge2$ important trading partners in common. We use this network to for the node ranking problem.

Table~\ref{tab:datasets} provides summary statistics about the hypergraphs. Table~\ref{tab:clusters} includes the statistics of all ground truth clusters that we used in the experiments.

\begin{table}[t!]
\caption{Summary of real-world hypergraphs}
\label{tab:datasets}
\centering
\begin{adjustbox}{max width=\textwidth}
\begin{tabular}{crrrrrr}
\toprule
Dataset & \makecell{Number of \\ nodes} & \makecell{Number of \\ hyperedges} & \makecell{Maximum \\ hyperedge size} & \makecell{Maximum \\ node degree} & \makecell{Median / Mean \\ hyperedge size} & \makecell{Median / Mean \\ node degree}\\
\midrule
Amazon-reviews & 2,268,231 & 4,285,363 & 9,350 & 28,973 & 8.0 / 17.1 & 11.0 / 32.2 \\
Trivago-clicks & 172,738 & 233,202 & 86 & 588 & 3.0 / 4.1 & 2.0 /  5.6 \\
Florida-Bay & 126 & 141,233 & 4 & 19,843 & 4.0 / 4.0 & 3,770.5 / 4,483.6 \\
Microsoft-academic & 44,216 & 22,464 & 187 & 21 & 3.0 / 5.4 & 2.0 / 2.7 \\
High-school-contact & 327 & 7,818 & 5 & 148 & 2.0 / 2.3 & 53.0 /  55.6 \\
Oil-trade & 229 & 100,639 & 4 & 16,394 & 4.0 / 4.0 & 175.0 / 1,757.9 \\
\bottomrule
\end{tabular}
\end{adjustbox}
\end{table}

\begin{table}[t!]
\caption{Summary of ground-truth clusters used in the experiments}
\label{tab:clusters}
\centering
\small
\begin{tabular}{clrrr}
\toprule
Dataset & \makecell{Cluster} & \makecell{Size} & Volume & Conductance \\
\midrule
\multirow{9}{*}{\rotatebox[origin=c]{90}{Amazon-reviews}}
& 1 - Amazon Fashion & 31 & 3042 & 0.06\\
& 2 - All Beauty & 85 & 4092 & 0.12\\
& 3 - Appliances & 48 & 183 & 0.18\\
& 12 - Gift Cards & 148 & 2965 & 0.13\\
& 15 - Industrial \& Scientific & 5334 & 72025 & 0.14\\
& 17 - Luxury Beauty & 1581 & 28074 & 0.11\\
& 18 - Magazine Subs. & 157 & 2302 & 0.13\\
& 24 - Prime Pantry& 4970 & 131114 & 0.10\\
& 25 - Software& 802 & 11884 & 0.14\\
\midrule
\multirow{10}{*}{\rotatebox[origin=c]{90}{Trivago-clicks}} 
& KOR - South Korea & 945 & 3696 & 0.24\\
& ISL - Iceland & 202 & 839 & 0.21\\
& PRI - Puerto Rico & 144 & 473 & 0.25\\
& UA-43 - Crimea & 200 & 1091 & 0.24\\
& VNM - Vietnam & 832 & 2322 & 0.24\\
& HKG - Hong Kong & 536 & 4606 & 0.24\\
& MLT - Malta & 157 & 495 & 0.24\\
& GTM - Guatemala & 199 & 652 & 0.24\\
& UKR - Ukraine & 264 & 648 & 0.24\\
& SET - Estonia & 158 & 850 & 0.23\\
\midrule
\multirow{3}{*}{\rotatebox[origin=c]{90}{\makecell{Florida-\\Bay}}}
& Producers & 17 & 10781 & 0.70\\
& Low-level consumers & 35 & 173311 & 0.58\\
& High-level consumers & 70 & 375807 & 0.54\\
\midrule
\multirow{4}{*}{\rotatebox[origin=c]{90}{\makecell{Microsoft-\\academic}}}
& Data & 15817 & 45060 & 0.06 \\
& ML & 10265 & 26765 & 0.16 \\
& TCS & 4159 & 10065 & 0.08 \\
& CV & 13974 & 38395 & 0.08\\
\midrule
\multirow{9}{*}{\rotatebox[origin=c]{90}{High-school-contact}}
& Class 1 & 36 & 1773 & 0.25\\
& Class 2 & 34 & 1947 & 0.29\\
& Class 3 & 40 & 2987 & 0.20\\
& Class 4 & 29 & 913 & 0.41\\
& Class 5 & 38 & 2271 & 0.26\\
& Class 6 & 34 & 1320 & 0.26\\
& Class 7 & 44 & 2951 & 0.16\\
& Class 8 & 39 & 2204 & 0.19\\
& Class 9 & 33 & 1826 & 0.25\\
\bottomrule
\end{tabular}
\end{table}

\subsubsection{Methods and parameter setting}

{\bf HFD} We use $\sigma = 0.0001$ for all the experiments.\ We set the total amount of initial mass $\|\Delta\|_1$ as a constant factor $t$ times the volume of the target cluster.\ For Amazon-reviews, on the smaller clusters 1, 2, 3, 12, 18, we used $t=200$; on the larger clusters 15, 17, 24, 25, we used $t = 50$. For both Trivago-clicks, High-school-contact and Microsoft-academic, we used $t = 3$.\ For Florida Bay food network, we used $t = 20, 10, 5$ for clusters 1, 2, 3, respectively.\ In all experiments, the choice of $t$ is to ensure that the diffusion process will cover some part of the target and incur a high cost in the objective function.\ For the single seed node setting, we simply set the initial mass on the seed node as $\|\Delta\|_1$.\ For the multiple seed nodes setting where we are given a seed set $S$, for each $v \in S$ we set the initial mass on $v$ as $d_v\|\Delta\|_1/\vol(S)$.\

{\bf LH, ACL} We used the parameters as suggested by the authors \cite{LVHLG20}. For both *-LH-2.0 and *-LH-1.4, we set $\gamma = 0.1$, $\rho = 0.5$, $\kappa = c \cdot r$ where $r$ is the ratio between the number of seed nodes and the size of the target cluster, and $c$ is a tuning constant.\ For Amazon-reviews, we set $c = 0.025$ as suggested in \cite{LVHLG20}.\ For Microsoft-academic, Trivago-clicks, and Florida-Bay we also used $c = 0.025$ because it produces good results.\ For High-school-contact we selected $c = 0.25$ after some tuning to make sure both *-LH-2.0 and *-LH-1.4 have good results.\ We set the parameters for ACL in exactly the same way as in \cite{LVHLG20}.\ We set $\delta = 1$ for U-LH-* and $\delta = \max_{e \in E}|e|$ for C-LH-*.

\subsubsection{Additional experiments}

{\bf Multiple seed nodes.} We conduct additional experiments using multiple seed nodes for Amazon-reviews and Trivago-clicks datasets.\ For each target cluster, we randomly select 1\% nodes from that cluster as seed nodes, and we enforce that at least 5 nodes are selected as seeds.\ For example, if a cluster consists of only 100 nodes, we still select 5 nodes to form a seed set.\ We run 30 trials for each cluster and report the median conductance and F1 score of the output clusters.\ The results are shown in Table~\ref{tab:amazon-results-complete} and Table~\ref{tab:trivago-results-complete}.\ For the multiple seed nodes setting, the results of U-LH-1.4, U-LH-2.0 and ACL on Amazon-reviews align with the ones reported in \cite{LVHLG20}: We reproduced almost identical numbers under the same setting, with only a few small differences due to randomness in seed nodes selection.\ In general, using more seed nodes improves the performance for all methods in terms of both conductance and F1.\ For Amazon-reviews, the output clusters of HFD always have the lowest conductance, even though in some cases, low conductance does not align well with the given ground-truth, and hence the lowest conductance does not lead to the highest F1 score.\ Similarly, for Trivago-clicks, both U-HFD and C-HFD consistently find the lowest conductance clusters among all methods, which in general (but not always) lead to a higher F1 score.\ Note that, if a method uses the unit cut-cost (resp. the cardinality-based cut-cost), then we compute the conductance of the output cluster using the unit cut-cost (resp. the cardinality-based cut-cost).\ Therefore, depending on the specific cut-cost, the conductances in Table~\ref{tab:trivago-results-complete} may have different scales.\ We highlight the lowest conductance for both cut-costs separately.\

\begin{table}[t!]
  \caption{Complete local clustering results for Amazon-reviews network}\label{tab:amazon-results-complete}
  \centering
  \small
  \setlength\tabcolsep{4pt}
  \begin{tabular}{cclccccccccc}
    \toprule
    & & & \multicolumn{9}{c}{Cluster} \\
    \cmidrule(l{2pt}r{2pt}){4-12}
    Metric & Seed & Method & 1 & 2  & 3 & 12 & 15 & 17 & 18 & 24 & 25 \\
    \midrule
    \multirow{8}{*}{\rotatebox[origin=c]{90}{Conductance}} & 
    \multirow{4}{*}{\rotatebox[origin=c]{90}{Single}} &
    U-HFD & \bf 0.17 & \bf 0.11 & \bf 0.12 & \bf 0.16 & \bf 0.36 & \bf 0.25 & \bf 0.17 & \bf 0.14 & \bf 0.28 \\
    & & U-LH-2.0 & 0.42 & 0.50 & 0.25 & 0.44 & 0.74 & 0.44 & 0.57 & 0.58 & 0.61 \\
    & & U-LH-1.4 & 0.33 & 0.44 & 0.25 & 0.36 & 0.81 & 0.40 & 0.51 & 0.54 & 0.59 \\
    & & ACL & 0.42 & 0.50 & 0.25 & 0.54 & 0.77 & 0.52 & 0.63 & 0.68 & 0.65  \\
    \cmidrule{2-12}
    & \multirow{4}{*}{\rotatebox[origin=c]{90}{Multiple}} &
    U-HFD & \bf 0.05 & \bf 0.10 & \bf 0.12 & \bf 0.13 & \bf 0.20 & \bf 0.16 & \bf 0.14 & \bf  0.11 & \bf 0.32 \\
    & & U-LH-2.0 & \bf 0.05 & 0.15 & 0.15 & 0.21 & 0.45 & 0.45 & 0.26 & 0.18 & 0.53 \\
    & & U-LH-1.4 & \bf 0.05 & 0.13 & 0.15 & 0.15 & 0.35 & 0.33 & 0.19 & 0.14 & 0.47 \\
    & & ACL & \bf 0.05 & 0.27 & 0.16 & 0.27 & 0.56 & 0.53 & 0.33 & 0.30 & 0.59 \\
    \midrule
    \multirow{8}{*}{\rotatebox[origin=c]{90}{F1 score}} & 
    \multirow{4}{*}{\rotatebox[origin=c]{90}{Single}} &
    U-HFD & \bf 0.45 & \bf 0.09 & \bf 0.65 & \bf 0.92 & 0.04 & \bf 0.10 & \bf 0.80 & \bf 0.81 & \bf 0.09 \\
    & & U-LH-2.0 & 0.23 & 0.07 & 0.23 & 0.29 & \bf 0.05 & 0.06 & 0.21 & 0.28 & 0.05 \\
    & & U-LH-1.4 & 0.23 & \bf 0.09 & 0.35 & 0.40 & 0.00 & 0.07 & 0.31 & 0.35 & 0.06 \\
    & & ACL &  0.23 & 0.07 & 0.22 & 0.25 & 0.04 & 0.05 & 0.17 & 0.20 & 0.04  \\
    \cmidrule{2-12}
    & \multirow{4}{*}{\rotatebox[origin=c]{90}{Multiple}} &
    U-HFD & 0.49 & \bf 0.50 & 0.69 & \bf 0.98 & 0.19 & \bf 0.36 & \bf 0.91 & \bf 0.89 & \bf 0.33 \\
    & & U-LH-2.0 & \bf 0.59 & 0.42 & \bf 0.73 & 0.77 & 0.22 & 0.25 & 0.65 & 0.62 & 0.17 \\
    & & U-LH-1.4 & 0.52 & 0.45 & \bf 0.73 & 0.90 & \bf 0.27 & 0.29 & 0.79 & 0.77 & 0.20 \\
    & & ACL & \bf 0.59 & 0.25 & 0.70 & 0.64 & 0.20 & 0.19 & 0.51 & 0.49 & 0.14 \\
    \bottomrule
  \end{tabular}
\end{table}

\begin{table}[t!]
  \caption{Complete local clustering results for Trivago-clicks network}\label{tab:trivago-results-complete}
  \centering
  \small
  \setlength\tabcolsep{4pt}
  \begin{tabular}{cclcccccccccc}
    \toprule
    & & & \multicolumn{10}{c}{Cluster} \\
    \cmidrule(l{2pt}r{2pt}){4-13}
    Metric & Seed & Method & KOR & ISL & PRI & UA-43 & VNM & HKG & MLT & GTM & UKR & EST\\
    \midrule
    \multirow{14}{*}{\rotatebox[origin=c]{90}{Conductance}} & 
    \multirow{7}{*}{\rotatebox[origin=c]{90}{Single}} &
    U-HFD & \bf 0.010 & \bf 0.023 & \bf 0.014 & \bf 0.011 & \bf 0.018 & \bf 0.017 & \bf 0.010 & \bf 0.007 & \bf 0.016 & \bf 0.012 \\
    & & U-LH-2.0 & 0.020 & 0.042 & 0.027 & 0.027 & 0.037 & 0.035 & 0.031 & 0.035 & 0.032 & 0.019 \\
    & & U-LH-1.4 & 0.036 & 0.069 & 0.047 & 0.039 & 0.060 & 0.052 & 0.040 & 0.045 & 0.065 & 0.036 \\
    & & ACL & 0.027 & 0.050 & 0.034 & 0.031 & 0.042 & 0.043 & 0.047 & 0.039 & 0.043 & 0.026 \\
    \cdashline{3-13}
    & & C-HFD & \bf 0.007 & \bf 0.016 & \bf 0.007 & \bf 0.005 & \bf 0.009 & \bf 0.011 & \bf 0.007 & \bf 0.003 & \bf 0.010 & \bf 0.009 \\
    & & C-LH-2.0 & 0.022 & 0.066 & 0.030 & 0.030 & 0.035 & 0.035 & 0.029 & 0.028 & 0.029 & 0.029 \\
    & & C-LH-1.4 & 0.043 & 0.095 & 0.042 & 0.048 & 0.071 & 0.059 & 0.053 & 0.047 & 0.075 & 0.046 \\
    \cmidrule{2-13}
    & \multirow{7}{*}{\rotatebox[origin=c]{90}{Multiple}} &
    U-HFD & \bf 0.009 & \bf 0.023 & \bf 0.011 & \bf 0.010 & \bf 0.014 & \bf 0.017 & \bf 0.010 & \bf 0.008 & \bf 0.017 & \bf 0.012 \\
    & & U-LH-2.0 & 0.023 & 0.034 & 0.018 & 0.021 & 0.054 & 0.030 & 0.021 & 0.022 & 0.041 & 0.018 \\
    & & U-LH-1.4 & 0.048 & 0.045 & 0.038 & 0.032 & 0.084 & 0.051 & 0.049 & 0.049 & 0.085 & 0.024 \\
    & & ACL & 0.030 & 0.037 & 0.018 & 0.024 & 0.064 & 0.033 & 0.021 & 0.024 & 0.045 & 0.020 \\
    \cdashline{3-13}
    & & C-HFD & \bf 0.006 & \bf 0.016 & \bf 0.006 & \bf 0.005 & \bf 0.006 & \bf 0.011 & \bf 0.007 & \bf 0.003 & \bf 0.011 & \bf 0.009 \\
    & & C-LH-2.0 & 0.024 & 0.062 & 0.021 & 0.021 & 0.047 & 0.034 & 0.023 & 0.017 & 0.036 & 0.029 \\
    & & C-LH-1.4 & 0.054 & 0.067 & 0.033 & 0.037 & 0.094 & 0.057 & 0.053 & 0.044 & 0.094 & 0.032 \\
    \midrule
    \multirow{14}{*}{\rotatebox[origin=c]{90}{F1 score}} & 
    \multirow{7}{*}{\rotatebox[origin=c]{90}{Single}} &
    U-HFD & 0.75 & \bf 0.99 & 0.89 & 0.85 	& 0.28 & 0.82 & \bf 0.98 & 0.94 & 0.60 & \bf 0.94 \\
    & & U-LH-2.0 & 0.70 & 0.86 & 0.79 & 0.70 & 0.24 & 0.92 & 0.88 & 0.82 & 0.50 & 0.90 \\
    & & U-LH-1.4 & 0.69 & 0.84 & 0.80 & 0.75 & 0.28 & 0.87 & 0.92 & 0.83 & 0.47 & 0.90 \\
    & & ACL & 0.65 & 0.84 	& 0.75 & 0.68 & 0.23 & 0.90 & 0.83 & 0.69 & 0.50 & 0.88 \\
    & & C-HFD & \bf 0.76 & \bf 0.99 & \bf 0.95 & \bf 0.94 & \bf 0.32 & 0.80 & \bf 0.98 & \bf 0.97 & \bf 0.68 & \bf 0.94 \\
    & & C-LH-2.0 & 0.73 & 0.90 & 0.84 & 0.78 & 0.27 & \bf 0.94 & 0.96 & 0.88 & 0.51 & 0.83 \\
    & & C-LH-1.4 & 0.71 & 0.88 & 0.84 & 0.78 & 0.27 & 0.88 & 0.93 & 0.85 	& 0.50 & 0.85 \\
    \cmidrule{2-13}
    & \multirow{7}{*}{\rotatebox[origin=c]{90}{Multiple}} &
    U-HFD & \bf 0.87 & \bf 0.99 & \bf 0.97 & 0.92 & 0.55 & 0.82 & \bf 0.98 & \bf 0.97 & 0.87 & \bf 0.94 \\
    & & U-LH-2.0 & 0.83 & 0.91 & 0.92 & 0.84 & 0.71 & 0.93 & 0.95 & 0.93 & 0.86 & 0.92 \\
    & & U-LH-1.4 & 0.78 & 0.84 & 0.83 & 0.79 & 0.74 & 0.85 & 0.85 & 0.84 & 0.75 & 0.87 \\
    & & ACL & 0.81 & 0.89 & 0.91 & 0.85 & 0.68 & 0.93 & 0.96 & 0.91 & 0.83 & 0.90 \\
    & & C-HFD & 0.86 & \bf 0.99 & \bf 0.97 & \bf 0.96 & 0.32 & 0.80 & \bf 0.98 & \bf 0.97 & 0.69 & \bf 0.94 \\
    & & C-LH-2.0 & 0.86 & 0.94 & 0.94 & 0.87 & \bf 0.76 & \bf 0.94 & 0.97 & 0.94 & \bf 0.88 & 0.91 \\
    & & C-LH-1.4 & 0.83 & 0.89 & 0.90 & 0.83 & 0.67 & 0.89 & 0.92 & 0.85 & 0.77 & 0.89 \\
    \bottomrule
  \end{tabular}
\end{table}

{\bf Additional datasets, local clustering using unit and cardinality cut-costs.} Table~\ref{tab:school-results} and Table~\ref{tab:academic-results} show local clustering results on High-school-contact and Microsoft-academic networks, respectively. We use the single seed node setting, run the methods from each node in a target cluster, and report the median conductance and F1 score. We cap the maximum number of runs to 500. Similar to the results on other datasets, the output clusters of HFD always have the lowest conductance, leading to the highest F1 score in most cases. We omit cardinality-based methods for Microsoft-academic because they are very similar to the unit cut-cost setting.

\begin{table}[t!]
  \caption{Local clustering results for High-school-contact network}\label{tab:school-results}
  \centering
  \small
  \setlength\tabcolsep{4pt}
  \begin{tabular}{clccccccccc}
    \toprule
    & & \multicolumn{9}{c}{Cluster} \\
    \cmidrule(l{2pt}r{2pt}){3-11}
    Metric & Method & Class 1 & Class 2 & Class 3 & Class 4 & Class 5 & Class 6 & Class 7 & Class 8 & Class 9 \\
    \midrule
    \multirow{7}{*}{\rotatebox[origin=c]{90}{Conductance}}
    & U-HFD & \bf 0.25 & \bf 0.29 & \bf 0.13 & \bf 0.42 & \bf 0.21 & \bf 0.26 & \bf 0.16 & \bf 0.19 & \bf 0.25 \\
    & U-LH-2.0 & 0.31 & 0.36 & 0.23 & 0.63 & 0.33 & 0.36 & 0.18 & 0.21 & 0.30 \\
    & U-LH-1.4 & 0.29 & 0.32 & 0.21 & 0.54 & 0.29 & 0.37 & \bf 0.16 & 0.22 & 0.29 \\
    & ACL & 0.62 & 0.64 & 0.61 & 0.98 & 0.61 & 0.60 & 0.59 & 0.55 & 0.59 \\
    \cdashline{3-11}
    & C-HFD  & \bf0.25 & \bf 0.28 & \bf 0.20 & \bf 0.41 & \bf 0.24 & \bf 0.26 & \bf 0.16 & \bf 0.19 & \bf 0.25 \\
    & C-LH-2.0 & 0.27 & 0.33 & \bf 0.20 & 0.57 & 0.29 & 0.32 & \bf 0.16 & 0.20 & 0.27 \\
    & C-LH-1.4 & 0.28 & 0.32 & \bf 0.20 & 0.52 & 0.28 & 0.33 & \bf 0.16 & 0.21 & 0.28 \\
    \midrule
    \multirow{7}{*}{\rotatebox[origin=c]{90}{F1 score}}
    & U-HFD & \bf 0.99 & \bf 1.00 & 0.59 & \bf 0.96 & 0.73 & \bf 1.00 & 0.88 & \bf 1.00 & \bf 0.99 \\
    & U-LH-2.0 & 0.91 & 0.83 & 0.93 & 0.66 & \bf 0.84 & 0.88 & 0.96 & 0.96 & 0.90 \\
    & U-LH-1.4 & 0.93 & 0.78 & 0.90 & 0.78 & 0.70 & 0.90 & 0.97 & 0.95 & 0.88 \\
    & ACL & 0.72 & 0.73 & 0.73 & 0.06 & 0.70 & 0.76 & 0.77 & 0.78 & 0.76 \\
    & C-HFD & \bf 0.99 & \bf 1.00 & \bf 1.00 & \bf 0.96 & 0.80 & \bf 1.00 & \bf 1.00 & \bf 1.00 & \bf 0.99 \\
    & C-LH-2.0 & 0.93 & 0.82 & 0.92 & 0.74 & 0.84 & 0.93 & 0.97 & 0.97 & 0.91 \\
    & C-LH-1.4 & 0.94 & 0.74 & 0.69 & 0.84 & 0.76 & 0.94 & 0.96 & 0.96 & 0.85 \\
    \bottomrule
  \end{tabular}
\end{table}

\begin{table}[t!]
  \caption{Local clustering results for Microsoft-academic network}\label{tab:academic-results}
  \centering
  \small
  \setlength\tabcolsep{4pt}
  \begin{tabular}{clcccc}
    \toprule
    & & \multicolumn{4}{c}{Cluster} \\
    \cmidrule(l{2pt}r{2pt}){3-6}
    Metric & Method & Data & ML & TCS & CV \\
    \midrule
    \multirow{4}{*}{\rotatebox[origin=c]{90}{Cond}}
    & U-HFD & \bf 0.03 & \bf 0.06 & \bf 0.06 & \bf 0.03 \\
    & U-LH-2.0 & 0.07 & 0.09 & 0.10 & 0.07 \\
    & U-LH-1.4 & 0.07 & 0.08 & 0.09 & 0.07 \\
    & ACL & 0.08 & 0.11 & 0.11 & 0.09 \\
    \midrule
    \multirow{4}{*}{\rotatebox[origin=c]{90}{F1 score}}
    & U-HFD & \bf 0.78 & \bf 0.54 & \bf 0.86 & \bf 0.73 \\
    & U-LH-2.0 & 0.67 & 0.46 & 0.71 & 0.61 \\
    & U-LH-1.4 & 0.65 & 0.46 & 0.59 & 0.59 \\
    & ACL & 0.64 & 0.43 & 0.70 & 0.57 \\
    \bottomrule
  \end{tabular}
\end{table}

{\bf Additional dataset, node ranking using general submodular cut-cost.} We provide another compelling use case of general submodular cut-cost.\ We consider the node ranking problem in the Oil-trade network.\ Our goal is to search the most related country of a queried country based on the trade-network structure.\ We use the hypergraph modelling shown in Figure~1 in the main paper.\ We compare HFD using unit (U-HFD, $\gamma_1=\gamma_2=1$), cardinality-based (C-HFD, $\gamma_1=1/2$ and $\gamma_2=1$) and submodular (S-HFD, $\gamma_1=1/2$ and $\gamma_2=0$) cut-costs.\ Table~\ref{tab:trade-rank} shows the top-2 ranking results.\ In this example, we use Iran as the seed node and we rank other countries according to the ordering of dual variables returned by HFD.\ In 2017, US imposed strict sanctions on Iran.\ However, Bangladesh (generally accepted as an American ally) is among the top two ranked countries based on unit or cardinality-based cut-cost, which does not make any sense.\ On the other hand, S-HFD ranks Iraq and Turkmenistan as the top two.\ Interested readers can easily verify that these counties share strong economic or historical ties with Iran.\

{\bf Additional method: $p$-norm HFD.} We tried HFD with unit cut-cost and $p=4$ (U-HFD-4.0). However, in practice we did not observe that a larger $p>2$ necessarily lead to better clustering results. We show a sample result of U-HFD-4.0 for Amazon-reviews in Table~\ref{tab:pnorm-amazon}. Notice that the performances of U-HFD-2.0 ($p=2$) and U-HFD-4.0 are very similar.

{\bf Additional method: LH $+$ flow improve.} We tried a flow-improve method for hypergraphs~\cite{VBK20}. We apply the flow-improve method to the output of U-LH-2.0. The method is slow in our experiments, so we only tried it on a few small instances. The results for the Florida Bay food network is shown in Table~\ref{tab:food-unit-results}. In general, we find that applying the flow-improve method does not lead to consistent performance improvements.

\begin{table}[t!]
  \caption{Top-2 node-ranking results for Oil-trade network}\label{tab:trade-rank}
  \centering
  \small
  \setlength\tabcolsep{4pt}
  \begin{tabular}{cl}
    \toprule
    Method & Query: Iran \\
    \midrule
    U-HFD & Kenya, Bangladesh \\
    C-HFD & Bangladesh, United Rep. of Tanzania \\
    S-HFD & Turkmenistan, Iraq \\
    \bottomrule
  \end{tabular}
\end{table}

\begin{table}[t!]
  \caption{Local clustering results for Amazon-reviews network using $p$-norm HFD}\label{tab:pnorm-amazon}
  \centering
  \small
  \setlength\tabcolsep{4pt}
  \begin{tabular}{cclccccccccc}
    \toprule
    & & & \multicolumn{9}{c}{Cluster} \\
    \cmidrule(l{2pt}r{2pt}){4-12}
    Metric & Seed & Method & 1 & 2  & 3 & 12 & 15 & 17 & 18 & 24 & 25 \\
    \midrule
    \multirow{4}{*}{\rotatebox[origin=c]{90}{Cond}} & 
    \multirow{2}{*}{Single} &
    U-HFD-2.0 & 0.17 & 0.11 & 0.12 & 0.16 & 0.36 & 0.25 & 0.17 & 0.14 & 0.28 \\
    & & U-HFD-4.0 & 0.17 & 0.10 & 0.12 & 0.16 & 0.35 & 0.26 & 0.17 & 0.14 & 0.38 \\
    \cmidrule{2-12}
    & \multirow{2}{*}{Multiple} &
    U-HFD-2.0 & 0.05 & 0.10 & 0.12 & 0.13 & 0.20 & 0.16 & 0.14 &  0.11 & 0.32 \\
    & & U-HFD-4.0 & 0.05 & 0.10 & 0.12 & 0.14 & 0.20 & 0.16 & 0.14 & 0.12 & 0.32 \\
    \midrule
    \multirow{4}{*}{\rotatebox[origin=c]{90}{F1 score}} & 
    \multirow{2}{*}{Single} &
    U-HFD-2.0 & 0.45 & 0.09 & 0.65 & 0.92 & 0.04 & 0.10 & 0.80 & 0.81 & 0.09 \\
    & & U-HFD-4.0 & 0.48 & 0.07 & 0.65 & 0.92 & 0.04 & 0.09 & 0.80 & 0.82 & 0.10 \\
    \cmidrule{2-12}
    & \multirow{2}{*}{Multiple} &
    U-HFD-2.0 & 0.49 & 0.50 & 0.69 & 0.98 & 0.19 & 0.36 & 0.91 & 0.89 & 0.33 \\
    & & U-HFD-4.0 & 0.49 & 0.50 & 0.69 & 0.98 & 0.19 & 0.36 & 0.91 & 0.88 & 0.35\\
    \bottomrule
  \end{tabular}
\end{table}

\begin{table}[t!]
  \caption{Local clustering results for the food network using unit cut-costs}\label{tab:food-unit-results}
  \centering
  \small
  \setlength\tabcolsep{4pt}
  \begin{tabular}{clccc}
    \toprule
    & & \multicolumn{3}{c}{Cluster} \\
    \cmidrule(l{2pt}r{2pt}){3-5}
    Metric & Method & Producers & Low-level consumers & High-level consumers \\
    \midrule
    \multirow{5}{*}{\rotatebox[origin=c]{90}{Conductance}}
    & U-HFD & \bf 0.49 & \bf 0.36 & \bf 0.35 \\
    & U-LH-2.0 & 0.51 & 0.39 & 0.39 \\
    & U-LH-2.0 $+$ flow & 0.52 & 0.39 & 0.40 \\
    & U-LH-1.4 & \bf 0.49 & 0.39 & 0.41 \\
    & ACL & 0.52 & 0.39 & 0.40 \\
    \midrule
    \multirow{5}{*}{\rotatebox[origin=c]{90}{F1 score}}
    & U-HFD & \bf 0.69 & \bf 0.47 & \bf 0.64 \\
    & U-LH-2.0 & \bf 0.69 & 0.45 & 0.57 \\
    & U-LH-2.0 $+$ flow & \bf 0.69 & 0.45 & 0.57 \\
    & U-LH-1.4 & \bf 0.69 & 0.45 & 0.58 \\
    & ACL & \bf 0.69 & 0.44 & 0.57 \\
    \bottomrule
  \end{tabular}
\end{table}

\subsection{Computing platform and implementation detail}

We implemented the AM algorithm~\cite{Beck2015} given in Algorithm~\ref{alg:lpAM} in Julia.\ The code is run on a personal laptop with 32GB RAM and 2.9 GHz 6-Core Intel Core i9.\ GPU is not used for computation.\ For the rest of this section, we discuss the implementation details on how we actually solve the nontrivial sub-problem in Algorithm~\ref{alg:lpAM} to obtain the update $(\phi^{(k+1)},r^{(k+1)})$.\

For the unit cut-cost case, we use an exact projection algorithm~\cite{li2020quadratic} to obtain the update $(\phi^{(k+1)},r^{(k+1)})$.\ Algorithmic details for exact projection is provided in Algorithm~\ref{alg:unit}.\ For cardinality-based or general submodular cut-costs, a conic Fujishige-Wolfe minimum norm algorithm~\cite{li2020quadratic} can be adopted to efficiently compute $(\phi^{(k+1)},r^{(k+1)})$.\ Our implementation uses alternative methods that are simpler.\ For the cardinality cut-cost, we use a projected subgradient method that works on a related dual problem to obtain the primal update in $(\phi^{(k+1)},r^{(k+1)})$. The subgradient method is easy to implement, requires less computation overhead, and works well in practice for the sub-problem.\ For the specialized submodular cut-cost shown in Figure~1, since the hyperedge consists of only 4 nodes and has a special structure, we simply perform an exhaustive search that allows us to exactly compute $(\phi^{(k+1)},r^{(k+1)})$ using constant number of vector-vector additions and multiplications.\ We provide details below.\


Recall that the sub-problem to compute $(\phi^{(k+1)},r^{(k+1)})$ decomposes into a sequence of separate problems indexed by $e \in E$ (cf.~\eqref{eq:r-step-primal}, in the following we assume $p=2$ for simplicity):
\begin{equation}
\label{eq:subprob-primal}
	\min_{\phi_e\ge0,r_e\in\phi_eB_e} \frac{1}{2}\phi_e^2 + \frac{1}{2\sigma}\|s_e - r_e\|_2^2.
\end{equation}
The dual problem of \eqref{eq:subprob-primal} is written as (cf.~\eqref{eq:r-step-dual}, here we have $p=q=2$)
\begin{equation}
\label{eq:subprob-dual}
	\min_{y_e} \frac{1}{2}f_e(y_e)^2 + \frac{\sigma}{2}\|y_e\|_2^2 - s_e^Ty_e.
\end{equation}
Let $(\phi_e^*, r_e^*)$ and $y_e^*$ denote primal and dual optimal solutions for \eqref{eq:subprob-primal} and \eqref{eq:subprob-dual}, respectively. Then we have that
\[
	r_e^* + \sigma y_e^* = s_e \quad \mbox{and} \quad {\phi_e^*}^2 = {r_e^*}^Ty_e^*.
\]
The dual problem~\eqref{eq:subprob-dual} can be derived following the same way that we derive the primal-dual HFD formulations, moreover, the above relations between $\phi_e^*,$ $r_e^*$ and $y_e^*$ follow immediately from the primal-dual derivation, dual optimality condition and simple algebraic work. Therefore, in order to find an optimal solution $(\phi_e^*, r_e^*)$ for the primal problem~\eqref{eq:subprob-dual}, it suffices to find an optimal solution $y_e^*$ for the dual problem~\eqref{eq:subprob-dual} and then recover $(\phi_e^*, r_e^*)$. Now, since $\one^Tr_e^* = 0$, we know that $\sigma\one^T y_e^* = \one^Ts_e$, i.e., $y_e^*$ lies in the hyperplane $\mathcal{H} := \{y_e | \sigma\one^T y_e = \one^Ts_e\}$. Let $h$ denote the objective function of the dual problem~\eqref{eq:subprob-dual}, we compute $y_e^*$ using projected subgradient method:
\[
	y_e^{(k+1)} := P_{\mathcal{H}}\left(y_e^{(k)} - \frac{1}{k}\frac{g^{(k)}}{\|g^{(k)}\|_2}\right),
\]
where $g^{(k)} \in \partial h(y_e^{(k)})$ is a subgradient at $y_e^{(k)}$, and $P_{\mathcal{H}}(\cdot)$ denotes the projection onto the hyperplane $\mathcal{H}$. We add the additional projection step so that, when we stop the subgradient method after $K$ iterations to get $y_e^* \approx \tilde{y}_e := y_e^{(K)}$, and approximately recover $r_e^*$ as $r_e^* \approx \tilde{r}_e  := s_e - \sigma \tilde{y}_e$, the resulting $\tilde{r}_e$ would still be a proper flow routing, i.e., $\one^T \tilde{r}_e = 0$ and hence it is possible to have $\tilde{r}_e \in \tilde{\phi}_e B_e$ for some $\tilde{\phi}_e$. In other words, the projection step is crucial because it permits the use of sub-optimal dual solution $\tilde{y}_e$ to obtain sub-optimal but feasible primal solution $\tilde{r}_e$.

For the cardinality cut-cost, our implementation uses the projected subgradient method we describe above to solve the sub-problem in Algorithm~\ref{alg:lpAM} for $\phi_e$ and $r_e$. In what follows we talk about how we deal with the specialized submodular cut-cost. 

Given $e = \{v_1,v_2,v_3,v_4\}$ and associated submodular cut-cost $w_e$ such that $w_e(\{v_i\}) = 1/2$ for $i = 1,2,3,4$, $w_e(\{v_1,v_2\}) = 0$, $w_e(\{v_1,v_3\}) = w_e(\{v_1,v_4\}) = 1$, and $w_e(S) = w_e(e \setminus S)$ for any $S \subseteq e$. Let $B_e$ be the base polytope of $w_e$.
The sub-problem for this hyperedge is given in \eqref{eq:subprob-primal}. Suppose $(\phi_e^*,r_e^*)$ is optimal for \eqref{eq:subprob-primal}, and $r_e^* = \phi_e^* \rho_e^*$ for some $\rho_e^* \in B_e$. If $\phi_e^* > 0$, then we know that $\phi_e^* = \frac{s_e^T\rho_e^*}{\sigma + \|\rho_e^*\|_2^2}$. To see this, substitute $r_e^* = \phi_e \rho_e^*$ into \eqref{eq:subprob-primal} and optimize for $\phi_e$ only. The relation $\phi_e^* = \frac{s_e^T\rho_e^*}{\sigma + \|\rho_e^*\|_2^2}$ follows from first-order optimality condition and the assumption that $\phi_e^* > 0$. On the other hand, if $\phi_e^* = 0$, then we simply have that $r_e^* = 0$. Therefore, in order to compute $(\phi_e^*,r_e^*)$ when $\phi_e^* > 0$, it suffices to find $\rho_e^*$. In order to find $\rho_e^*$, we look at the dual problem~\ref{eq:subprob-dual}. Let $y_e^*$ be an optimal dual solution, then we have that $\rho_e^* \in \argmax_{\rho_e \in B_e} \rho_e^Ty_e^*$. The subsequent claims are case analyses in order to determine all possible nontrivial candidates for $\rho_e^*$.

\begin{claim}
\label{claim:equal}
If $s_{e,v_1} = s_{e,v_2}$, then $\rho^*_{e,v_1} = \rho^*_{e,v_2} = 0$; if $s_{e,v_3} = s_{e,v_4}$, then $\rho^*_{e,v_3} = \rho^*_{e,v_4} = 0$.
\end{claim}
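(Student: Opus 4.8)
We have a specific submodular function $w_e$ on a 4-element ground set $e = \{v_1, v_2, v_3, v_4\}$, with $w_e(\{v_i\}) = 1/2$, $w_e(\{v_1,v_2\}) = 0$, $w_e(\{v_1,v_3\}) = w_e(\{v_1,v_4\}) = 1$, and symmetry $w_e(S) = w_e(e \setminus S)$. The claim is about the optimizer $\rho_e^*$ of $\max_{\rho_e \in B_e} \rho_e^T y_e^*$, which equals an optimal dual solution direction. We need: when two "paired" coordinates of the input $s_e$ are equal (say $s_{e,v_1} = s_{e,v_2}$), the corresponding entries of $\rho_e^*$ both vanish.

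**My plan.**

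The key is to exploit the characterization of maximizers of the support function given in Lemma~\ref{lem:maximizers} (Proposition 4.2 in \cite{Bach2011a}), together with the explicit structure of this particular $w_e$. Let me think about what this requires.

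Let me start by understanding the relationship between $y_e^*$ and $s_e$. From the dual problem \eqref{eq:subprob-dual} and the relation $r_e^* + \sigma y_e^* = s_e$, the ordering of the coordinates of $y_e^*$ is tied to the ordering of $s_e$ (adjusted by $r_e^*$). I would first establish that $\rho_e^* \in \argmax_{\rho_e \in B_e} \rho_e^T y_e^*$, which is stated in the setup. The optimal $\rho_e^*$ depends only on the *ordering* of the coordinates of $y_e^*$ via Lemma~\ref{lem:maximizers}: the greedy algorithm assigns $\rho_{e,v}$ based on marginal increments of $w_e$ along the sorted order.

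**Key steps.**

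First, I would argue that $s_{e,v_1} = s_{e,v_2}$ forces $y^*_{e,v_1} = y^*_{e,v_2}$ and hence $r^*_{e,v_1} = r^*_{e,v_2}$. This should follow from a symmetry/exchange argument: since $w_e(\{v_1,v_2\}) = 0 = w_e(e) $ and by submodularity the pair $\{v_1,v_2\}$ is "interchangeable" in $w_e$ (one can verify $w_e(S \cup \{v_1\}) = w_e(S \cup \{v_2\})$ for all $S$ not containing either, using the given values and symmetry), the objective \eqref{eq:subprob-primal} is symmetric under swapping the $v_1$ and $v_2$ coordinates when $s_{e,v_1} = s_{e,v_2}$; by strict convexity the unique minimizer must be invariant under this swap, giving $r^*_{e,v_1} = r^*_{e,v_2}$ and correspondingly $\rho^*_{e,v_1} = \rho^*_{e,v_2}$.

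Second, with $\rho^*_{e,v_1} = \rho^*_{e,v_2} =: c$, I would use the base polytope constraint $\rho_e(\{v_1,v_2\}) \le w_e(\{v_1,v_2\}) = 0$, giving $2c \le 0$. For the reverse inequality I would invoke symmetry: $\rho^*_{e,v_3} = \rho^*_{e,v_4} =: c'$ by the same argument applied to the pair $\{v_3,v_4\}$ (note $w_e(\{v_3,v_4\}) = w_e(e \setminus \{v_3,v_4\}) = w_e(\{v_1,v_2\}) = 0$), and the conservation constraint $\rho_e(e) = w_e(e) = 0$ forces $2c + 2c' = 0$, so $c' = -c$. Then $\rho_e(\{v_3,v_4\}) = 2c' = -2c \le w_e(\{v_3,v_4\}) = 0$ gives $c \ge 0$. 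Combining $2c \le 0$ and $c \ge 0$ yields $c = 0$, i.e.\ $\rho^*_{e,v_1} = \rho^*_{e,v_2} = 0$. The symmetric statement for $s_{e,v_3} = s_{e,v_4}$ is identical by relabeling.

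**Main obstacle.**

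The delicate point is the first step: justifying that equal inputs yield equal optimizer coordinates. The cleanest route is the strict-convexity/symmetry argument on the primal \eqref{eq:subprob-primal} rather than reasoning directly about the non-unique support-function maximizer $\rho_e^*$ (the base-polytope maximizer can be non-unique when coordinates of $y_e^*$ tie, so I must argue about the specific $\rho_e^*$ that arises as $r_e^*/\phi_e^*$). I would therefore verify carefully that swapping the $v_1, v_2$ coordinates is a symmetry of the feasible set $\phi_e B_e$ — this needs $w_e(S)$ to be invariant under the transposition $v_1 \leftrightarrow v_2$, which I would check against the eight relevant values of $w_e$ — and that it is a symmetry of the objective when $s_{e,v_1}=s_{e,v_2}$. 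Strict convexity of \eqref{eq:subprob-primal} in $r_e$ on its feasible slice then pins down a unique $r_e^*$, which must coincide with its own swap, closing the argument.
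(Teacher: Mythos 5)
Your proof is correct in substance but reaches the key intermediate fact by a genuinely different route than the paper. The paper works entirely on the dual side: it takes the stationarity condition $(\hat{\rho}_e^T y_e^*)\hat{\rho}_e + \sigma y_e^* = s_e$ for some $\hat{\rho}_e \in \argmax_{\rho_e \in B_e}\rho_e^T y_e^*$, rules out $y^*_{e,v_1} \neq y^*_{e,v_2}$ by invoking the greedy algorithm (via Lemma~\ref{lem:maximizers}) to get $\hat{\rho}_{e,v_1} = 1/2 > -1/2 = \hat{\rho}_{e,v_2}$ and so contradict $s_{e,v_1} = s_{e,v_2}$, and then reads off $\hat{\rho}_{e,v_1} = \hat{\rho}_{e,v_2}$ from the same stationarity equation. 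You instead obtain $r^*_{e,v_1} = r^*_{e,v_2}$ directly on the primal side from the transposition symmetry of $w_e$ (which you rightly insist on checking; it does hold, e.g.\ $w_e(\{v_2,v_3\}) = w_e(e \setminus \{v_2,v_3\}) = w_e(\{v_1,v_4\}) = 1$) together with strict convexity and uniqueness of the minimizer of \eqref{eq:subprob-primal} --- a cleaner argument that avoids the greedy-ordering casework entirely, at the cost of only pinning down the specific $\rho_e^* = r_e^*/\phi_e^*$ when $\phi_e^* > 0$. The paper's dual argument applies to any maximizer satisfying stationarity, which is what lets it state the claim for $\rho_e^*$ drawn from $\argmax_{\rho_e \in B_e}\rho_e^Ty_e^*$; your narrower scope is harmless for the intended application, since when $\phi_e^* = 0$ one has $r_e^* = 0$ and no candidate $\rho_e^*$ is needed, but you should state the $\phi_e^* > 0$ restriction explicitly. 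The second halves of the two proofs coincide: both squeeze $\rho^*_{e,v_1} + \rho^*_{e,v_2}$ between $w_e(\{v_1,v_2\}) = 0$ from above and $-w_e(\{v_3,v_4\}) = 0$ from below using $\rho_e^*(e) = 0$.

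One misstep to repair: in your second step you assert $\rho^*_{e,v_3} = \rho^*_{e,v_4} =: c'$ ``by the same argument applied to the pair $\{v_3,v_4\}$.'' That symmetry argument requires $s_{e,v_3} = s_{e,v_4}$, which is not part of the hypothesis in the first half of the claim, so this equality is unjustified as written. Fortunately it is also unnecessary: you only ever use the sum, and the base polytope constraint gives $\rho^*_{e,v_3} + \rho^*_{e,v_4} \le w_e(\{v_3,v_4\}) = 0$ directly, whence conservation $\rho_e^*(e) = 0$ yields $2c = -(\rho^*_{e,v_3} + \rho^*_{e,v_4}) \ge 0$, exactly as in the paper's proof. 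Deleting the spurious coordinate-wise equality and keeping the sum bound closes the argument with no other changes.
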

\begin{proof}
The optimality condition of the dual problem \eqref{eq:subprob-dual} is for some $\hat{\rho}_e \in \argmax_{\rho_e \in B_e}\rho_e^Ty_e^*$,
\begin{equation}
\label{eq:subprob-optcond}
	(\hat{\rho}_e^Ty_e^*)\hat{\rho}_e + \sigma y_e^* = s_e.
\end{equation}
Suppose $s_{e,v_1} = s_{e,v_2}$, then we must have $y_{e,v_1}^* = y_{e,v_2}^*$. Otherwise, say $y_{e,v_1}^* > y_{e,v_2}^*$, then we know that $\hat{\rho}_{e,v_1} = 1/2 > -1/2 = \hat{\rho}_{e,v_2}$, which follows from applying the greedy algorithm~\cite{Bach2011a} to find $\hat{\rho}_e$ using the order of indices in $y_e^*$. But then according to the optimality condition~\eqref{eq:subprob-optcond}, we have
\[
s_{e,v_1} = (\hat{\rho}_e^Ty_e^*)\hat{\rho}_{e,v_1} + \sigma y_{e,v_1}^* > (\hat{\rho}_e^Ty_e^*)\hat{\rho}_{e,v_2} + \sigma y_{e,v_2}^* = s_{e,v_2},
\]
which contradicts our assumption that $s_{e,v_1} = s_{e,v_2}$.\ Similarly, $y_{e,v_1}^* < y_{e,v_2}^*$ is not possible, either. Now, because $y_{e,v_1}^* = y_{e,v_2}^*$, by the optimality condition~\eqref{eq:subprob-optcond}, we must also have $\hat{\rho}_{e,v_1} = \hat{\rho}_{e,v_2}$. Finally, because $\hat{\rho}_e \in B_e$, we know that $\hat{\rho}_{e,v_1} + \hat{\rho}_{e,v_2} \le 0$ and $\hat{\rho}_{e,v_1} + \hat{\rho}_{e,v_2} = - (\hat{\rho}_{e,v_3} + \hat{\rho}_{e,v_4}) \ge -w_e(\{v_3,v_4\}) = 0$, so $\hat{\rho}_{e,v_1} + \hat{\rho}_{e,v_2} = 0$. Therefore, $\hat{\rho}_{e,v_1} = \hat{\rho}_{e,v_2} = 0$. Since $\hat{\rho}$ was chosen arbitrarily from the set $\argmax_{\rho_e \in B_e}\rho_e^Ty_e^*$, and $\rho^*_e \in \argmax_{\rho_e \in B_e}\rho_e^Ty_e^*$, we have that $\rho^*_{e,v_1} = \rho^*_{e,v_2} = 0$ as required. The other claim on nodes $v_3$ and $v_4$ follows the same way.
\end{proof}

\begin{claim}
\label{claim:halfequal}
If $s_{e,v_1} \neq s_{e,v_2}$ and $s_{e,v_3} = s_{e,v_4}$, then $\rho^*_{e,v_1},\rho^*_{e,v_2} \in \{1/2,-1/2\}$ and $\rho^*_{e,v_3} = \rho^*_{e,v_4} = 0$; if $s_{e,v_1} = s_{e,v_2}$ and $s_{e,v_3} \neq s_{e,v_4}$, then $\rho^*_{e,v_1} = \rho^*_{e,v_2} = 0$ and $\rho^*_{e,v_3},\rho^*_{e,v_4} \in \{1/2,-1/2\}$.\end{claim}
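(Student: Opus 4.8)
The plan is to leverage the already-established Claim~\ref{claim:equal} together with the level-set characterization of base-polytope maximizers in Lemma~\ref{lem:maximizers}. I focus on the first assertion; the symmetric second assertion will follow by interchanging the roles of the pair $\{v_1,v_2\}$ with $\{v_3,v_4\}$, which is legitimate because the weight $w_e$ is invariant under swapping these two pairs. So assume $s_{e,v_1}\neq s_{e,v_2}$ and $s_{e,v_3}=s_{e,v_4}$. First I would invoke Claim~\ref{claim:equal} on the nodes $v_3,v_4$: since $s_{e,v_3}=s_{e,v_4}$, it gives $\rho^*_{e,v_3}=\rho^*_{e,v_4}=0$, and its proof also yields $y^*_{e,v_3}=y^*_{e,v_4}=:a$. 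Combining $\rho^*_e\in B_e$ (so $\rho^*_e(e)=w_e(e)=0$) with $\rho^*_{e,v_3}=\rho^*_{e,v_4}=0$ immediately gives $\rho^*_{e,v_1}+\rho^*_{e,v_2}=0$. It therefore remains only to show $|\rho^*_{e,v_1}|=1/2$.

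Next I would show $y^*_{e,v_1}\neq y^*_{e,v_2}$, arguing by contradiction. Suppose $y^*_{e,v_1}=y^*_{e,v_2}=:b$, and write $c:=(\rho^*_e)^Ty^*_e=f_e(y^*_e)={\phi^*_e}^2\ge0$, so that the optimality condition~\eqref{eq:subprob-optcond} reads $c\,\rho^*_e+\sigma y^*_e=s_e$ coordinatewise. If $b=a$ then $y^*_e$ is constant on $e$, forcing $f_e(y^*_e)=0$ and hence $c=0$; if $b\neq a$ the two level sets are $\{v_1,v_2\}$ and $\{v_3,v_4\}$, and Lemma~\ref{lem:maximizers} forces $\rho^*_e(\{v_1,v_2\})=w_e(\{v_1,v_2\})=0$, whence $c=(\rho^*_e)^Ty^*_e=b(\rho^*_{e,v_1}+\rho^*_{e,v_2})=0$ as well. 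In either situation $c=0$, so $\sigma y^*_{e,v_1}=s_{e,v_1}$ and $\sigma y^*_{e,v_2}=s_{e,v_2}$ give $s_{e,v_1}=s_{e,v_2}$, contradicting the hypothesis. Hence $y^*_{e,v_1}\neq y^*_{e,v_2}$; assume without loss of generality $y^*_{e,v_1}>y^*_{e,v_2}$.

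Then I would pin down $\rho^*_{e,v_1}$ by a short case analysis on where $a$ sits relative to $y^*_{e,v_1}$ and $y^*_{e,v_2}$, using Lemma~\ref{lem:maximizers} with the appropriate ordered level sets. In every case the level-set prefix containing $v_1$ (either $\{v_1\}$, a set of the form $\{v_3,v_4,v_1\}$ obtained after the tied block, or $\{v_1,v_3,v_4\}$) has $w_e$-value $1/2$, while $\rho^*_e(\{v_3,v_4\})=0$; subtracting the contributions of $v_3,v_4$ (which are individually $0$ by the reduction above) forces $\rho^*_{e,v_1}=1/2$, and consequently $\rho^*_{e,v_2}=-1/2$. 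The symmetric ordering $y^*_{e,v_2}>y^*_{e,v_1}$ yields $\rho^*_{e,v_2}=1/2$ and $\rho^*_{e,v_1}=-1/2$. In all cases $\rho^*_{e,v_1},\rho^*_{e,v_2}\in\{1/2,-1/2\}$, completing the first assertion; the second follows by the pair-swap symmetry noted at the outset.

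The main obstacle I anticipate is the bookkeeping around ties in $y^*_e$: because $v_3$ and $v_4$ always share a common value, the greedy maximizer of $\rho^Ty^*_e$ is non-unique, so I cannot read off coordinates directly and must instead use Lemma~\ref{lem:maximizers}, which only constrains cumulative sums over level-set prefixes, together with the extra information $\rho^*_{e,v_3}=\rho^*_{e,v_4}=0$ supplied by Claim~\ref{claim:equal}. The degenerate possibility $\phi^*_e=0$ (equivalently $c=0$) must also be handled explicitly, since then the identity $\rho^*_e=(s_e-\sigma y^*_e)/\phi^*_e$ is vacuous; this is precisely the case I exploit to derive the contradiction when ruling out $y^*_{e,v_1}=y^*_{e,v_2}$.
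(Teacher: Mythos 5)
Your proof is correct and takes essentially the same route as the paper's: reduce via Claim~\ref{claim:equal} to $\rho^*_{e,v_3}=\rho^*_{e,v_4}=0$ and $\rho^*_{e,v_1}+\rho^*_{e,v_2}=0$, rule out $y^*_{e,v_1}=y^*_{e,v_2}$ by showing it forces $(\rho_e^*)^Ty_e^*=0$ and hence $\sigma y_e^*=s_e$ in the optimality condition~\eqref{eq:subprob-optcond}, contradicting $s_{e,v_1}\neq s_{e,v_2}$, and then read off $\rho^*_{e,v_1},\rho^*_{e,v_2}\in\{1/2,-1/2\}$ from the characterization of base-polytope maximizers. If anything you are more careful than the paper, which simply asserts the greedy output $\hat{\rho}_{e,v_1}=1/2$ without tracking where the tied value $y^*_{e,v_3}=y^*_{e,v_4}$ sits in the ordering (your prefix-sum case analysis via Lemma~\ref{lem:maximizers} fills exactly that gap), and you correctly observe that the paper's additional step of ruling out $y^*_{e,v_1}<y^*_{e,v_2}$ under a WLOG sign of $s_{e,v_1}-s_{e,v_2}$ is unnecessary for the claim as stated, since only membership in $\{\pm1/2\}$ is required.
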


\begin{proof}
We will show the first case, the second case follows by symmetry. Let $\hat{\rho}_e \in \argmax_{\rho_e \in B_e} \rho_e^T y_e^*$. Suppose $s_{e,v_1} \neq s_{e,v_2}$ and $s_{e,v_3} = s_{e,v_4}$. Then by Claim~\ref{claim:equal} we have $\hat{\rho}_{e,v_3} = \hat{\rho}_{e,v_4} = 0$. Let us assume without loss of generality that $s_{e,v_1} > s_{e,v_2}$. If $y_{e,v_1}^* < y_{e,v_2}^*$, then apply the greedy algorithm we know that $\hat{\rho}_{e,v_1} = -1/2 < 1/2 = \hat{\rho}_{e,v_2}$. But this contradicts the optimality condition \eqref{eq:subprob-optcond}. Therefore we must have $y_{e,v_1}^* \ge y_{e,v_2}^*$. There are two cases. If $y_{e,v_1}^* > y_{e,v_2}^*$, then apply the greedy algorithm we get $\hat{\rho}_{e,v_1} = 1/2$ and $\hat{\rho}_{e,v_2} = -1/2$. If $y_{e,v_1}^* = y_{e,v_2}^*$, then because $\hat{\rho}_{e,v_1} + \hat{\rho}_{e,v_2} = 0$ (see the proof of Claim~\ref{claim:equal} for an argument for this) and $\hat{\rho}_{e,v_3} = \hat{\rho}_{e,v_4} = 0$, we have that $\hat{\rho}_e^Ty_e^* = 0$. But then this contradicts the optimality condition \eqref{eq:subprob-optcond}, because $s_{e,v_1} > s_{e,v_2}$ and $y_{e,v_1}^* = y_{e,v_2}^*$. Therefore we cannot have $y_{e,v_1}^* = y_{e,v_2}^*$. Since our choice of $\hat{\rho}_e\in \argmax_{\rho_e \in B_e} \rho_e^T y_e^*$ was arbitrary, and $\rho_{e,v_1}^* \in \argmax_{\rho_e \in B_e} \rho_e^T y_e^*$, so we know that $\rho_e^*$ must satisfy the properties satisfied by $\hat{\rho}_e$.
\end{proof}

\begin{claim}
\label{claim:notequal}
If $s_{e,v_1} \neq s_{e,v_2}$ and $s_{e,v_3} \neq s_{e,v_4}$, then $\rho^*_{e,v_1},\rho^*_{e,v_2} \in \{\pm 1/2,\pm a\}$ and $\rho^*_{e,v_3},\rho^*_{e,v_4} \in \{\pm1/2,\pm b\}$, where $a = (\frac{1}{2}+\sigma)(s_{e,v_1}-s_{e,v_2})/(s_{e,v_3}-s_{e,v_4})$ and $b = (\frac{1}{2}+\sigma)(s_{e,v_3}-s_{e,v_4})/(s_{e,v_1}-s_{e,v_2})$.
\end{claim}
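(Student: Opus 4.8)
The plan is to mirror the proofs of Claims~\ref{claim:equal} and \ref{claim:halfequal}, combining the dual optimality condition \eqref{eq:subprob-optcond} with the greedy characterization of the maximizers of $\rho_e^T y_e^*$ over $B_e$. Throughout I write $c := f_e(y_e^*) = (\rho_e^*)^T y_e^* \ge 0$ for the optimal support-function value; since we are in the regime $\phi_e^* > 0$, the recovery relations $r_e^* + \sigma y_e^* = s_e$ and $(\phi_e^*)^2 = (r_e^*)^T y_e^*$ force $\phi_e^* = c$, so that \eqref{eq:subprob-optcond} holds for the specific $\rho_e^*$ and reads coordinate-wise as $c\,\rho_{e,v}^* + \sigma y_{e,v}^* = s_{e,v}$. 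The first ingredient, recorded as a preliminary observation reused from the proof of Claim~\ref{claim:equal}, is that every $\rho_e \in B_e$ satisfies $\rho_{e,v_1} + \rho_{e,v_2} = 0$ and $\rho_{e,v_3} + \rho_{e,v_4} = 0$: indeed $w_e(\{v_1,v_2\}) = w_e(\{v_3,v_4\}) = 0$ forces $\rho_e(\{v_1,v_2\}) \le 0$ and $\rho_e(\{v_3,v_4\}) \le 0$, while $\rho_e(e) = 0$ makes both inequalities tight. Hence $\rho_{e,v_2}^* = -\rho_{e,v_1}^*$ and $\rho_{e,v_4}^* = -\rho_{e,v_3}^*$, so it suffices to pin down $\rho_{e,v_1}^*$ and $\rho_{e,v_3}^*$.

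By the symmetries of $w_e$ (swapping $v_1 \leftrightarrow v_2$, swapping $v_3 \leftrightarrow v_4$, or exchanging the pairs $\{v_1,v_2\} \leftrightarrow \{v_3,v_4\}$ all leave $w_e$ invariant), I may assume without loss of generality that $s_{e,v_1} > s_{e,v_2}$ and $s_{e,v_3} > s_{e,v_4}$; the remaining sign configurations are precisely what reintroduce the $\pm$ ambiguity in the statement. Exactly as in the earlier claims, \eqref{eq:subprob-optcond} together with the greedy rule then yields the orderings $y_{e,v_1}^* \ge y_{e,v_2}^*$ and $y_{e,v_3}^* \ge y_{e,v_4}^*$, since a strict reversal of a $y^*$-inequality would force (via greedy) a corresponding $\rho^*$-inequality and hence an $s$-inequality contradicting the chosen ordering.

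The heart of the argument is a case split on whether these two $y^*$-inequalities are strict or tight. First I would treat the doubly-strict case $y_{e,v_1}^* > y_{e,v_2}^*$, $y_{e,v_3}^* > y_{e,v_4}^*$, where greedy gives $\rho_{e,v_1}^* = \rho_{e,v_3}^* = 1/2$, so all four entries lie in $\{\pm 1/2\}$. Next, in the case $y_{e,v_1}^* = y_{e,v_2}^*$ with $y_{e,v_3}^* > y_{e,v_4}^*$, greedy still forces $\rho_{e,v_3}^* = 1/2$, $\rho_{e,v_4}^* = -1/2$; subtracting the $v_1$- and $v_2$-rows of \eqref{eq:subprob-optcond} gives $2c\,\rho_{e,v_1}^* = s_{e,v_1} - s_{e,v_2}$ because the $\sigma$-terms cancel, while subtracting the $v_3$- and $v_4$-rows, together with the identity $c = \tfrac12(y_{e,v_3}^* - y_{e,v_4}^*)$ coming from $(\rho_e^*)^T y_e^*$, yields $(\tfrac12 + \sigma)(y_{e,v_3}^* - y_{e,v_4}^*) = s_{e,v_3} - s_{e,v_4}$. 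Eliminating $y_{e,v_3}^* - y_{e,v_4}^*$ (hence $c$) produces exactly $\rho_{e,v_1}^* = a$. The mirror case $y_{e,v_3}^* = y_{e,v_4}^*$, $y_{e,v_1}^* > y_{e,v_2}^*$ gives $\rho_{e,v_3}^* = b$ by the same computation, and the doubly-tight case is impossible because it forces $c = f_e(y_e^*) = 0$ and hence $s_{e,v_1} = s_{e,v_2}$, contradicting the hypothesis. Collecting the cases gives the stated membership.

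I expect the main obstacle to be the bookkeeping in the tied cases: one must justify that $\rho_e^*$, which need not be a vertex of $B_e$ when $y_e^*$ has repeated coordinates, is nonetheless pinned to the specific point on the corresponding edge dictated by \eqref{eq:subprob-optcond}, and one must track how the WLOG sign normalization reinstates the $\pm$ options so that no admissible value of $\rho_e^*$ is overlooked. Verifying the algebraic form of $a$ (and of $b$ by symmetry) is routine once $c$ is expressed through $y_{e,v_3}^* - y_{e,v_4}^*$, but care is needed to confirm $c > 0$ in each surviving case so that the divisions defining $a$ and $b$ are legitimate.
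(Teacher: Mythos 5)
Your proposal is correct and follows essentially the same route as the paper's proof: the same reduction to pinning down $\rho^*_{e,v_1}$ and $\rho^*_{e,v_3}$ via the pair-sum constraints of $B_e$, the same WLOG sign normalization, the same four-way case split on whether $y^*_{e,v_1} \ge y^*_{e,v_2}$ and $y^*_{e,v_3} \ge y^*_{e,v_4}$ are strict or tight (with the doubly-tight case excluded because it forces $\rho_e^{*T}y_e^* = 0$ and hence $s_{e,v_1} = s_{e,v_2}$), and the same use of the optimality condition \eqref{eq:subprob-optcond} with the greedy characterization of maximizers over $B_e$. The only, immaterial, difference is the final elimination in the tied case: you obtain $a$ from the $v_3$--$v_4$ difference row together with the identity $c = \frac{1}{2}(y^*_{e,v_3} - y^*_{e,v_4})$, whereas the paper equates $\phi_e^* = (s_{e,v_1}-s_{e,v_2})/(2a)$ with $\phi_e^* = s_e^T\rho_e^*/(\sigma + \|\rho_e^*\|_2^2)$; both yield the same value of $a$.
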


\begin{proof}
Let us assume without loss of generality that $s_{e,v_1} > s_{e,v_2}$ and $s_{e,v_3} > s_{e,v_4}$. Let $\hat{\rho}_e \in \argmax_{\rho_e \in B_e} \rho_e^T y_e^*$. We have that $y_{e,v_1}^* \ge y_{e,v_2}^*$ and $y_{e,v_3}^* \ge y_{e,v_4}^*$ (see the proof of Claim~\ref{claim:halfequal} for an argument for this). There are four cases and we analyze them one by one in the following.

{\em Case 1}. If $y_{e,v_1}^* > y_{e,v_2}^*$ and $y_{e,v_3}^* > y_{e,v_4}^*$, then we have $\hat{\rho}_{e,v_1} = \hat{\rho}_{e,v_3} = 1/2$ and $\hat{\rho}_{e,v_2} = \hat{\rho}_{e,v_4} = -1/2$. 

{\em Case 2}. If $y_{e,v_1}^* = y_{e,v_2}^*$ and $y_{e,v_3}^* = y_{e,v_4}^*$, then $\hat{\rho}_e^Ty_e^* = 0$ and hence the optimality condition \eqref{eq:subprob-optcond} cannot be satisfied. This leads to a contradiction. 

{\em Case 3}. Suppose that $y_{e,v_1}^* = y_{e,v_2}^*$ and $y_{e,v_3}^* > y_{e,v_4}^*$. Then according to the optimality condition \eqref{eq:subprob-optcond}, because $s_{e,v_1} > s_{e,v_2}$ and $y_{e,v_1}^* = y_{e,v_2}^*$, we must have that $\hat{\rho}_{e,v_1} > \hat{\rho}_{e,v_2}$. Moreover, because $\hat{\rho}_{e,v_1} + \hat{\rho}_{e,v_2} = 0$, we know that $\hat{\rho}_{e,v_1} = a = -\hat{\rho}_{e,v_2}$ for some $a > 0$. We also know that $\hat{\rho}_{e,v_3} =1/2$ and $\hat{\rho}_{e,v_4} = -1/2$ since $y_{e,v_3}^* > y_{e,v_4}^*$. Substitute the primal-dual relation $\phi_e^* = \hat{\rho}_e^Ty_e^*$ into \eqref{eq:subprob-optcond} we have
\[
	\phi_e^*\hat{\rho}_{e,v_1} + \sigma y_{e,v_1}^* = s_{e,v_1} ~~\mbox{and}~~ \phi_e^*\hat{\rho}_{e,v_2} + \sigma y_{e,v_2}^* = s_{e,v_2}.
\]
Because $y_{e,v_1}^* = y_{e,v_2}^*$, we get that
\[
	\phi_e^*(\hat{\rho}_{e,v_1} - \hat{\rho}_{e,v_2}) =  s_{e,v_1}  - s_{e,v_2},
\]
and hence 
\begin{equation}
\label{eq:phi_e_1}
\phi_e^* = \frac{s_{e,v_1}  - s_{e,v_2}}{\hat{\rho}_{e,v_1} - \hat{\rho}_{e,v_2}} = \frac{s_{e,v_1}  - s_{e,v_2}}{2a}.
\end{equation}
Because $\hat{\rho} \in \argmax_{\rho_e \in B_e} \rho_e^T y_e^*$ was arbitrary, and $\rho_e^* \in \argmax_{\rho_e \in B_e} \rho_e^T y_e^*$, we know that $\rho_{e,v_1}^* = a = -\rho_{e,v_2}^*$ and $\rho_{e,v_3}^* = 1/2 = -\rho_{e,v_4}^*$. On the other hand, since $s_{e,v_1} > s_{e,v_2}$ we know that $\phi_e^* > 0$, therefore
\begin{equation}
\label{eq:phi_e_2}
	\phi_e^* = \frac{s_e^T\rho_e^*}{\sigma + \|\rho_e^*\|_2^2} = \frac{a(s_{e,v_1}  - s_{e,v_2}) + \frac{1}{2}(s_{e,v_3}  - s_{e,v_4})}{\sigma + 2a^2 + \frac{1}{2}}.
\end{equation}
Combining equations \eqref{eq:phi_e_1} and \eqref{eq:phi_e_2} we get that $a = (\frac{1}{2}+\sigma)(s_{e,v_1}-s_{e,v_2})/(s_{e,v_3}-s_{e,v_4})$. 

{\em Case 4}. Suppose that $y_{e,v_1}^* > y_{e,v_2}^*$ and $y_{e,v_3}^* = y_{e,v_4}^*$. The following a similar argument for Case 3, we get that $\rho_{e,v_1}^* = 1/2 = -\rho_{e,v_2}$ and $\rho_{e,v_3}^* = b = -\rho_{e,v_4}^*$ where $b = (\frac{1}{2}+\sigma)(s_{e,v_3}-s_{e,v_4})/(s_{e,v_1}-s_{e,v_2})$.
\end{proof}

Finally, combining Claims~\ref{claim:equal},~\ref{claim:halfequal},~\ref{claim:notequal} and the constraint that $\rho_{e,v_1}^* + \rho_{e,v_2}^* = \rho_{e,v_3}^* + \rho_{e,v_4}^* = 0$, there are at most 12 possible choices for $\rho_e^*$. Therefore, an exhaustive search among these candidate vectors for $\rho_e^*$ (and hence $\phi_e^* = \frac{s_e^T\rho_e^*}{\sigma + \|\rho_e^*\|_2^2}$ and $r_e^*=\phi_e^*\rho_e^*$) that minimizes \eqref{eq:subprob-primal} can be done using constant number of vector-vector additions and multiplications.\

\end{document}